\numberwithin{equation}{section}
\numberwithin{figure}{section}
\newcommand{\mat}[1]{\mathbf{#1}}
\newcommand{\zm}[1]{{\color{black!0!blue} #1}}
\newtheorem{theorem}{Theorem}[section]
\title{KAN: Kolmogorov–Arnold Networks}
\author{%
Ziming Liu$^{1,4}$\thanks{zmliu@mit.edu} \quad Yixuan Wang$^{2}$ \quad Sachin Vaidya$^{1}$ \quad Fabian Ruehle$^{3,4}$ \\ \quad \textbf{James Halverson}$^{3,4}$ \quad 
\textbf{Marin Solja\v ci\'c}$^{1,4}$ \quad \textbf{Thomas Y.  Hou}$^2$ \quad \textbf{Max Tegmark}$^{1,4}$ \\
$^1$ Massachusetts Institute of Technology\\ $^2$ California Institute of Technology\\ $^3$ Northeastern University\\ $^4$ The NSF Institute for Artificial Intelligence and Fundamental Interactions
}
\begin{document}

\maketitle

\begin{abstract}\small
Inspired by the Kolmogorov-Arnold representation theorem, we propose Kolmogorov-Arnold Networks (KANs) as promising alternatives to Multi-Layer Perceptrons (MLPs). While MLPs have \textit{fixed} activation functions on \textit{nodes} (``neurons''), KANs have \textit{learnable} activation functions on \textit{edges} (``weights''). KANs have no linear weights at all -- every weight parameter is replaced by a univariate function parametrized as a spline. We show that this seemingly simple change makes KANs outperform MLPs in terms of accuracy and interpretability, on small-scale AI + Science tasks. For accuracy, smaller KANs can achieve comparable or better accuracy than larger MLPs in function fitting tasks. Theoretically and empirically, KANs possess faster neural scaling laws than MLPs. For interpretability, KANs can be intuitively visualized and can easily interact with human users. Through two examples in mathematics and physics, KANs are shown to be useful ``collaborators'' helping scientists (re)discover mathematical and physical laws. 
In summary, KANs are promising alternatives for MLPs, opening opportunities for further improving today's deep learning models which rely heavily on MLPs.

\begin{figure}[hb]
    \centering
    \includegraphics[width=0.9\linewidth]{./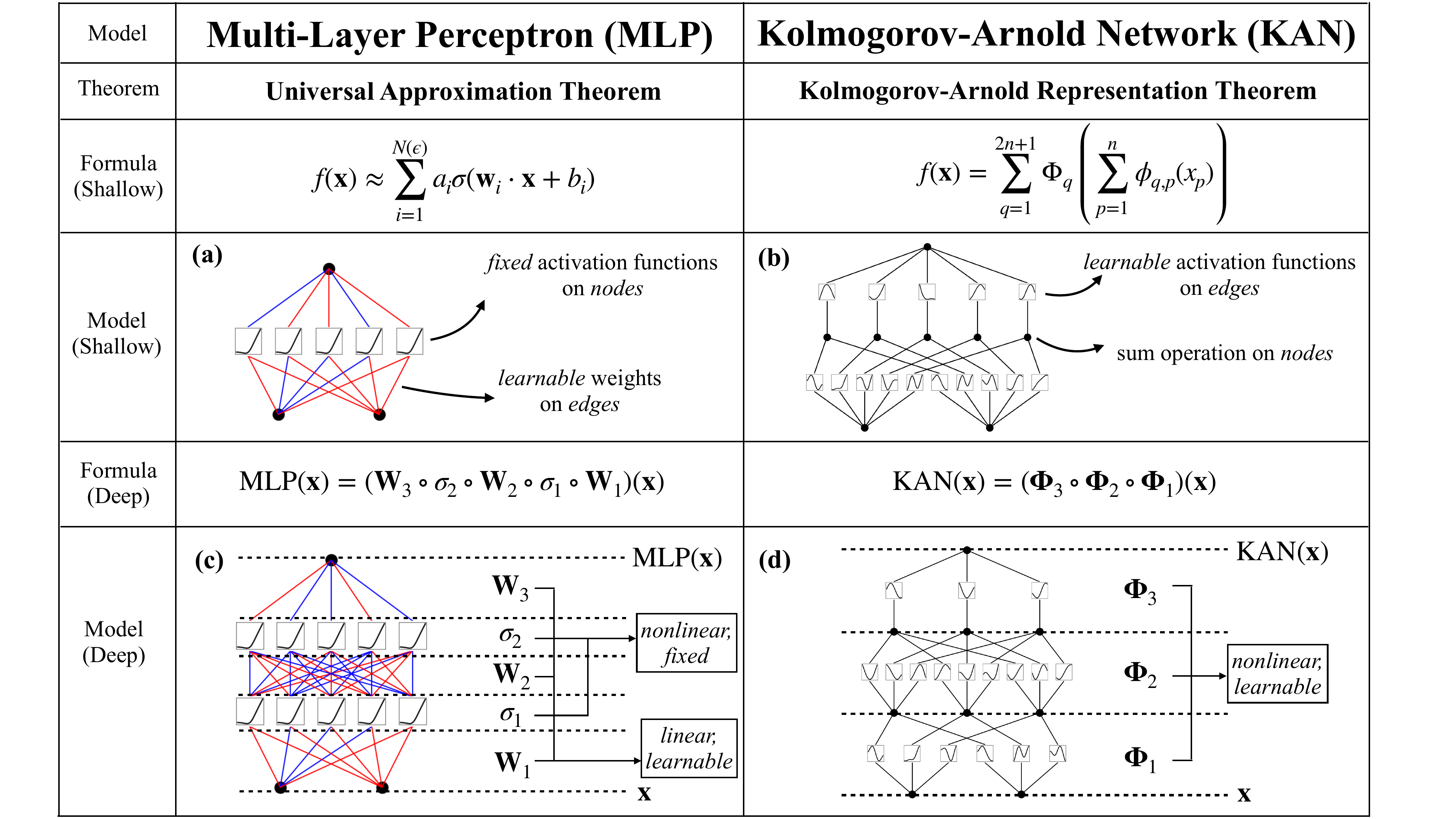}
    \caption{Multi-Layer Perceptrons (MLPs) vs. Kolmogorov-Arnold Networks (KANs)}
    \label{fig:kan_mlp}
\end{figure}

\end{abstract}

\section{Introduction}

Multi-layer perceptrons (MLPs)~\cite{haykin1994neural,cybenko1989approximation,hornik1989multilayer}, also known as fully-connected feedforward neural networks, are foundational building blocks of today's deep learning models. The importance of MLPs can never be overstated, since they are the default models in machine learning for approximating nonlinear functions, due to their expressive power guaranteed by the universal approximation theorem~\cite{hornik1989multilayer}. However, are MLPs the best nonlinear regressors we can build? Despite the prevalent use of MLPs, they have significant drawbacks. In transformers~\cite{vaswani2017attention} for example, MLPs consume almost all non-embedding parameters and are typically less interpretable (relative to attention layers) without post-analysis tools~\cite{cunningham2023sparse}.

We propose a promising alternative to MLPs, called Kolmogorov-Arnold Networks (KANs). Whereas MLPs are inspired by the universal approximation theorem, KANs are inspired by the Kolmogorov-Arnold representation theorem~\cite{kolmogorov, kolmogorov1957representation, braun2009constructive}. Like MLPs, KANs have fully-connected structures. However, while MLPs place fixed activation functions on \textit{nodes} (``neurons''), KANs place learnable activation functions on \textit{edges} (``weights''), as illustrated in Figure~\ref{fig:kan_mlp}. As a result, KANs have no linear weight matrices at all: instead, each weight parameter is replaced by a learnable 1D function parametrized as a spline. KANs' nodes simply sum incoming signals without applying any non-linearities. One might worry that KANs are hopelessly expensive, since each MLP's weight parameter becomes KAN's spline function. Fortunately, KANs usually allow much smaller computation graphs than MLPs. 

Unsurprisingly, the possibility of using Kolmogorov-Arnold representation theorem to build neural networks has been studied~\cite{sprecher2002space,koppen2002training,lin1993realization,lai2021kolmogorov,leni2013kolmogorov,fakhoury2022exsplinet,montanelli2020error, he2023optimal}. However, most work has stuck with the original depth-2 width-($2n+1$) representation, and many did not have the chance to leverage more modern techniques (e.g., back propagation) to train the networks. In \cite{lai2021kolmogorov}, a depth-2 width-($2n+1$) representation was investigated, with breaking of the curse of dimensionality observed both empirically and with an approximation theory given compositional structures of the function. Our contribution lies in generalizing the original Kolmogorov-Arnold representation to arbitrary widths and depths, revitalizing and  contextualizing it in today's deep learning world, as well as using extensive empirical experiments to highlight its potential for AI + Science due to its accuracy and interpretability.

Despite their elegant mathematical interpretation, KANs are nothing more than  combinations of splines and MLPs, leveraging their respective strengths and avoiding their respective weaknesses. Splines are accurate for low-dimensional functions, easy to adjust locally, and able to switch between different resolutions. However, splines have a serious curse of dimensionality (COD) problem, because of their inability to exploit compositional structures. MLPs, on the other hand, suffer less from COD thanks to their feature learning, but are less accurate than splines in low dimensions, because of their inability to optimize univariate functions. The link between  MLPs using ReLU-k as activation functions and splines have been established in \cite{he2018relu, he2023deep}.  To learn a function accurately, a model should not only learn the compositional structure (\textit{external} degrees of freedom), but should also approximate well the univariate functions (\textit{internal} degrees of freedom). KANs are such models since they have MLPs on the outside and splines on the inside. As a result, KANs can not only learn features (thanks to their external similarity to MLPs), but can also optimize these learned features to great accuracy (thanks to their internal similarity to splines). For example, given a high dimensional function
\begin{align}
f(x_1,\cdots, x_N)=\exp\left(\frac{1}{N}\sum_{i=1}^N {\rm sin}^2 (x_i)\right), 
\end{align}
splines would fail for large $N$ due to COD; MLPs can potentially learn the the generalized additive structure, but they are very inefficient for approximating the exponential and sine functions with say, ReLU activations. In contrast, KANs can learn both the compositional structure and the univariate functions quite well, hence outperforming MLPs by a large margin (see Figure~\ref{fig:model_scaling}).

Throughout this paper, we will use extensive numerical experiments to show that KANs can lead to accuracy and interpretability improvement over MLPs, at least on small-scale AI + Science tasks. The organization of the paper is illustrated in Figure~\ref{fig:flow-chart}. In Section~\ref{sec:KAN}, we introduce the KAN architecture and its mathematical foundation, introduce network simplification techniques to make KANs interpretable, and introduce a grid extension technique to make KANs more accurate. In Section~\ref{sec:kan_accuracy_experiment}, we show that KANs are more accurate than MLPs for data fitting: KANs can beat the curse of dimensionality when there is a compositional structure in data, achieving better scaling laws than MLPs. We also demonstrate the potential of KANs in PDE solving via a simple example of the Poisson equation. In Section~\ref{sec:kan_interpretability_experiment}, we show that KANs are interpretable and can be used for scientific discoveries. We use two examples from mathematics (knot theory) and physics (Anderson localization) 
to demonstrate that KANs can be helpful ``collaborators'' for scientists to (re)discover math and physical laws. Section~\ref{sec:related_works} summarizes related works. In Section~\ref{sec:discussion}, we conclude by discussing broad impacts and future directions. Codes are available at \url{https://github.com/KindXiaoming/pykan} and can also be installed via \texttt{pip install pykan}.

\section{Kolmogorov–Arnold Networks (KAN)}\label{sec:KAN}

\begin{figure}[t]
    \centering
    \includegraphics[width=1\linewidth]{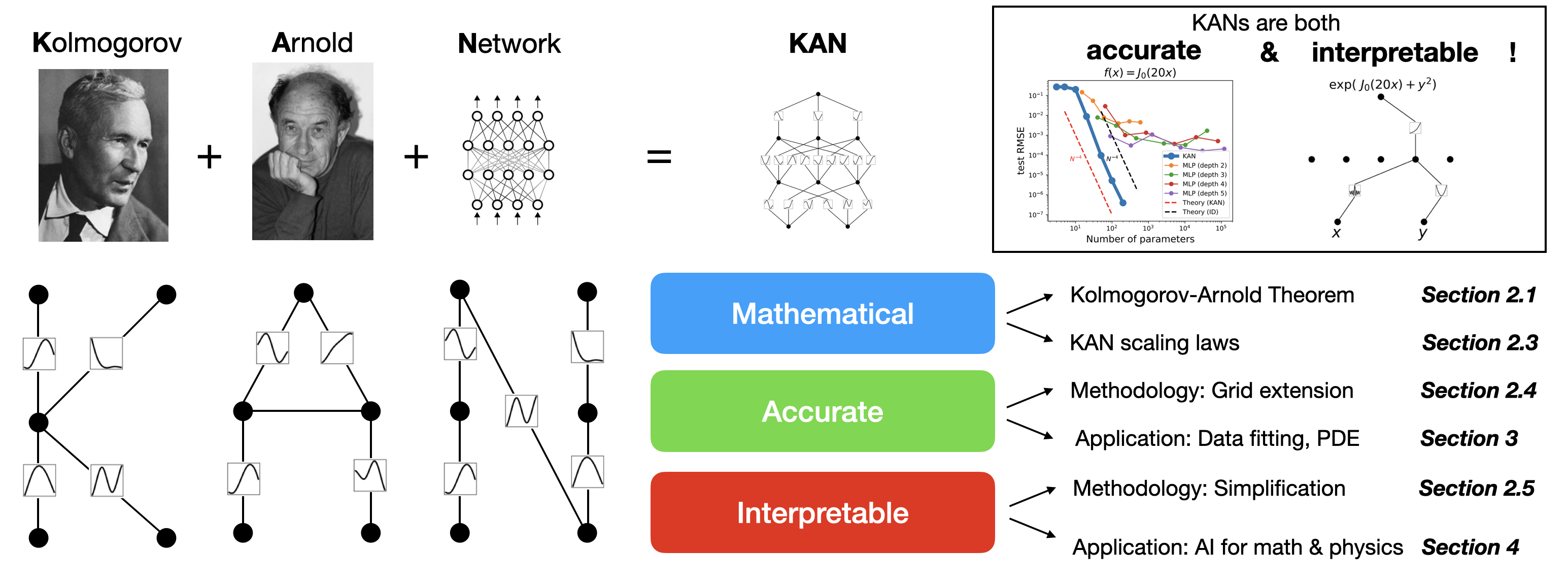}
    \caption{Our proposed Kolmogorov-Arnold networks are in honor of two great late mathematicians, Andrey Kolmogorov and Vladimir Arnold. KANs are mathematically sound, accurate and interpretable.
 }
    \label{fig:flow-chart}
\end{figure}

Multi-Layer Perceptrons (MLPs) are inspired by the universal approximation theorem. We instead focus on the Kolmogorov-Arnold representation theorem, which can be realized by a new type of neural network called Kolmogorov-Arnold networks (KAN). We review the Kolmogorov-Arnold theorem in Section~\ref{subsec:kart}, to inspire the design of Kolmogorov-Arnold Networks in Section~\ref{subsec:kan_architecture}. In Section~\ref{subsec:kan_scaling_theory}, we provide theoretical guarantees for the expressive power of KANs and their neural scaling laws, relating them to existing approximation and generalization theories in the literature. In Section~\ref{subsec:kan_grid_extension}, we propose a grid extension technique to make KANs increasingly more accurate. In Section~\ref{subsec:kan_simplification}, we propose simplification techniques to make KANs interpretable. 

\subsection{Kolmogorov-Arnold Representation theorem}\label{subsec:kart}

Vladimir Arnold and Andrey Kolmogorov established that if $f$ is a multivariate continuous function on a bounded domain, then $f$ can be written as a finite composition of continuous functions of a single variable and the binary operation of addition. More specifically, for a smooth $f:[0,1]^n\to\mathbb{R}$,
\begin{equation}\label{eq:KART}
    f(\mat{x}) = f(x_1,\cdots,x_n)=\sum_{q=1}^{2n+1} \Phi_q\left(\sum_{p=1}^n\phi_{q,p}(x_p)\right),
\end{equation}
where $\phi_{q,p}:[0,1]\to\mathbb{R}$ and $\Phi_q:\mathbb{R}\to\mathbb{R}$. In a sense, they showed that the only true multivariate function is addition, since every other function can be written using univariate functions and sum. One might naively consider this great news for machine learning: learning a high-dimensional function boils down to learning a polynomial number of 1D functions. However, these 1D functions can be non-smooth and even fractal, so they may not be learnable in practice~\cite{poggio2020theoretical,girosi1989representation}. Because of this pathological behavior, the Kolmogorov-Arnold representation theorem was basically sentenced to death in machine learning, regarded as theoretically sound but practically useless~\cite{poggio2020theoretical,girosi1989representation}. 

However, we are more optimistic about the usefulness of the Kolmogorov-Arnold theorem for machine learning. First of all, we need not stick to the original Eq.~(\ref{eq:KART}) which has only two-layer non-linearities and a small number of terms ($2n+1$) in the hidden layer: we will generalize the network to arbitrary widths and depths. Secondly, most functions in science and daily life are often smooth and have sparse  compositional structures, potentially facilitating smooth Kolmogorov-Arnold representations. The philosophy here is close to the mindset of physicists, who often care more about typical cases rather than worst cases. After all, our physical world and machine learning tasks must have structures to make physics and machine learning useful or generalizable at all~\cite{lin2017does}.

\subsection{KAN architecture}\label{subsec:kan_architecture}

\begin{figure}[t]
    \centering
    \includegraphics[width=0.9\linewidth]{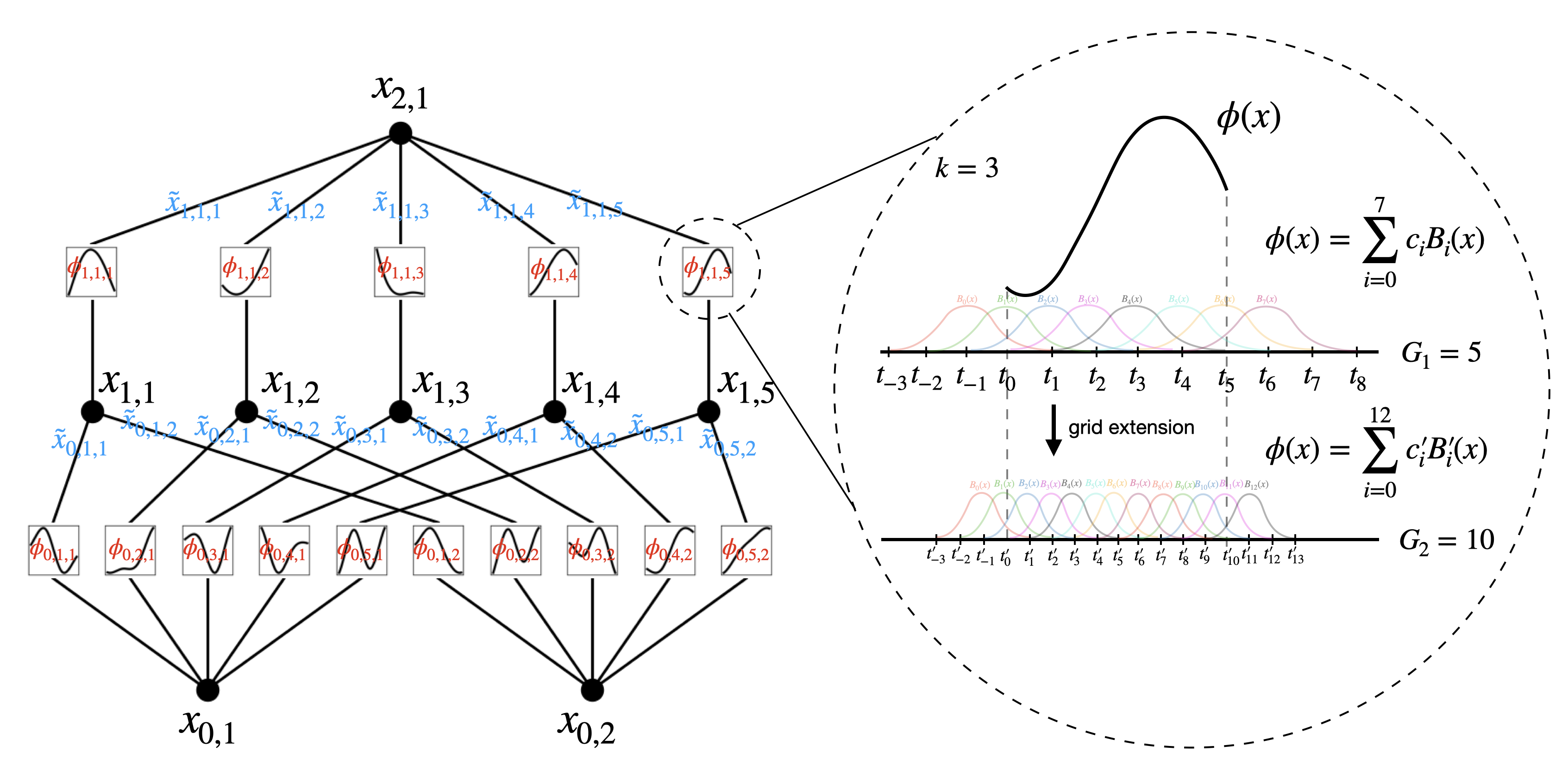}
    \caption{Left: Notations of activations that flow through the network. Right: an activation function is parameterized as a B-spline, which allows switching between coarse-grained and fine-grained grids.}
    \label{fig:spline-notation}
\end{figure}

Suppose we have a supervised learning task consisting of input-output pairs $\{\mat{x}_i,y_i\}$, where we want to find $f$ such that $y_i\approx f(\mat{x}_i)$ for all data points.
Eq.~(\ref{eq:KART}) implies that we are done if we can find appropriate univariate functions $\phi_{q,p}$ and $\Phi_q$. This inspires us to design a neural network which explicitly parametrizes Eq.~(\ref{eq:KART}). Since all functions to be learned are univariate functions, we can parametrize each 1D function as a B-spline curve, with learnable coefficients of local B-spline basis functions (see Figure~\ref{fig:spline-notation} right). Now we have a prototype of KAN, whose computation graph is exactly specified by Eq.~(\ref{eq:KART}) and  illustrated in Figure~\ref{fig:kan_mlp} (b) (with the input dimension $n=2$), appearing as a two-layer neural network with activation functions placed on edges instead of nodes (simple summation is performed on nodes), and with width $2n+1$ in the middle layer.

As mentioned, such a network is known to be too simple to approximate any function arbitrarily well in practice with smooth splines! 
We therefore generalize our KAN to be wider and deeper. It is not immediately clear how to make KANs deeper, since Kolmogorov-Arnold representations correspond to two-layer KANs. To the best of our knowledge, there is not yet a ``generalized'' version of the theorem that corresponds to deeper KANs. 

The breakthrough occurs when we notice the analogy between MLPs and KANs. In MLPs, once we define a layer (which is composed of a linear transformation and nonlinearties), we can stack more layers to make the network deeper. To build deep KANs, we should first answer: ``what is a KAN layer?'' It turns out that a KAN layer with $n_{\rm in}$-dimensional inputs and $n_{\rm out}$-dimensional outputs can be defined as a matrix of 1D functions 
\begin{align}
    {\mathbf\Phi}=\{\phi_{q,p}\},\qquad p=1,2,\cdots,n_{\rm in},\qquad q=1,2\cdots,n_{\rm out},
\end{align}
where the functions $\phi_{q,p}$ have trainable parameters, as detaild below. In the Kolmogov-Arnold theorem, the inner functions form a KAN layer with $n_{\rm in}=n$ and $n_{\rm out}=2n+1$, and the outer functions form a KAN layer with $n_{\rm in}=2n+1$ and $n_{\rm out}=1$. So the Kolmogorov-Arnold representations in Eq.~(\ref{eq:KART}) are simply compositions of two KAN layers. Now it becomes clear what it means to have deeper Kolmogorov-Arnold representations: simply stack more KAN layers!


Let us introduce some notation. This paragraph will be a bit technical, but readers can refer to Figure~\ref{fig:spline-notation} (left) for a concrete example and intuitive understanding. The shape of a KAN is represented by an integer array 
\begin{align}
    [n_0,n_1,\cdots,n_L],
\end{align}
where $n_i$ is the number of nodes in the $i^{\rm th}$ layer of the computational graph. We denote the $i^{\rm th}$ neuron in the $l^{\rm th}$ layer by $(l,i)$, and the activation value of the $(l,i)$-neuron by $x_{l,i}$. Between layer $l$ and layer $l+1$, there are $n_ln_{l+1}$ activation functions: the activation function that connects $(l,i)$ and $(l+1,j)$ is denoted by 
\begin{align}
    \phi_{l,j,i},\quad l=0,\cdots, L-1,\quad i=1,\cdots,n_{l},\quad j=1,\cdots,n_{l+1}.
\end{align}
The pre-activation of $\phi_{l,j,i}$ is simply $x_{l,i}$; the post-activation of $\phi_{l,j,i}$ is denoted by $\tilde{x}_{l,j,i}\equiv \phi_{l,j,i}(x_{l,i})$. The activation value of the $(l+1,j)$ neuron is simply the sum of all incoming post-activations: 
\begin{equation}\label{eq:kanforward}
    x_{l+1,j} =  \sum_{i=1}^{n_l} \tilde{x}_{l,j,i} = \sum_{i=1}^{n_l}\phi_{l,j,i}(x_{l,i}), \qquad j=1,\cdots,n_{l+1}.
\end{equation}
In matrix form, this reads
\begin{equation}\label{eq:kanforwardmatrix}
    \mat{x}_{l+1} = 
    \underbrace{\begin{pmatrix}
        \phi_{l,1,1}(\cdot) & \phi_{l,1,2}(\cdot) & \cdots & \phi_{l,1,n_{l}}(\cdot) \\
        \phi_{l,2,1}(\cdot) & \phi_{l,2,2}(\cdot) & \cdots & \phi_{l,2,n_{l}}(\cdot) \\
        \vdots & \vdots & & \vdots \\
        \phi_{l,n_{l+1},1}(\cdot) & \phi_{l,n_{l+1},2}(\cdot) & \cdots & \phi_{l,n_{l+1},n_{l}}(\cdot) \\
    \end{pmatrix}}_{\mat{\Phi}_l}
    \mat{x}_{l},
\end{equation}
where ${\mathbf \Phi}_l$ is the function matrix corresponding to the $l^{\rm th}$ KAN layer. A general KAN network is a composition of $L$ layers: given an input vector  $\mat{x}_0\in\mathbb{R}^{n_0}$, the output of KAN is
\begin{equation}\label{eq:KAN_forward}
    {\rm KAN}(\mat{x}) = (\mat{\Phi}_{L-1}\circ \mat{\Phi}_{L-2}\circ\cdots\circ\mat{\Phi}_{1}\circ\mat{\Phi}_{0})\mat{x}.
\end{equation}
We can also rewrite the above equation to make it more analogous to Eq.~(\ref{eq:KART}), assuming output dimension $n_{L}=1$, and define $f(\mat{x})\equiv {\rm KAN}(\mat{x})$:
\begin{equation}
    f(\mat{x})=\sum_{i_{L-1}=1}^{n_{L-1}}\phi_{L-1,i_{L},i_{L-1}}\left(\sum_{i_{L-2}=1}^{n_{L-2}}\cdots\left(\sum_{i_2=1}^{n_2}\phi_{2,i_3,i_2}\left(\sum_{i_1=1}^{n_1}\phi_{1,i_2,i_1}\left(\sum_{i_0=1}^{n_0}\phi_{0,i_1,i_0}(x_{i_0})\right)\right)\right)\cdots\right),
\end{equation}
which is quite cumbersome. In contrast, our abstraction of KAN layers and their visualizations are cleaner and intuitive. The original Kolmogorov-Arnold representation Eq.~(\ref{eq:KART}) corresponds to a 2-Layer KAN with shape $[n,2n+1,1]$. Notice that all the operations are differentiable, so we can train KANs with back propagation. For comparison, an MLP can be written as interleaving of affine transformations $\mat{W}$ and non-linearities $\sigma$:
\begin{equation}
    {\rm MLP}(\mat{x}) = (\mat{W}_{L-1}\circ\sigma\circ \mat{W}_{L-2}\circ\sigma\circ\cdots\circ\mat{W}_1\circ\sigma\circ\mat{W}_0)\mat{x}.
\end{equation}
It is clear that MLPs treat linear transformations and nonlinearities separately as $\mat{W}$ and $\sigma$, while KANs treat them all together in $\mat{\Phi}$. In Figure~\ref{fig:kan_mlp} (c) and (d), we visualize a three-layer MLP and a three-layer KAN, to clarify their differences.

{\bf Implementation details.}
Although a KAN layer Eq.~(\ref{eq:kanforward}) looks extremely simple, it is non-trivial to make it well optimizable. The key tricks are: 
\begin{enumerate}[(1)]
    \item  Residual activation functions. We include a basis function $b(x)$ (similar to residual connections) such that the activation function $\phi(x)$ is the sum of the basis function $b(x)$ and the spline function:
    \begin{align}
        \phi(x)=w_{b} b(x)+w_{s}{\rm spline}(x).
    \end{align}
    We set
    \begin{align}
        b(x)={\rm silu}(x)=x/(1+e^{-x})
    \end{align}
    in most cases. ${\rm spline}(x)$ is parametrized as a linear combination of B-splines such that
    \begin{align}
        {\rm spline}(x) = \sum_i c_iB_i(x)
    \end{align}
    where $c_i$s are trainable (see Figure~\ref{fig:spline-notation} for an illustration). In principle $w_b$ and $w_s$ are redundant since it can be absorbed into $b(x)$ and ${\rm spline}(x)$. However, we still include these factors (which are by default trainable) to better control the overall magnitude of the activation function.
    \item Initialization scales. Each activation function is initialized to have $w_s=1$ and ${\rm spline}(x)\approx 0$~\footnote{This is done by drawing B-spline coefficients $c_i\sim\mathcal{N}(0,\sigma^2)$ with a small $\sigma$, typically we set $\sigma=0.1$.}. $w_b$ is initialized according to the Xavier initialization, which has been used to initialize linear layers in MLPs.
    \item Update of spline grids. We update each grid on the fly according to its input activations, to address the issue that splines are defined on bounded regions but activation values can evolve out of the fixed region during training~\footnote{Other possibilities are: (a) the grid is learnable with gradient descent, e.g., \cite{xu2015nonlinear}; (b) use normalization such that the input range is fixed. We tried (b) at first but its performance is inferior to our current approach.}. 
\end{enumerate}

{\bf Parameter count.} For simplicity, let us assume a network 
\begin{enumerate}[(1)]
    \item of depth $L$,
    \item with layers of equal width $n_0=n_1=\cdots=n_{L}=N$,
    \item with each spline of order $k$ (usually $k=3$) on $G$ intervals (for $G+1$ grid points).
\end{enumerate}
Then there are in total $O(N^2L(G+k))\sim O(N^2LG)$ parameters. In contrast, an MLP with depth $L$ and width $N$ only needs $O(N^2L)$ parameters, which appears to be more efficient than KAN. Fortunately, KANs usually require much smaller $N$ than MLPs, which not only saves parameters, but also achieves better generalization (see e.g., Figure~\ref{fig:model_scaling} and~\ref{fig:PDE}) and facilitates interpretability. 
We remark that for 1D problems, we can take $N=L=1$ and the KAN network in our implementation is nothing but a spline approximation. For higher dimensions, we characterize the generalization behavior of KANs with a theorem below.

\subsection{KAN's Approximation Abilities and Scaling Laws}\label{subsec:kan_scaling_theory}

Recall that in Eq.~\eqref{eq:KART}, the 2-Layer width-$(2n+1)$ representation may be non-smooth. However, deeper representations may bring the advantages of smoother activations. For example, the 4-variable function
\begin{align}
    f(x_1,x_2,x_3,x_4)=\exp\left({\sin}(x_1^2+x_2^2)+{\sin}(x_3^2+x_4^2)\right)
\end{align}
can be smoothly represented by a $[4,2,1,1]$ KAN which is 3-Layer, but may not admit a 2-Layer KAN with smooth activations. To facilitate an approximation analysis, we still assume smoothness of activations, but allow the representations to be arbitrarily wide and deep, as in Eq.~(\ref{eq:KAN_forward}).
To emphasize the dependence of our KAN on the finite set of grid points, we use $\mat{\Phi}_l^G$ and $\Phi_{l,i,j}^G$ below to replace the notation $\mat{\Phi}_l$ and $\Phi_{l,i,j}$ used in Eq.~\eqref{eq:kanforward} and  \eqref{eq:kanforwardmatrix}.

\begin{theorem}[Approximation theory, KAT]\label{approx thm}
Let $\mat{x}=(x_1,x_2,\cdots,x_n)$.
    Suppose that a function $f(\mat{x})$ admits a representation  \begin{equation}
    f = (\mat{\Phi}_{L-1}\circ\mat{\Phi}_{L-2}\circ\cdots\circ\mat{\Phi}_{1}\circ\mat{\Phi}_{0})\mat{x}\,,
\end{equation}
 as in Eq.~\eqref{eq:KAN_forward}, where each one of the $\Phi_{l,i,j}$ are  $(k+1)$-times continuously differentiable. Then there exists a constant $C$ depending on $f$ and its representation, such that we have the following approximation bound in terms of the grid size $G$: there exist $k$-th order B-spline functions $\Phi_{l,i,j}^G$ such that for any $0\leq m\leq k$, we have the bound \begin{equation}\label{appro bound}
    \|f-(\mat{\Phi}^G_{L-1}\circ\mat{\Phi}^G_{L-2}\circ\cdots\circ\mat{\Phi}^G_{1}\circ\mat{\Phi}^G_{0})\mat{x}\|_{C^m}\leq CG^{-k-1+m}\,.
\end{equation}
Here we adopt the notation of $C^m$-norm measuring the magnitude of derivatives up to order $m$: $$
\|g\|_{C^m}=\max _{|\beta| \leq m} \sup _{x\in [0,1]^n}\left|D^\beta g(x)\right| .
$$
 
\end{theorem}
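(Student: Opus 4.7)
The plan is to reduce the theorem to a classical univariate B-spline approximation estimate applied to each scalar activation $\phi_{l,j,i}$, and then to propagate those per-function estimates through the compositional structure of the KAN by a telescoping argument. First I would invoke the standard univariate B-spline approximation theorem (in the style of de Boor): if $\phi$ is $(k+1)$-times continuously differentiable on a bounded interval $I$, then on a uniform grid of $G$ intervals there exists a $k$-th order B-spline $\phi^G$ with $\|\phi-\phi^G\|_{C^m(I)}\le c_\phi\, G^{-k-1+m}$ for all $0\le m\le k$. To legitimize applying this to each $\phi_{l,j,i}$, I need the relevant domain to be a bounded interval, which I would establish by induction on the layer index: the input $\mat{x}\in[0,1]^n$ is compact, and each exact $\mat{\Phi}_l$ is continuous, so every intermediate activation ranges over a compact interval.

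Next I would set up a telescoping decomposition. Writing $F_l^{\mathrm{out}}=\mat{\Phi}_{L-1}\circ\cdots\circ\mat{\Phi}_{l+1}$ and $F_l^{\mathrm{in},G}=\mat{\Phi}_{l-1}^G\circ\cdots\circ\mat{\Phi}_0^G$, I have the identity
$$f-f^G=\sum_{l=0}^{L-1} F_l^{\mathrm{out}}\!\bigl(\mat{\Phi}_l\circ F_l^{\mathrm{in},G}\bigr)-F_l^{\mathrm{out}}\!\bigl(\mat{\Phi}_l^G\circ F_l^{\mathrm{in},G}\bigr),$$
so that each summand isolates the error from replacing one layer's exact activations by their B-spline approximants. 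I would then estimate the $C^m$ norm of each summand via Fa\`a di Bruno by a bound of the form $\textrm{Poly}\bigl(\|F_l^{\mathrm{out}}\|_{C^m},\|F_l^{\mathrm{in},G}\|_{C^m}\bigr)\cdot\|\mat{\Phi}_l-\mat{\Phi}_l^G\|_{C^m}$. The last factor is $O(G^{-k-1+m})$ by the first step. To bound the first factor uniformly in $G$, I would argue inductively that $\|\mat{\Phi}_l^G\|_{C^m}\le\|\mat{\Phi}_l\|_{C^m}+\|\mat{\Phi}_l-\mat{\Phi}_l^G\|_{C^m}$ stays bounded, and that compositions of uniformly $C^m$-bounded maps remain uniformly $C^m$-bounded. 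Summing the $L$ telescoping contributions and absorbing every layer- and representation-dependent constant into a single $C$ depending on $f$ yields the stated bound.

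The principal obstacle is the bookkeeping of the $C^m$ norm under deep composition: Fa\`a di Bruno expresses the $m$-th derivative of a composition as a sum over partitions involving products of all lower-order derivatives, so one must simultaneously maintain uniform $C^m$ bounds on every intermediate partial composition $F_l^{\mathrm{in},G}$ rather than just a terminal bound. A subtler secondary issue is regularity of the approximant itself: a $k$-th order B-spline with simple knots is only $C^{k-1}$, so the endpoint case $m=k$ of the inequality formally requires either a smoother spline construction (e.g., coalescing knots or using higher multiplicity structure) or a piecewise interpretation of the $C^k$ seminorm on the approximant side. Once these regularity and uniform-bound technicalities are handled, the rest of the argument is a fairly mechanical telescoping plus chain-rule estimate.
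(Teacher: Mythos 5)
Your outline is correct and follows the same core strategy as the paper: invoke the classical univariate B-spline estimate for each activation, then telescope the total error across the $L$ layers and absorb all composition constants into $C$. The one genuine difference is the orientation of the telescoping. The paper's residue $R_l$ keeps the \emph{splined} layers on the outside and the \emph{exact} layers on the inside, i.e.\ $R_l=(\mat{\Phi}^G_{L-1}\circ\cdots\circ\mat{\Phi}^G_{l+1}\circ\mat{\Phi}_{l}\circ\cdots\circ\mat{\Phi}_{0})\mat{x}-(\mat{\Phi}^G_{L-1}\circ\cdots\circ\mat{\Phi}^G_{l+1}\circ\mat{\Phi}^G_{l}\circ\mat{\Phi}_{l-1}\circ\cdots\circ\mat{\Phi}_{0})\mat{x}$, whereas your summand $F_l^{\mathrm{out}}(\mat{\Phi}_l\circ F_l^{\mathrm{in},G})-F_l^{\mathrm{out}}(\mat{\Phi}_l^G\circ F_l^{\mathrm{in},G})$ puts the exact layers outside and the splined layers inside; both identities telescope to $f-f^G$. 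The trade-off is where the stability burden lands: the paper's ordering lets it construct each $\Phi^G_{l,i,j}$ on the $G$-independent range of the exact inner composition, but then tacitly requires $C^m$-stability of the splined outer composition uniformly in $G$ (a point its proof does not spell out); your ordering uses the exact outer maps, whose $C^m$ moduli are fixed once and for all, but in exchange you must control the ranges of the splined inner compositions uniformly in $G$ so that each approximant is evaluated on the interval where it was built. Your write-up is also more explicit than the paper about the Fa\`a di Bruno bookkeeping and the uniform $C^m$ bounds on intermediate compositions, and the caveat you raise at the endpoint $m=k$ --- that a degree-$k$ spline with simple knots is only $C^{k-1}$, so the $C^k$ bound needs either smoother knot placement or a piecewise reading of the seminorm --- is a real issue that the paper's proof likewise leaves unaddressed. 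Neither of these observations is a gap in your argument; they are points at which your proposal is, if anything, more careful than the original.
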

\begin{proof}
    By the classical 1D B-spline theory \cite{de1978practical} and the fact that $\Phi_{l,i,j}$ as continuous functions can be uniformly bounded on a bounded domain, we know that there exist finite-grid B-spline functions $\Phi_{l,i,j}^G$ such that for any $0\leq m\leq k$, $$\|(\Phi_{l,i,j}\circ\mat{\Phi}_{l-1}\circ\mat{\Phi}_{l-2}\circ\cdots\circ\mat{\Phi}_{1}\circ\mat{\Phi}_{0})\mat{x}-(\Phi_{l,i,j}^G\circ\mat{\Phi}_{l-1}\circ\mat{\Phi}_{l-2}\circ\cdots\circ\mat{\Phi}_{1}\circ\mat{\Phi}_{0})\mat{x}\|_{C^m}\leq CG^{-k-1+m}\,,$$
     with a  constant $C$  independent of $G$. We fix those B-spline approximations. Therefore we have  that the residue $R_l$ defined via $$R_l\coloneqq (\mat{\Phi}^G_{L-1}\circ\cdots\circ\mat{\Phi}^G_{l+1}\circ\mat{\Phi}_{l}\circ\mat{\Phi}_{l-1}\circ\cdots\circ\mat{\Phi}_{0})\mat{x}-(\mat{\Phi}_{L-1}^G\circ\cdots\circ\mat{\Phi}_{l+1}^G\circ\mat{\Phi}_{l}^G\circ\mat{\Phi}_{l-1}\circ\cdots\circ\mat{\Phi}_{0})\mat{x}$$
satisfies $$\|R_l\|_{C^m}\leq CG^{-k-1+m}\,,$$
with a constant independent of $G$. Finally notice that $$f-(\mat{\Phi}^G_{L-1}\circ\mat{\Phi}^G_{L-2}\circ\cdots\circ\mat{\Phi}^G_{1}\circ\mat{\Phi}^G_{0})\mat{x}=R_{L-1}+R_{L-2}+\cdots+R_1+R_0\,,$$
we know that \eqref{appro bound} holds.
\end{proof}
We know that asymptotically, provided that the assumption in Theorem \ref{approx thm} holds, KANs with finite grid size can approximate the function well with a residue rate {\bf independent of the dimension, hence beating curse of dimensionality!}   This comes naturally since we only use splines to approximate 1D functions.  In particular, for $m=0$, we recover the accuracy in $L^\infty$ norm, which in turn provides a bound of RMSE on the finite domain, which gives a scaling exponent $k+1$. Of course, the constant $C$ is dependent on the representation; hence it will depend on the dimension. We will leave the discussion of the dependence of the constant on the dimension as a future work.

We remark that although the Kolmogorov-Arnold theorem Eq.~(\ref{eq:KART}) corresponds to a KAN representation with shape $[d,2d+1,1]$, its functions are not necessarily smooth. On the other hand, if we are able to identify a smooth representation (maybe at the cost of extra layers or making the KAN wider than the theory prescribes), then Theorem \ref{approx thm} indicates that we can beat the curse of dimensionality (COD). This should not come as a surprise since we can inherently learn the structure of the function and make our finite-sample KAN approximation interpretable.

{\bf Neural scaling laws: comparison to other theories.} Neural scaling laws are the phenomenon where test loss decreases with more model parameters, i.e., $\ell\propto N^{-\alpha}$ where $\ell$ is test RMSE, $N$ is the number of parameters, and $\alpha$ is the scaling exponent. A larger $\alpha$ promises more improvement by simply scaling up the model. Different theories have been proposed to predict $\alpha$. Sharma \& Kaplan~\cite{sharma2020neural} suggest that $\alpha$ comes from data fitting on an input manifold of intrinsic dimensionality $d$. If the model function class is piecewise polynomials of order $k$ ($k=1$ for ReLU), then the standard approximation theory implies $\alpha=(k+1)/d$ from the approximation theory. This bound suffers from the curse of dimensionality, so people have sought other bounds independent of $d$ by leveraging compositional structures. In particular, Michaud et al.~\cite{michaud2023precision} considered computational graphs that only involve unary (e.g., squared, sine, exp) and binary ($+$ and $\times$) operations, finding $\alpha=(k+1)/d^*=(k+1)/2$, where $d^*=2$ is the maximum arity. Poggio et al.~\cite{poggio2020theoretical} leveraged the idea of compositional sparsity and proved that given function class $W_m$ (function whose derivatives are continuous up to $m$-th order), one needs $N=O(\epsilon^{-\frac{2}{m}})$ number of parameters to achieve error $\epsilon$, which is equivalent to $\alpha=\frac{m}{2}$. Our approach, which assumes the existence of smooth Kolmogorov-Arnold representations, decomposes the high-dimensional function into several 1D functions, giving $\alpha=k+1$ (where $k$ is the piecewise polynomial order of the splines). We choose $k=3$ cubic splines so $\alpha=4$ which is the largest and best scaling exponent compared to other works. We will show in Section~\ref{subsec:acc-toy} that this bound $\alpha=4$ can in fact be achieved empirically with KANs, while previous work~\cite{michaud2023precision} reported that MLPs have problems even saturating slower bounds (e.g., $\alpha=1$) and plateau quickly. Of course, we can increase $k$ to match the smoothness of functions, but too high $k$ might be too oscillatory, leading to optimization issues.

{\bf Comparison between KAT and UAT.}
The power of fully-connected neural networks is justified by the universal approximation theorem (UAT), which states that given a function and error tolerance $\epsilon>0$, a two-layer network with $k>N(\epsilon)$ neurons can approximate the function within error $\epsilon$. However, the UAT guarantees no bound for how $N(\epsilon)$ scales with $\epsilon$. Indeed, it suffers from the COD, and $N$ has been shown to grow exponentially with $d$ in some cases~\cite{lin2017does}. The difference between KAT and UAT is a consequence that KANs take advantage of the intrinsically low-dimensional representation of the function while  MLPs do not. In KAT, we highlight quantifying the approximation error in the compositional space. In the literature, generalization error bounds, taking into account finite samples of training data, for a similar space have been studied for regression problems; see \cite{horowitz2007rate, kohler2021rate}, and also specifically for MLPs with ReLU activations \cite{schmidt2020nonparametric}. On the other hand, for general function spaces like Sobolev or Besov spaces, the nonlinear $n$-widths theory \cite{devore1989optimal,devore1993wavelet,siegel2024sharp} indicates that we can never beat the curse of dimensionality, while MLPs with ReLU activations can achieve the tight rate \cite{yarotsky2017error, bartlett2019nearly, siegel2023optimal}.  This fact again motivates us to consider functions of compositional structure, the much "nicer" functions that we encounter in practice and in science, to overcome the COD. Compared with MLPs, we may use a smaller architecture in practice, since we learn general nonlinear activation functions; see also \cite{schmidt2020nonparametric} where the depth of the ReLU MLPs needs to reach at least $\log n$ to have the desired rate, where $n$ is the number of samples. Indeed, we will show that KANs are nicely aligned with symbolic functions while MLPs are not.

\subsection{For accuracy: Grid Extension}\label{subsec:kan_grid_extension}

\begin{figure}[t]
    \centering
    \includegraphics[width=1\linewidth]{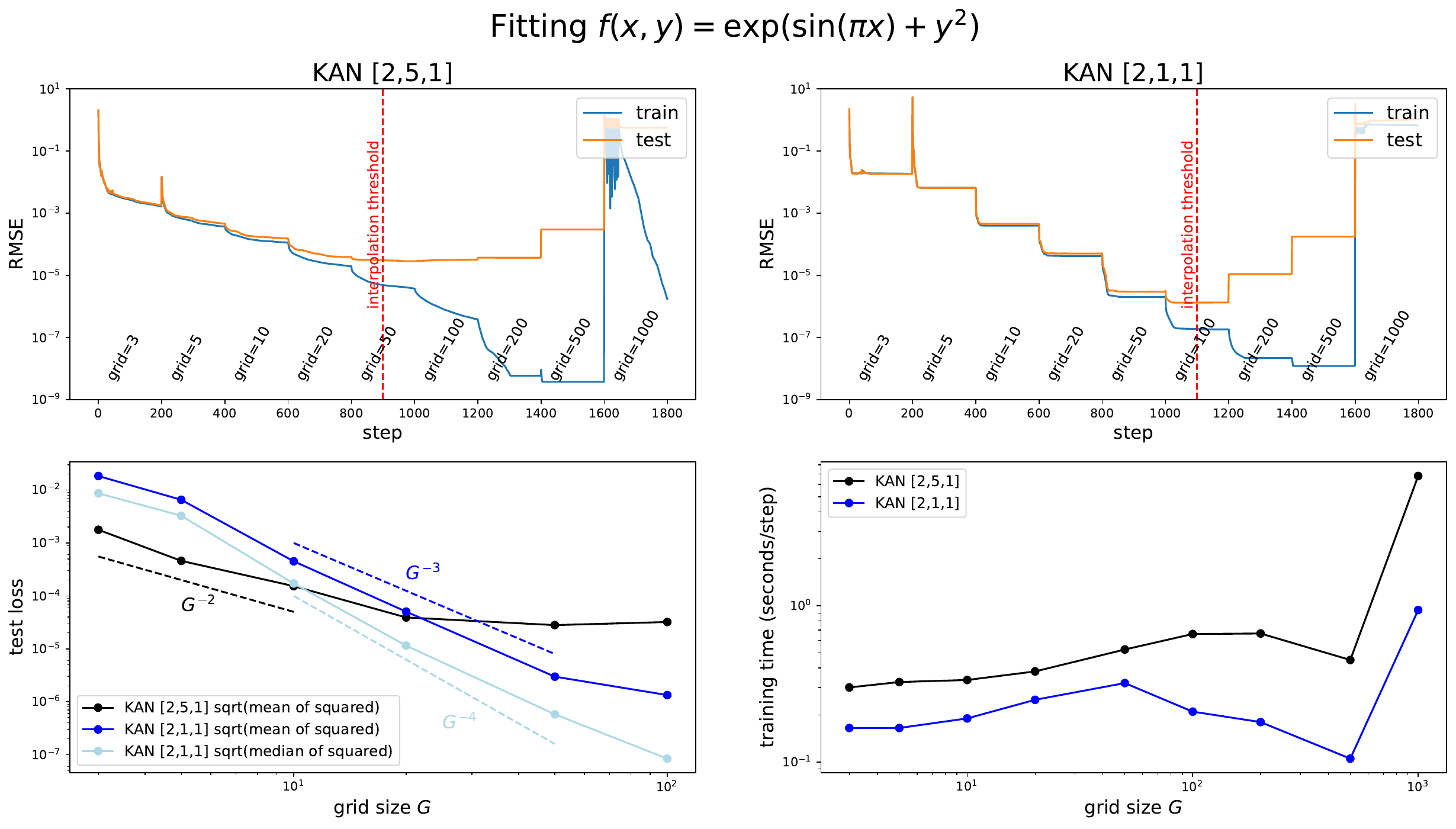}
    \caption{We can make KANs more accurate by grid extension (fine-graining spline grids). Top left (right): training dynamics of a $[2,5,1]$ ($[2,1,1]$) KAN. Both models display  staircases in their loss curves, i.e., loss suddently drops then plateaus after grid extension. Bottom left: test RMSE follows scaling laws against grid size $G$. Bottom right: training time scales favorably with grid size $G$.}
    \label{fig:grid-extension}
\end{figure}


In principle, a spline can be made arbitrarily accurate to a target function as the grid can be made arbitrarily fine-grained. This good feature is inherited by KANs. By contrast, MLPs do not have the notion of ``fine-graining''. Admittedly, increasing the width and depth of MLPs can lead to improvement in performance (``neural scaling laws''). However, these neural scaling laws are slow (discussed in the last section). They are also expensive to obtain, because models of varying sizes are trained independently. By contrast, for KANs, one can first train a KAN with fewer parameters and then extend it to a KAN with more parameters by simply making its spline grids finer, without the need to retraining the larger model from scratch.

We next describe how to perform grid extension (illustrated in Figure~\ref{fig:spline-notation} right), which is basically fitting a new fine-grained spline to an old coarse-grained spline. Suppose we want to approximate a 1D function $f$ in a bounded region $[a, b]$ with B-splines of order $k$. A coarse-grained grid with $G_1$ intervals has grid points at $\{t_0=a,t_1,t_2,\cdots, t_{G_1}=b\}$, which is augmented to $\{t_{-k},\cdots,t_{-1},t_0,\cdots, t_{G_1},t_{G_1+1},\cdots,t_{G_1+k}\}$. There are $G_1+k$ B-spline basis functions, with the $i^{\rm th}$ B-spline $B_i(x)$ being non-zero only on $[t_{-k+i},t_{i+1}]$ $(i=0,\cdots,G_1+k-1)$. Then $f$ on the coarse grid is expressed in terms of linear combination of these B-splines basis functions $f_{\rm coarse}(x)=\sum_{i=0}^{G_1+k-1} c_i B_i(x)$. Given a finer grid with $G_2$ intervals, $f$ on the fine grid is correspondingly $f_{\rm fine}(x)=\sum_{j=0}^{G_2+k-1}c_j'B_j'(x)$. The parameters $c'_j$s can be initialized from the parameters $c_i$ by minimizing the distance between $f_{\rm fine}(x)$ to $f_{\rm coarse}(x)$ (over some distribution of $x$):
\begin{equation}
    \{c_j'\} = \underset{\{c_j'\}}{\rm argmin}\ \mathop{\mathbb{E}}_{x\sim p(x)}\left(\sum_{j=0}^{G_2+k-1}c_j'B_j'(x)-\sum_{i=0}^{G_1+k-1} c_i B_i(x)\right)^2,
\end{equation}
which can be implemented by the least squares algorithm. We perform grid extension for all splines in a KAN independently.

{\bf Toy example: staricase-like loss curves.} We use a toy example $f(x,y)={\rm exp}({\rm sin}(\pi x)+y^2)$ to demonstrate the effect of grid extension. In Figure~\ref{fig:grid-extension} (top left), we show the train and test RMSE for a $[2,5,1]$ KAN. The number of grid points starts as 3, increases to a higher value every 200 LBFGS steps, ending up with 1000 grid points. It is clear that every time fine graining happens, the training loss drops faster than before (except for the finest grid with 1000 points, where optimization ceases to work probably due to bad loss landscapes). However, the test losses first go down then go up, displaying a U-shape, due to the bias-variance tradeoff (underfitting vs. overfitting). We conjecture that the optimal test loss is achieved at the interpolation threshold when the number of parameters match the number of data points. Since our training samples are 1000 and the total parameters of a $[2,5,1]$ KAN is $15G$ ($G$ is the number of grid intervals), we expect the interpolation threshold to be $G=1000/15\approx 67$, which roughly agrees with our experimentally observed value $G\sim 50$.

{\bf Small KANs generalize better.} Is this the best test performance we can achieve? Notice that the synthetic task can be represented exactly by a $[2,1,1]$ KAN, so we train a $[2,1,1]$ KAN and present the training dynamics in Figure~\ref{fig:grid-extension} top right. Interestingly, it can achieve even lower test losses than the $[2,5,1]$ KAN, with clearer staircase structures and the interpolation threshold is delayed to a larger grid size as a result of fewer parameters. This highlights a subtlety of choosing KAN architectures. If we do not know the problem structure, how can we determine the minimal KAN shape? In Section~\ref{subsec:kan_simplification}, we will propose a method to auto-discover such minimal KAN architecture via regularization and pruning.

{\bf Scaling laws: comparison with theory.} We are also interested in how the test loss decreases as the number of grid parameters increases. In Figure~\ref{fig:grid-extension} (bottom left), a [2,1,1] KAN scales roughly as ${\rm test \ RMSE}\propto G^{-3}$. However, according to the Theorem~\ref{approx thm}, we would expect ${\rm test \ RMSE}\propto G^{-4}$. We found that the errors across samples are not uniform. This is probably attributed to boundary effects~\cite{michaud2023precision}. In fact, there are a few samples that have significantly larger errors than others, making the overall scaling slow down. If we plot the square root of the \textit{median} (not \textit{mean}) of the squared losses, we get a scaling closer to $G^{-4}$. Despite this suboptimality (probably due to optimization), KANs still have much better scaling laws than MLPs, for data fitting (Figure~\ref{fig:model_scaling}) and PDE solving (Figure~\ref{fig:PDE}). In addition, the training time scales favorably with the number of grid points $G$, shown in Figure~\ref{fig:grid-extension} bottom right~\footnote{When $G=1000$, training becomes significantly slower, which is specific to the use of the LBFGS optimizer with line search. We conjecture that the loss landscape becomes bad for $G=1000$, so line search with trying to find an optimal step size within maximal iterations without early stopping.}.

{\bf External vs Internal degrees of freedom.} A new concept that KANs highlights is a distinction between external versus internal degrees of freedom (parameters). The computational graph of how nodes are connected represents external degrees of freedom (``dofs''), while the grid points inside an activation function are internal degrees of freedom. KANs benefit from the fact that they have both external dofs and internal dofs. External dofs (that MLPs also have but splines do not) are responsible for learning compositional structures of multiple variables. Internal dofs (that splines also have but MLPs do not) are responsible for learning univariate functions.

\subsection{For Interpretability: Simplifying KANs and Making them interactive}\label{subsec:kan_simplification}

One loose end from the last subsection is that we do not know how to choose the KAN shape that best matches the structure of a dataset. For example, if we know that the dataset is generated via the symbolic formula $f(x,y) = {\rm exp}({\rm sin}(\pi x)+y^2)$, then we know that a $[2,1,1]$ KAN is able to express this function. However, in practice we do not know the information a priori, so it would be nice to have approaches to determine this shape automatically. The idea is to start from a large enough KAN and train it with sparsity regularization followed by pruning. We will show that these pruned KANs are much more interpretable than non-pruned ones. To make KANs maximally interpretable, we propose a few simplification techniques in Section~\ref{subsubsec:simplification}, and an example of how users can interact with KANs to make them more interpretable in Section~\ref{subsubsec:interative-example}.


\subsubsection{Simplification techniques}\label{subsubsec:simplification}

{\bf 1. Sparsification.} For MLPs, L1 regularization of linear weights is used to favor sparsity. KANs can adapt this high-level idea, but need two modifications: 
\begin{enumerate}[(1)]
    \item There is no linear ``weight'' in KANs. Linear weights are replaced by learnable activation functions,  so we should define the L1 norm of these activation functions.
    \item We find L1 to be insufficient for sparsification of KANs; instead an additional entropy regularization is necessary (see Appendix~\ref{app:interp_hyperparams} for more details).
\end{enumerate}
We define the L1 norm of an activation function $\phi$ to be its average magnitude over its $N_p$ inputs, i.e.,
\begin{equation}
    \left|\phi\right|_1 \equiv \frac{1}{N_p}\sum_{s=1}^{N_p} \left|\phi(x^{(s)})\right|.
\end{equation}
Then for a KAN layer $\mat{\Phi}$ with $n_{\rm in}$ inputs and $n_{\rm out}$ outputs, we define the L1 norm of $\mat{\Phi}$ to be the sum of L1 norms of all activation functions, i.e.,
\begin{equation}
    \left|\mat{\Phi}\right|_1 \equiv \sum_{i=1}^{n_{\rm in}}\sum_{j=1}^{n_{\rm out}} \left|\phi_{i,j}\right|_1.
\end{equation}
In addition, we define the entropy of $\mat{\Phi}$ to be
\begin{equation}
    S(\mat{\Phi}) \equiv -\sum_{i=1}^{n_{\rm in}}\sum_{j=1}^{n_{\rm out}} \frac{\left|\phi_{i,j}\right|_1}{\left|\mat{\Phi}\right|_1}{\rm log}\left(\frac{\left|\phi_{i,j}\right|_1}{\left|\mat{\Phi}\right|_1}\right).
\end{equation}
The total training objective $\ell_{\rm total}$ is the prediction loss $\ell_{\rm pred}$ plus L1 and entropy regularization of all KAN layers:
\begin{equation}
    \ell_{\rm total} = \ell_{\rm pred} + \lambda \left(\mu_1 \sum_{l=0}^{L-1}\left|\mat{\Phi}_l\right|_1 + \mu_2 \sum_{l=0}^{L-1}S(\mat{\Phi}_l)\right),
\end{equation}
where $\mu_1,\mu_2$ are relative magnitudes usually set to $\mu_1=\mu_2=1$, and $\lambda$ controls overall regularization magnitude.

{\bf 2. Visualization.} When we visualize a KAN, to get a sense of magnitudes, we set the transparency of an activation function $\phi_{l,i,j}$ proportional to ${\rm tanh}(\beta A_{l,i,j})$ where $\beta=3$ . Hence, functions with small magnitude appear faded out to allow us to focus on important ones.

{\bf 3. Pruning.} After training with sparsification penalty, we may also want to prune the network to a smaller subnetwork. We sparsify KANs on the node level (rather than on the edge level). For each node (say the $i^{\rm th}$ neuron in the $l^{\rm th}$ layer), we define its incoming and outgoing score as 
\begin{equation}
    I_{l,i} = \underset{k}{\rm max}(\left|\phi_{l-1,i,k}\right|_1), \qquad O_{l,i} = \underset{j}{\rm max}(\left|\phi_{l+1,j,i}\right|_1),
\end{equation}
and consider a node to be important if both incoming and outgoing scores are greater than a threshold hyperparameter $\theta=10^{-2}$ by default. All unimportant neurons are pruned.

{\bf 4. Symbolification.} In cases where we suspect that some activation functions are in fact symbolic (e.g., ${\rm cos}$ or ${\rm log}$), we provide an interface to set them to be a specified symbolic form, $\texttt{fix\_symbolic(l,i,j,f)}$ can set the $(l,i,j)$ activation to be $f$. However, we cannot simply set the activation function to be the exact symbolic formula, since its inputs and outputs may have shifts and scalings. So, we obtain preactivations $x$ and postactivations $y$ from samples, and fit affine parameters $(a,b,c,d)$ such that 
$y\approx cf(ax+b)+d$. The fitting is done by iterative grid search of $a, b$ and linear regression.

Besides these techniques, we provide additional tools that allow users to apply more fine-grained control to KANs, listed in Appendix~\ref{app:kan_func}.

\subsubsection{A toy example: how humans can interact with KANs}\label{subsubsec:interative-example}

\begin{figure}[t]
    \centering
    \includegraphics[width=1\linewidth]{./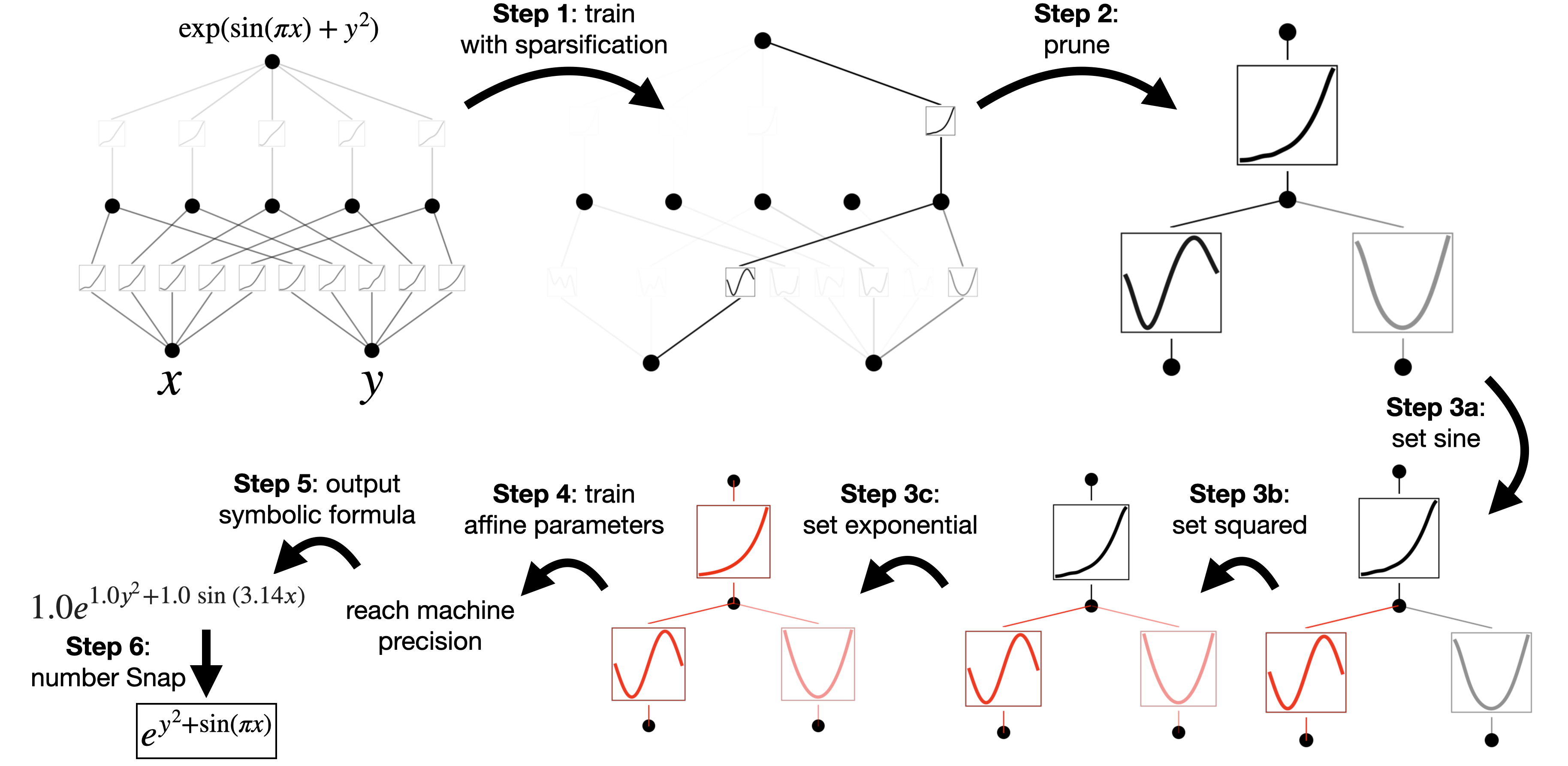}
    \caption{An example of how to  do symbolic regression with KAN.}
    \label{fig:interactive}
\end{figure}

Above we have proposed a number of simplification techniques for KANs. We can view these simplification choices as buttons one can click on. A user interacting with these buttons can decide which button is most promising to click next to make KANs more interpretable. 
We use an example below to showcase how a user could interact with a KAN to obtain maximally interpretable results. 

Let us again consider the regression task 
\begin{align}
    f(x,y) = \exp\left({\sin}(\pi x)+y^2\right).
\end{align}
Given data points $(x_i,y_i,f_i)$, $i=1,2,\cdots,N_p$, a hypothetical user Alice is interested in figuring out the symbolic formula. The steps of Alice's interaction with the KANs are described below (illustrated in Figure~\ref{fig:interactive}): 

{\bf Step 1: Training with sparsification.} Starting from a fully-connected $[2,5,1]$ KAN, training with sparsification regularization can make it quite sparse. 4 out of 5 neurons in the hidden layer appear useless, hence we want to prune them away.

{\bf Step 2: Pruning.} Automatic pruning is seen to discard all hidden neurons except the last one, leaving a $[2,1,1]$ KAN. The activation functions appear to be known symbolic functions.

{\bf Step 3: Setting symbolic functions.} Assuming that the user can correctly guess these symbolic formulas from staring at the KAN plot, they can set
\begin{equation}
\begin{aligned}
    &\texttt{fix\_symbolic(0,0,0,`sin')} \\ 
    &\texttt{fix\_symbolic(0,1,0,`x\^{}2')} \\
    &\texttt{fix\_symbolic(1,0,0,`exp')}.
\end{aligned}
\end{equation}
In case the user has no domain knowledge or no idea which symbolic functions these activation functions might be, we provide a function $\texttt{suggest\_symbolic}$ to suggest symbolic candidates.

{\bf Step 4: Further training.} 
After symbolifying all the activation functions in the network, the only remaining parameters are the affine parameters. We continue training these affine parameters, and when we see the loss dropping to machine precision, we know that we have found the correct symbolic expression. 

{\bf Step 5: Output the symbolic formula.} \texttt{Sympy} is used to compute the symbolic formula of the output node. The user obtains $1.0e^{1.0y^2+1.0{\rm sin}(3.14x)}$, which is the true answer (we only displayed two decimals for $\pi$).

{\bf Remark: Why not symbolic regression (SR)?} 
It is reasonable to use symbolic regression for this example. However, symbolic regression methods are in general brittle and hard to debug. They either return a success or a failure in the end without outputting interpretable intermediate results. In contrast, KANs do continuous search (with gradient descent) in function space, so their results are more continuous and hence more robust. Moreover, users have more control over KANs as compared to SR due to KANs' transparency. The way we visualize KANs is like displaying KANs' ``brain'' to users, and users can perform ``surgery'' (debugging) on KANs. This level of control is typically unavailable for SR. We will show examples of this in Section~\ref{subsec:anderson}. More generally, when the target function is not symbolic, symbolic regression will fail but KANs can still provide something meaningful. For example, a special function (e.g., a Bessel function) is impossible to SR to learn unless it is provided in advance, but KANs can use splines to approximate it numerically anyway (see Figure~\ref{fig:interpretable_examples} (d)).


\section{KANs are accurate}\label{sec:kan_accuracy_experiment}

In this section, we demonstrate that KANs are more effective at representing functions than MLPs in various tasks (regression and PDE solving). When comparing two families of models, it is fair to compare both their accuracy (loss) and their complexity (number of parameters). We will show that KANs display more favorable Pareto Frontiers than MLPs. Moreover, in Section~\ref{subsec:continual-learning}, we show that KANs can naturally work in continual learning without catastrophic forgetting.

\subsection{Toy datasets}\label{subsec:acc-toy}

\begin{figure}[t]
    \centering
    \includegraphics[width=1\linewidth]{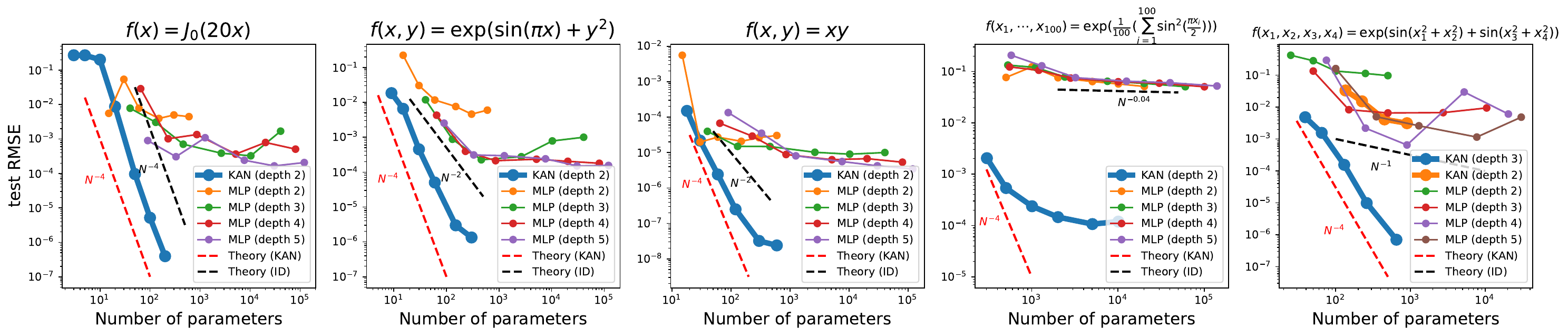}
    \caption{Compare KANs to MLPs on five toy examples. KANs can almost saturate the fastest scaling law predicted by our theory $(\alpha=4)$, while MLPs scales slowly and plateau quickly.}
    \label{fig:model_scaling}
\end{figure}

In Section~\ref{subsec:kan_scaling_theory}, our theory suggested that test RMSE loss $\ell$ scales as $\ell\propto N^{-4}$ with model parameters $N$. However, this relies on the existence of a Kolmogorov-Arnold representation. As a sanity check, we construct five examples we know have smooth KA representations: 
\begin{enumerate}[(1)]
    \item $f(x)=J_0(20x)$, which is the Bessel function. Since it is a univariate function, it can be represented by a spline, which is a $[1,1]$ KAN.
    \item $f(x,y)={\rm exp}({\rm sin}(\pi x)+y^2)$. We know that it can be exactly represented by a $[2,1,1]$ KAN.
    \item $f(x,y)=xy$. We know from Figure~\ref{fig:interpretable_examples} that it can be exactly represented by a $[2,2,1]$ KAN. 
    \item  A high-dimensional example $f(x_1,\cdots,x_{100})={\rm exp}(\frac{1}{100}\sum_{i=1}^{100}{\rm sin}^2(\frac{\pi x_i}{2}))$ which can be represented by a $[100,1,1]$ KAN.
    \item A four-dimensional example $f(x_1,x_2,x_3,x_4)={\rm exp}(\frac{1}{2}({\rm sin}(\pi(x_1^2+x_2^2))+{\rm sin}(\pi(x_3^2+x_4^2))))$ which can be represented by a $[4,4,2,1]$ KAN.
\end{enumerate}
We train these KANs by increasing grid points every 200 steps, in total covering $G=\{3,5,10,20,50,100,200,500,1000\}$. We train MLPs with different depths and widths as baselines. Both MLPs and KANs are trained with LBFGS for 1800 steps in total. We plot test RMSE as a function of the number of parameters for KANs and MLPs in Figure~\ref{fig:model_scaling}, showing that KANs have better scaling curves than MLPs, especially for the high-dimensional example. For comparison, we plot the lines predicted from our KAN theory as red dashed ($\alpha=k+1=4$), and the lines predicted from Sharma \& Kaplan~\cite{sharma2020neural} as black-dashed ($\alpha=(k+1)/d=4/d$). KANs can almost saturate the steeper red lines, while MLPs struggle to converge even as fast as the slower black lines and plateau quickly. We also note that for the last example, the 2-Layer KAN $[4,9,1]$ behaves much worse than the 3-Layer KAN (shape $[4,2,2,1]$). This highlights the greater expressive power of deeper KANs, which is the same for MLPs: deeper MLPs have more expressive power than shallower ones. Note that we have adopted the vanilla setup where both KANs and MLPs are trained with LBFGS without advanced techniques, e.g., switching between Adam and LBFGS, or boosting~\cite{wang2024multi}. We leave the comparison of KANs and MLPs in advanced setups for future work.


\subsection{Special functions}\label{subsec:special}

\begin{figure}[t]
    \centering
    \includegraphics[width=1\linewidth]{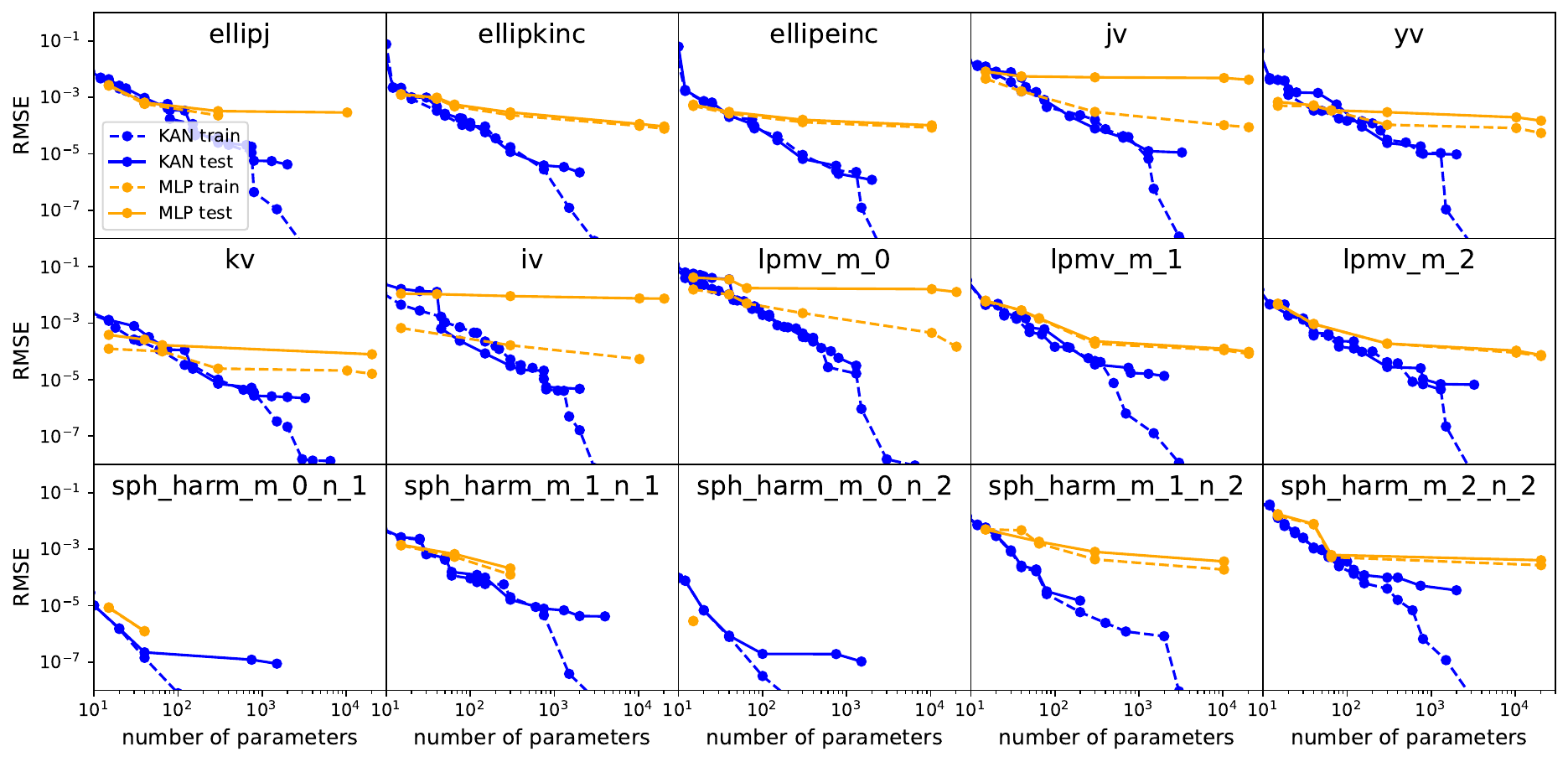}
    \caption{Fitting special functions. We show the Pareto Frontier of KANs and MLPs in the plane spanned by the number of model parameters and RMSE loss. Consistently accross all special functions, KANs have better Pareto Frontiers than MLPs. The definitions of these special functions are in Table~\ref{tab:special_kan_shape}.}
    \label{fig:special_pf}
\end{figure}

One caveat for the above results is that we assume knowledge of the ``true'' KAN shape. In practice, we do not know the existence of KA representations. Even when we are promised that such a KA representation exists, we do not know the KAN shape a priori. Special functions in more than one variables are such cases, because it would be (mathematically) surprising if multivariate special functions (e.g., a Bessel function $f(\nu,x)=J_\nu(x)$) could be written in KA represenations, involving only univariate functions and sums). We show below that: 
\begin{enumerate}[(1)]
    \item Finding (approximate) compact KA representations of special functions is possible, revealing novel mathematical properties of special functions from the perspective of Kolmogorov-Arnold representations. 
    \item KANs are more efficient and accurate in representing special functions than MLPs.   
\end{enumerate}

We collect 15 special functions common in math and physics, summarized in Table~\ref{tab:special_kan_shape}. We choose MLPs with fixed width 5 or 100 and depths swept in $\{2,3,4,5,6\}$. We run KANs both with and without pruning. \textit{KANs without pruning}: We fix the shape of KAN, whose width are set to 5 and depths are swept in \{2,3,4,5,6\}.
\textit{KAN with pruning}. We use the sparsification $(\lambda=10^{-2}\ {\rm or}\ 10^{-3})$ and pruning technique in Section~\ref{subsubsec:simplification} to obtain a smaller KAN pruned from a fixed-shape KAN.  Each KAN is initialized to have $G=3$, trained with LBFGS, with increasing number of grid points every 200 steps to cover $G=\{3,5,10,20,50,100,200\}$. For each hyperparameter combination, we run 3 random seeds. 

For each dataset and each model family (KANs or MLPs), we plot the Pareto frontier~\footnote{Pareto frontier is defined as fits that are optimal in the sense of no other fit being both simpler and more accurate.}, in the (number of parameters, RMSE) plane, shown in Figure~\ref{fig:special_pf}. KANs' performance is shown to be consistently better than MLPs, i.e., KANs can achieve lower training/test losses than MLPs, given the same number of parameters. Moreover, we report the (surprisingly compact) shapes of our auto-discovered KANs for special functions in Table~\ref{tab:special_kan_shape}. On one hand, it is interesting to interpret what these compact representations mean mathematically (we include the KAN illustrations in Figure~\ref{fig:best-special-kan} and~\ref{fig:minimal-special-kan} in Appendix~\ref{app:special_kans}). On the other hand, these compact representations imply the possibility of breaking down a high-dimensional lookup table into several 1D lookup tables, which can potentially save a lot of memory, with the (almost negligible) overhead to perform a few additions at inference time.

\begin{table}[t]
    \centering
    \resizebox{0.9\columnwidth}{!}{%
    \renewcommand{\arraystretch}{1.7}
    \begin{tabular}{|c|c|c|c|c|c|c|}\hline
    Name & scipy.special API & \makecell{Minimal KAN shape \\ test RMSE $<10^{-2}$} & Minimal KAN test RMSE & Best KAN shape & Best KAN test RMSE & MLP test RMSE \\\hline
    Jacobian elliptic functions & ${\rm ellipj}(x,y)$ & [2,2,1] & $7.29\times 10^{-3}$ & [2,3,2,1,1,1] & ${\bf 1.33\times 10^{-4}}$ & $6.48\times 10^{-4}$ \\\hline
    Incomplete elliptic integral of the first kind & ${\rm ellipkinc}(x,y)$ & [2,2,1,1] & $1.00\times 10^{-3}$ & [2,2,1,1,1] & ${\bf 1.24\times 10^{-4}}$ & $5.52\times 10^{-4}$ \\\hline
    	
    Incomplete elliptic integral of the second kind & ${\rm ellipeinc}(x,y)$ & [2,2,1,1] & $8.36\times 10^{-5}$ & [2,2,1,1] & ${\bf 8.26\times 10^{-5}}$ & $3.04\times 10^{-4}$ \\\hline
    Bessel function of the first kind & ${\rm jv}(x,y)$ & [2,2,1] & $4.93\times 10^{-3}$ & [2,3,1,1,1] & ${\bf 1.64\times 10^{-3}}$ & $5.52\times 10^{-3}$\\\hline
    Bessel function of the second kind & ${\rm yv}(x,y)$ & [2,3,1] & $1.89\times 10^{-3}$ & [2,2,2,1] & ${\bf 1.49\times 10^{-5}}$ &  $3.45\times 10^{-4}$ \\\hline
    Modified Bessel function of the second kind & ${\rm kv}(x,y)$ & [2,1,1] & $4.89\times 10^{-3}$ & [2,2,1] & ${\bf 2.52\times 10^{-5}}$ & $1.67\times 10^{-4}$ \\\hline
    Modified Bessel function of the first kind & ${\rm iv}(x,y)$ & [2,4,3,2,1,1] & $9.28\times 10^{-3}$ & [2,4,3,2,1,1] & ${\bf 9.28\times 10^{-3}}$ & $1.07\times 10^{-2}$\\\hline
    Associated Legendre function $(m=0)$ & ${\rm lpmv}(0,x,y)$ & [2,2,1] & $5.25\times 10^{-5}$ & [2,2,1] & ${\bf 5.25\times 10^{-5}}$ & $1.74\times 10^{-2}$ \\\hline
    Associated Legendre function $(m=1)$ & ${\rm lpmv}(1,x,y)$ & [2,4,1] & $6.90\times 10^{-4}$ & [2,4,1] & ${\bf 6.90\times 10^{-4}}$ & $1.50\times 10^{-3}$ \\\hline
    Associated Legendre function $(m=2)$ & ${\rm lpmv}(2,x,y)$ & [2,2,1] & $4.88\times 10^{-3}$ & [2,3,2,1] & ${\bf 2.26\times 10^{-4}}$ & $9.43\times 10^{-4}$ \\\hline
    spherical harmonics $(m=0,n=1)$ & ${\rm sph\_harm}(0,1,x,y)$ & [2,1,1] & $2.21\times 10^{-7}$ & [2,1,1] & ${\bf 2.21\times 10^{-7}}$ & $1.25\times 10^{-6}$ \\\hline
    spherical harmonics $(m=1,n=1)$ & ${\rm sph\_harm}(1,1,x,y)$ & [2,2,1] & $7.86\times 10^{-4}$ & [2,3,2,1] & ${\bf 1.22\times 10^{-4}}$ & $6.70\times 10^{-4}$ \\\hline
    spherical harmonics $(m=0,n=2)$ & ${\rm sph\_harm}(0,2,x,y)$ & [2,1,1] & $1.95\times 10^{-7}$ & [2,1,1] & ${\bf 1.95\times 10^{-7}}$ & $2.85\times 10^{-6}$  \\\hline
    spherical harmonics $(m=1,n=2)$ & ${\rm sph\_harm}(1,2,x,y)$ & [2,2,1] & $4.70\times 10^{-4}$ & [2,2,1,1] & ${\bf 1.50\times 10^{-5}}$ & $1.84\times 10^{-3}$  \\\hline
    spherical harmonics $(m=2,n=2)$ & ${\rm sph\_harm}(2,2,x,y)$ & [2,2,1] & $1.12\times 10^{-3}$ & [2,2,3,2,1] & ${\bf 9.45\times 10^{-5}}$ & $6.21\times 10^{-4}$  \\\hline
    \end{tabular}}
    \vspace{2mm}
    \caption{Special functions}
    \label{tab:special_kan_shape}
\end{table}

\subsection{Feynman datasets}\label{subsec:feynman}

The setup in Section~\ref{subsec:acc-toy} is when we clearly know ``true'' KAN shapes.  The setup in Section~\ref{subsec:special} is when we clearly do {\bf not} know ``true'' KAN shapes. This part investigates a setup lying in the middle: Given the structure of the dataset, we may construct KANs by hand, but we are not sure if they are optimal. In this regime, it is interesting to compare human-constructed KANs and auto-discovered KANs via pruning (techniques in Section~\ref{subsubsec:simplification}).

\begin{table}[t]
    \centering
    \resizebox{\columnwidth}{!}{%
    \renewcommand{\arraystretch}{1.7}
    \begin{tabular}{|c|c|c|c|c|c|c|c|c|c|c|}\hline
    Feynman Eq. & Original Formula & Dimensionless formula & Variables & \makecell{Human-constructed \\ KAN shape} & \makecell{Pruned \\ KAN shape \\
    (smallest shape\\ that achieves \\ RMSE < $10^{-2}$)} & \makecell{Pruned \\ KAN shape \\
    (lowest loss)}& \makecell{Human-constructed \\ KAN loss \\ (lowest test RMSE)} &\makecell{Pruned \\ KAN loss \\ (lowest test RMSE)} & \makecell{Unpruned \\ KAN loss \\ (lowest test RMSE)} & \makecell{MLP \\ loss \\ (lowest test RMSE)}   \\\hline
    I.6.2   & ${\rm exp}(-\frac{\theta^2}{2\sigma^2})/\sqrt{2\pi\sigma^2}$ & ${\rm exp}(-\frac{\theta^2}{2\sigma^2})/\sqrt{2\pi\sigma^2}$ & $\theta, \sigma$ & [2,2,1,1] & [2,2,1] & [2,2,1,1] & $7.66\times 10^{-5}$ & ${\bf 2.86\times 10^{-5}}$ & $4.60\times 10^{-5}$ & $1.45\times 10^{-4}$ \\\hline
    I.6.2b & ${\rm exp}(-\frac{(\theta-\theta_1)^2}{2\sigma^2})/\sqrt{2\pi\sigma^2}$ & ${\rm exp}(-\frac{(\theta-\theta_1)^2}{2\sigma^2})/\sqrt{2\pi\sigma^2}$ & $\theta,\theta_1,\sigma$ & [3,2,2,1,1] & [3,4,1] & [3,2,2,1,1] & $1.22\times 10^{-3}$ & ${\bf 4.45\times 10^{-4}}$ & $1.25\times 10^{-3}$ & $7.40\times 10^{-4}$ \\\hline
    I.9.18 & $\frac{Gm_1m_2}{(x_2-x_1)^2+(y_2-y_1)^2+(z_2-z_1)^2}$ & $\frac{a}{(b-1)^2+(c-d)^2+(e-f)^2}$ & $a,b,c,d,e,f$ & [6,4,2,1,1]  & [6,4,1,1] & [6,4,1,1] & ${\bf 1.48\times 10^{-3}}$ & $8.62\times 10^{-3}$ & $6.56\times 10^{-3}$ & $1.59\times 10^{-3}$ \\\hline
    I.12.11 & $q(E_f+Bv{\rm sin}\theta)$ & $1+a{\rm sin}\theta$ & $a,\theta$ & [2,2,2,1]  & [2,2,1] & [2,2,1] & $2.07\times 10^{-3}$ & $1.39\times 10^{-3}$ & $9.13\times 10^{-4}$ & ${\bf 6.71\times 10^{-4}}$\\\hline
    I.13.12 & $Gm_1m_2(\frac{1}{r_2}-\frac{1}{r_1})$ & $a(\frac{1}{b}-1)$ & $a,b$ & [2,2,1] & [2,2,1] & [2,2,1] & $7.22\times 10^{-3}$ & $4.81\times 10^{-3}$ & $2.72\times 10^{-3}$ & ${\bf 1.42\times 10^{-3}}$  \\\hline
    I.15.3x & $\frac{x-ut}{\sqrt{1-(\frac{u}{c})^2}}$ & $\frac{1-a}{\sqrt{1-b^2}}$ & $a,b$ & [2,2,1,1]  & [2,1,1] & [2,2,1,1,1] & $7.35\times 10^{-3}$ & $1.58\times 10^{-3}$ & $1.14\times 10^{-3}$ & ${\bf 8.54\times 10^{-4}}$ \\\hline
    I.16.6 & $\frac{u+v}{1+\frac{uv}{c^2}}$ & $\frac{a+b}{1+ab}$ & $a,b$ & [2,2,2,2,2,1] & [2,2,1] & [2,2,1] & $1.06\times 10^{-3}$ & $1.19\times 10^{-3}$ & $1.53\times 10^{-3}$ & ${\bf 6.20\times 10^{-4}}$ \\\hline
    I.18.4 & $\frac{m_1r_1+m_2r_2}{m_1+m_2}$ & $\frac{1+ab}{1+a}$ & $a,b$ & [2,2,2,1,1]  & [2,2,1] & [2,2,1] & $3.92\times 10^{-4}$ & ${\bf 1.50\times 10^{-4}}$ & $1.32\times 10^{-3}$ & $3.68\times 10^{-4}$ \\\hline
    I.26.2 & ${\rm arcsin}(n{\rm sin}\theta_2)$ & ${\rm arcsin}(n{\rm sin}\theta_2)$ & $n,\theta_2$  & [2,2,2,1,1]  & [2,2,1] & [2,2,2,1,1] & $1.22\times 10^{-1}$ &  ${\bf 7.90\times 10^{-4}}$ & $8.63\times 10^{-4}$ & $1.24\times 10^{-3}$ \\\hline
    I.27.6 & $\frac{1}{\frac{1}{d_1}+\frac{n}{d_2}}$ & $\frac{1}{1+ab}$ & $a,b$ & [2,2,1,1] & [2,1,1] & [2,1,1] & $2.22\times 10^{-4}$ & ${\bf 1.94\times 10^{-4}}$ & $2.14\times 10^{-4}$ & $2.46\times 10^{-4}$ \\\hline
    I.29.16 & $\sqrt{x_1^2+x_2^2-2x_1x_2{\rm cos}(\theta_1-\theta_2)}$ & $\sqrt{1+a^2-2a{\rm cos}(\theta_1-\theta_2)}$ & $a,\theta_1,\theta_2$ & [3,2,2,3,2,1,1]  & [3,2,2,1] & [3,2,3,1] & $2.36\times 10^{-1}$ & $3.99\times 10^{-3}$ & ${\bf 3.20\times 10^{-3}}$  & $4.64\times 10^{-3}$ \\\hline
    I.30.3 & $I_{*,0}\frac{{\rm sin}^2(\frac{n\theta}{2})}{{\rm sin}^2(\frac{\theta}{2})}$ & $\frac{{\rm sin}^2(\frac{n\theta}{2})}{{\rm sin}^2(\frac{\theta}{2})}$ & $n,\theta$ & [2,3,2,2,1,1] & [2,4,3,1] & [2,3,2,3,1,1] & $3.85\times 10^{-1}$ & ${\bf 1.03\times 10^{-3}}$ & $1.11\times 10^{-2}$  &  $1.50\times 10^{-2}$\\\hline
    I.30.5 & ${\rm arcsin}(\frac{\lambda}{nd})$ & ${\rm arcsin}(\frac{a}{n})$ & $a,n$ & [2,1,1]  & [2,1,1] & [2,1,1,1,1,1] & $2.23\times 10^{-4}$ & ${\bf 3.49\times 10^{-5}}$ & $6.92\times 10^{-5}$ & $9.45\times 10^{-5}$ \\\hline
    I.37.4 & $I_*=I_1+I_2+2\sqrt{I_1I_2}{\rm cos}\delta$ & $1+a+2\sqrt{a}{\rm cos}\delta$ & $a,\delta$ & [2,3,2,1]  & [2,2,1] & [2,2,1] & $7.57\times 10^{-5}$ & ${\bf 4.91\times 10^{-6}}$ & $3.41\times 10^{-4}$ & $5.67\times 10^{-4}$ \\\hline
    I.40.1 & $n_0{\rm exp}(-\frac{mgx}{k_bT})$ & $n_0e^{-a}$ & $n_0,a$ & [2,1,1]  & [2,2,1] & [2,2,1,1,1,2,1] & $3.45\times 10^{-3}$ & $5.01\times 10^{-4}$ & ${\bf 3.12\times 10^{-4}}$ & $3.99\times 10^{-4}$ \\\hline
    I.44.4 & $nk_bT{\rm ln}(\frac{V_2}{V_1})$ & $n{\rm ln}a$ & $n,a$ &  [2,2,1]  & [2,2,1] & [2,2,1] & ${\bf 2.30\times 10^{-5}}$ & $2.43\times 10^{-5}$ & $1.10\times 10^{-4}$ & $3.99\times 10^{-4}$ \\\hline
    I.50.26 & $x_1({\rm cos}(\omega t)+\alpha {\rm cos}^2(wt))$ & ${\rm cos}a+\alpha{\rm cos}^2a$ & $a,\alpha$ &  [2,2,3,1]   & [2,3,1] & [2,3,2,1] & ${\bf 1.52\times 10^{-4}}$ & $5.82\times 10^{-4}$ & $4.90\times 10^{-4}$ & $1.53\times 10^{-3}$ \\\hline
    II.2.42 & $\frac{k(T_2-T_1)A}{d}$ & $(a-1)b$ & $a,b$ &  [2,2,1]  & [2,2,1] & [2,2,2,1] & $8.54\times 10^{-4}$ & $7.22\times 10^{-4}$ & $1.22\times 10^{-3}$ & ${\bf 1.81\times 10^{-4}}$ \\\hline
    II.6.15a & $\frac{3}{4\pi\epsilon}\frac{p_dz}{r^5}\sqrt{x^2+y^2}$ & $\frac{1}{4\pi} c\sqrt{a^2+b^2}$ & $a,b,c$ & [3,2,2,2,1]  & [3,2,1,1] & [3,2,1,1] & $2.61\times 10^{-3}$ &$3.28\times 10^{-3}$ & $1.35\times 10^{-3}$ & ${\bf 5.92\times 10^{-4}}$ \\\hline
    II.11.7 & $n_0(1+\frac{p_dE_f{\rm cos}\theta}{k_bT})$ & $n_0(1+a{\rm cos}\theta)$ & $n_0, a, \theta$ & [3,3,3,2,2,1]  & [3,3,1,1] & [3,3,1,1] & $7.10\times 10^{-3}$ & $8.52\times 10^{-3}$ & $5.03\times 10^{-3}$  & ${\bf 5.92\times 10^{-4}}$ \\\hline
    II.11.27 & $\frac{n\alpha}{1-\frac{n\alpha}{3}}\epsilon E_f$ & $\frac{n\alpha}{1-\frac{n\alpha}{3}}$ & $n,\alpha$ &  [2,2,1,2,1]  & [2,1,1] & [2,2,1] & $2.67\times 10^{-5}$ & $4.40\times 10^{-5}$ & ${\bf 1.43\times 10^{-5}}$ & $7.18\times 10^{-5}$ \\\hline
    II.35.18 & $\frac{n_0}{{\rm exp}(\frac{\mu_m B}{k_b T})+{\rm exp}(-\frac{\mu_m B}{k_b T})}$ & $\frac{n_0}{{\rm exp}(a)+{\rm exp}(-a)}$ & $n_0,a$ & [2,1,1]  & [2,1,1] & [2,1,1,1] & $4.13\times 10^{-4}$ & $1.58\times 10^{-4}$ & ${\bf 7.71\times 10^{-5}}$ & $7.92\times 10^{-5}$ \\\hline
    II.36.38 & $\frac{\mu_m B}{k_b T}+\frac{\mu_m\alpha M}{\epsilon c^2k_bT}$ & $a+\alpha b$ & $a,\alpha,b$ & [3,3,1]  & [3,2,1] & [3,2,1] & $2.85\times 10^{-3}$ & ${\bf 1.15\times 10^{-3}}$ & $3.03\times 10^{-3}$ & $2.15\times 10^{-3}$ \\\hline
    II.38.3 & $\frac{YAx}{d}$ & $\frac{a}{b}$ & $a,b$ & [2,1,1]  & [2,1,1] & [2,2,1,1,1] & $1.47\times 10^{-4}$ & ${\bf 8.78\times 10^{-5}}$ & $6.43\times 10^{-4}$ & $5.26\times 10^{-4}$ \\\hline
    III.9.52 & $\frac{p_dE_f}{h}\frac{{\rm sin}^2((\omega-\omega_0)t/2)}{((\omega-\omega_0)t/2)^2}$ & $a\frac{{\rm sin}^2(\frac{b-c}{2})}{(\frac{b-c}{2})^2}$ & $a,b,c$ &  [3,2,3,1,1]  & [3,3,2,1] & [3,3,2,1,1,1] & $4.43\times 10^{-2}$ & $3.90\times 10^{-3}$ & $2.11\times 10^{-2}$ & ${\bf 9.07\times 10^{-4}}$ \\\hline
    III.10.19 & $\mu_m\sqrt{B_x^2+B_y^2+B_z^2}$ & $\sqrt{1+a^2+b^2}$ & $a,b$ & [2,1,1] & [2,1,1] & [2,1,2,1] & $2.54\times 10^{-3}$ & $1.18\times 10^{-3}$ & $8.16\times 10^{-4}$ & ${\bf 1.67\times 10^{-4}}$ \\\hline
    III.17.37 & $\beta(1+\alpha{\rm cos}\theta)$ & $\beta(1+\alpha{\rm cos}\theta)$ &  $\alpha,\beta,\theta$ &  [3,3,3,2,2,1] & [3,3,1] & [3,3,1] & $1.10\times 10^{-3}$ & $5.03\times 10^{-4}$ & ${\bf 4.12\times 10^{-4}}$ & $6.80\times 10^{-4}$ \\\hline
    \end{tabular}}
    \vspace{2mm}
    \caption{Feynman dataset}
    \label{tab:feynman_kan_shape}
\end{table}

{\bf Feynman dataset.} The Feynman dataset collects many physics equations from Feynman's textbooks~\cite{udrescu2020ai,udrescu2020ai2}. For our purpose, we are interested in problems in the \texttt{Feynman\_no\_units} dataset that have at least 2 variables, since univariate problems are trivial for KANs (they simplify to 1D splines). A sample equation from the Feynman dataset is the relativisic velocity addition formula
\begin{align}
    f(u,v) = (u+v)/(1+uv).
\end{align}
The dataset can be constructed by randomly drawing $u_i\in (-1,1)$, $v_i\in (-1,1)$, and computing $f_i=f(u_i,v_i)$. Given many tuples $(u_i,v_i,f_i)$, a neural network is trained and aims to predict $f$ from $u$ and $v$. We are interested in (1) how well a neural network can perform on test samples; (2) how much we can learn about the structure of the problem from neural networks.

We compare four kinds of neural networks: 
\begin{enumerate}[(1)]
    \item  Human-constructued KAN. Given a symbolic formula, we rewrite it in Kolmogorov-Arnold representations. For example, to multiply two numbers $x$ and $y$, we can use the identity $xy=\frac{(x+y)^2}{4}-\frac{(x-y)^2}{4}$, which corresponds to a $[2,2,1]$ KAN. The constructued shapes are listed in the ``Human-constructed KAN shape'' in Table~\ref{tab:feynman_kan_shape}.
    \item KANs without pruning. We fix the KAN shape to width 5 and depths are swept over \{2,3,4,5,6\}.
    \item KAN with pruning. We use the sparsification $(\lambda=10^{-2}\ {\rm or}\ 10^{-3})$ and the pruning technique from Section~\ref{subsubsec:simplification} to obtain a smaller KAN from a fixed-shape KAN from (2).
    \item MLPs with fixed width 5, depths swept in $\{2,3,4,5,6\}$, and activations chosen from $\{{\rm Tanh},{\rm ReLU},{\rm SiLU}\}$.
\end{enumerate}
Each KAN is initialized to have $G=3$, trained with LBFGS, with increasing number of grid points every 200 steps to cover $G=\{3,5,10,20,50,100,200\}$. For each hyperparameter combination, we try 3 random seeds. For each dataset (equation) and each method, we report the results of the best model (minimal KAN shape, or lowest test loss) over random seeds and depths in Table~\ref{tab:feynman_kan_shape}. We find that MLPs and KANs behave comparably on average. For each dataset and each model family (KANs or MLPs), we plot the Pareto frontier in the plane spanned by the number of parameters and RMSE losses, shown in Figure~\ref{fig:feynman_pf} in Appendix~\ref{app:feynman_kans}. We conjecture that the Feynman datasets are too simple to let KANs make further improvements, in the sense that variable dependence is usually smooth or monotonic, which is in contrast to the complexity of special functions which often demonstrate oscillatory behavior.


{\bf Auto-discovered KANs are smaller than human-constructed ones.} We report the pruned KAN shape in two columns of Table~\ref{tab:feynman_kan_shape}; one column is for the minimal pruned KAN shape that can achieve reasonable loss (i.e., test RMSE smaller than $10^{-2}$); the other column is for the pruned KAN that achieves lowest test loss. For completeness, we visualize all 54 pruned KANs in Appendix~\ref{app:feynman_kans} (Figure~\ref{fig:best-feynman-kan} and~\ref{fig:minimal-feynman-kan}). It is interesting to observe that auto-discovered KAN shapes (for both minimal and best) are usually smaller than our human constructions. This means that KA representations can be more efficient than we imagine. At the same time, this may make interpretability subtle because information is being squashed into a smaller space than what we are comfortable with. 

Consider 
the relativistic velocity composition $f(u,v)=\frac{u+v}{1+uv}$, for example. Our construction is quite deep because we were assuming that multiplication of $u,v$ would use two layers (see Figure~\ref{fig:interpretable_examples} (a)), inversion of $1+uv$ would use one layer, and multiplication of $u+v$ and $1/(1+uv)$ would use another two layers\footnote{Note that we cannot use the logarithmic construction for division, because $u$ and $v$ here might be negative numbers.}, resulting a total of 5 layers. However, the auto-discovered KANs are only 2 layers deep! In hindsight, this is actually expected if we recall the rapidity trick in relativity: define the two ``rapidities'' $a\equiv {\rm arctanh}\ u$ and $b\equiv {\rm arctanh}\ v$. The relativistic composition of velocities are simple additions in rapidity space, i.e., $\frac{u+v}{1+uv}={\rm tanh}({\rm arctanh}\ u + {\rm arctanh}\ v)$, which can be realized by a two-layer KAN. Pretending we do not know the notion of rapidity in physics, we could potentially discover this concept right from KANs without trial-and-error symbolic manipulations. The interpretability of KANs which can facilitate scientific discovery is the main topic in Section~\ref{sec:kan_interpretability_experiment}.



\subsection{Solving partial differential equations}\label{subsec:pde}

\begin{figure}[t]
    \centering
    \includegraphics[width=1\linewidth]{./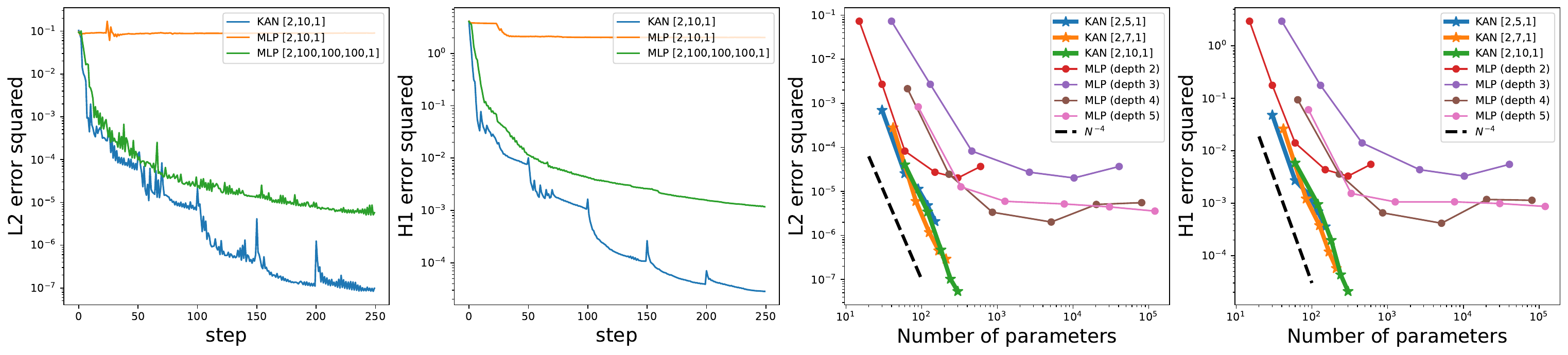}
    \caption{The PDE example. We plot L2 squared and H1 squared losses between the predicted solution and ground truth solution. First and second: training dynamics of losses. Third and fourth: scaling laws of losses against the number of parameters. KANs converge faster, achieve lower losses, and have steeper scaling laws than MLPs.}
    \label{fig:PDE}
\end{figure}

We consider a Poisson equation with zero Dirichlet boundary data. For $\Omega=[-1,1]^2$, consider the PDE 
\begin{equation}
    \begin{aligned}
        u_{xx}+u_{yy}&=f\quad \text{in}\,\,\Omega\,,\\u&=0\quad \text{on}\,\,\partial\Omega\,.
    \end{aligned}
\end{equation}
We consider the data $f=-\pi^2(1+4y^2)\sin(\pi x)\sin(\pi y^2)+2\pi\sin(\pi x)\cos(\pi y^2)$ for which $u=\sin(\pi x)\sin(\pi y^2)$ is the true solution. We use the framework of physics-informed neural networks (PINNs) \cite{raissi2019physics, karniadakis2021physics} to solve this PDE, with the loss function given by $$\text{loss}_{\text{pde}}=\alpha\text{loss}_i+\text{loss}_b\coloneqq\alpha\frac{1}{n_i}\sum_{i=1}^{n_i}|u_{xx}(z_i)+u_{yy}(z_i)-f(z_i)|^2+\frac{1}{n_b}\sum_{i=1}^{n_b}u^2\,,$$
where we use $\text{loss}_i$ to denote the interior loss, discretized and evaluated by a uniform sampling of $n_i$ points $z_i=(x_i,y_i)$ inside the domain, and similarly we use $\text{loss}_b$ to denote the boundary loss, discretized and evaluated by a uniform sampling of $n_b$ points on the boundary. $\alpha$ is the hyperparameter balancing the effect of the two terms.

We compare the KAN architecture with that of MLPs using the same hyperparameters $n_i=10000$, $n_b=800$, and $\alpha=0.01$. We measure both the error in the $L^2$ norm and energy ($H^1$) norm and see that KAN achieves a much better scaling law with a smaller error, using smaller networks and fewer parameters; see Figure \ref{fig:PDE}. A 2-Layer width-10 KAN is 100 times more accurate than a 4-Layer width-100 MLP ($10^{-7}$ vs $10^{-5}$ MSE) and 100 times more parameter efficient ($10^2$ vs $10^4$ parameters).  Therefore we speculate that KANs might have the potential of serving as a good neural network representation for model reduction of PDEs. However, we want to note that our implementation of KANs are typically 10x slower than MLPs to train. The ground truth being a symbolic formula might be an unfair comparison for MLPs since KANs are good at representing symbolic formulas.   In general, KANs and MLPs are good at representing different function classes of PDE solutions, which needs detailed future study to understand their respective boundaries.

\subsection{Continual Learning}\label{subsec:continual-learning}

\begin{figure}[tbp]
    \centering
    \includegraphics[width=1\linewidth]{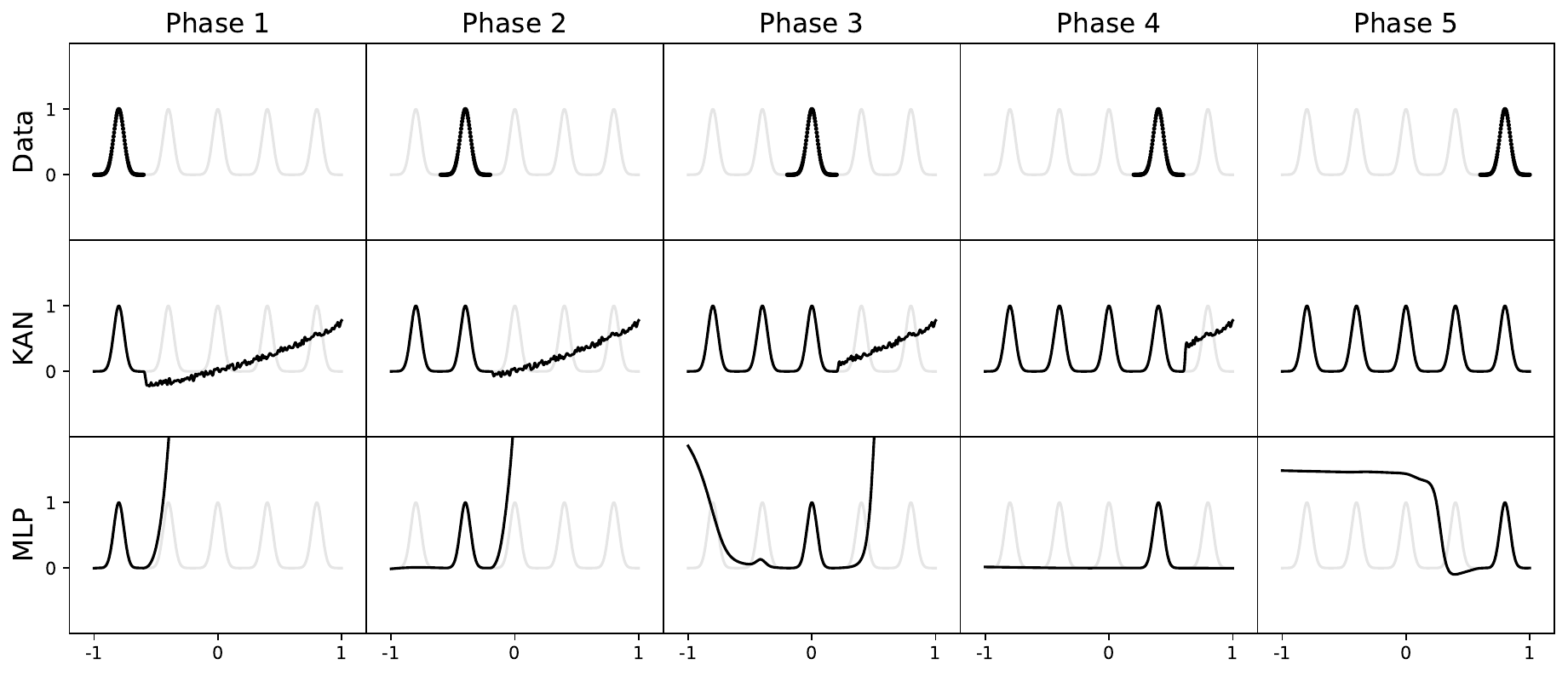}
    \caption{A toy continual learning problem. The dataset is a 1D regression task with 5 Gaussian peaks (top row). Data around each peak is presented sequentially (instead of all at once) to KANs and MLPs. KANs (middle row) can perfectly avoid catastrophic forgetting, while MLPs (bottom row) display severe catastrophic forgetting. }
    \label{fig:continual-learning}
\end{figure}

Catastrophic forgetting is a serious problem in current machine learning~\cite{kemker2018measuring}. When a human masters a task and switches to another task, they do not forget how to perform the first task. Unfortunately, this is not the case for neural networks. When a neural network is trained on task~1 and then shifted to being trained on task~2, the network will soon forget about how to perform task~1. A key difference between artificial neural networks and human brains is that human brains have functionally distinct modules placed locally in space. When a new task is learned, structure re-organization only occurs in local regions responsible for relevant skills~\cite{kolb1998brain,meunier2010modular}, leaving other regions intact. Most artificial neural networks, including MLPs, do not have this notion of locality, which is probably the reason for catastrophic forgetting. 

We show that KANs have local plasticity and can avoid catastrophic forgetting by leveraging the locality of splines. The idea is simple: since spline bases are local, a sample will only affect a few nearby spline coefficients, leaving far-away coefficients intact (which is desirable since far-away regions may have already stored information that we want to preserve). By contrast, since MLPs usually use global activations, e.g., ReLU/Tanh/SiLU etc., any local change may propagate uncontrollably to regions far away, destroying the information being stored there.

We use a toy example to validate this intuition. The 1D regression task is composed of 5 Gaussian peaks. Data around each peak is presented sequentially (instead of all at once) to KANs and MLPs, as shown in Figure~\ref{fig:continual-learning} top row. KAN and MLP predictions after each training phase are shown in the middle and bottom rows. As expected, KAN only remodels regions where data is present on in the current phase, leaving previous regions unchanged. By contrast, MLPs remodels the whole region after seeing new data samples, leading to catastrophic forgetting. 

Here we simply present our preliminary results on an extremely simple example, to demonstrate how one could possibly leverage locality in KANs (thanks to spline parametrizations) to reduce catastrophic forgetting. However, it remains unclear whether our method can generalize to more realistic setups, especially in high-dimensional cases where it is unclear how to define ``locality''. In future work, We would also like to study how our method can be connected to and combined with SOTA methods in continual learning~\cite{kirkpatrick2017overcoming,lu2024revisiting}.

\section{KANs are interpretable}\label{sec:kan_interpretability_experiment}





In this section, we show that KANs are interpretable and interactive thanks to the techniques we developed in Section~\ref{subsec:kan_simplification}.  We want to test the use of KANs not only on synthetic tasks (Section~\ref{subsec:supervised-interpretable} and~\ref{subsec:unsupervised-interpretable}), but also in real-life scientific research. We demonstrate that KANs can (re)discover both highly non-trivial relations in knot theory  (Section~\ref{subsec:knot}) and phase transition boundaries in condensed matter physics (Section~\ref{subsec:anderson}). KANs could potentially be the foundation model for AI + Science due to their accuracy (last section) and interpretability (this section).

\subsection{Supervised toy datasets}\label{subsec:supervised-interpretable}

\begin{figure}[tbp]
    \centering
    \includegraphics[width=1\linewidth]{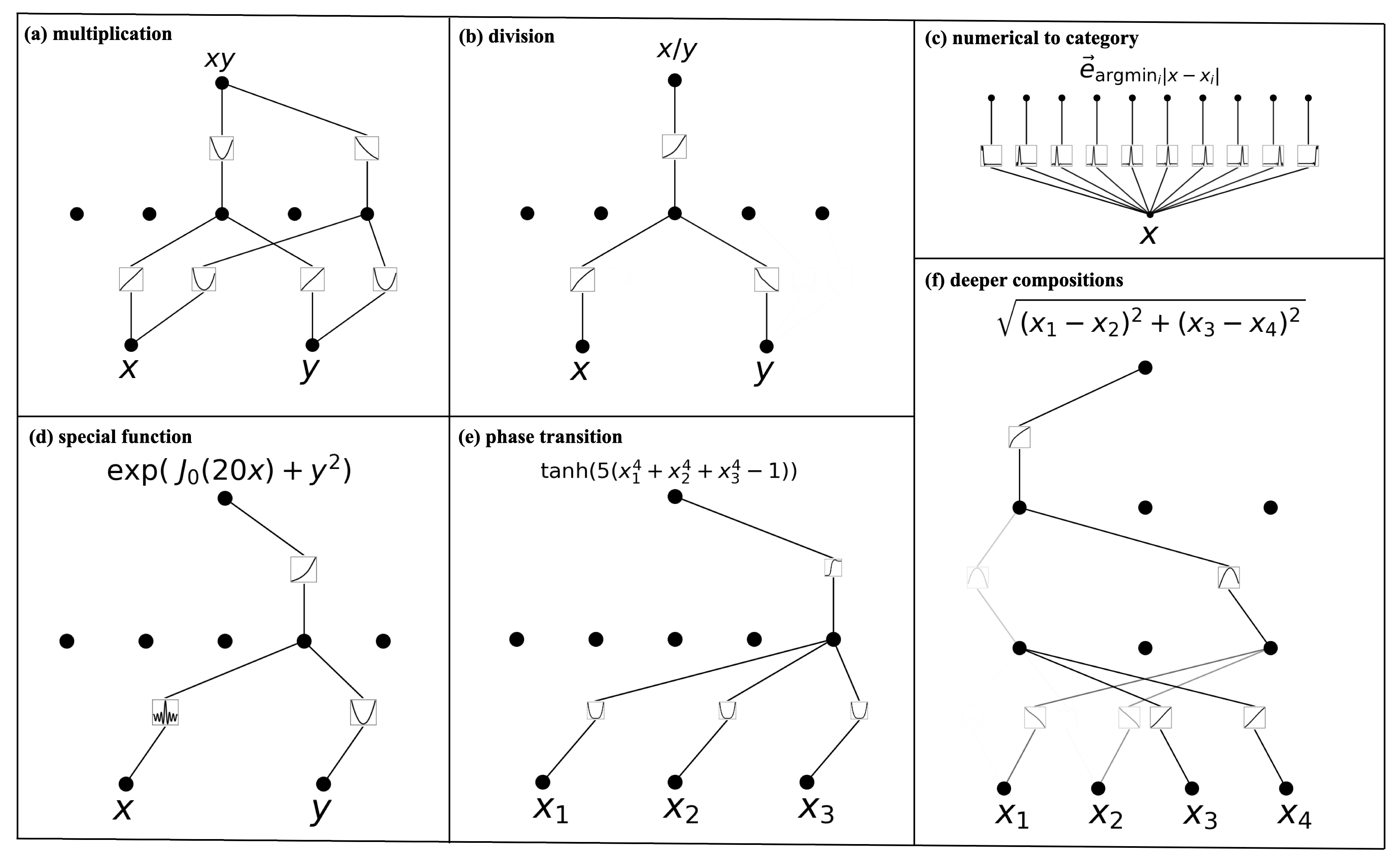}
    \caption{KANs are interepretable for simple symbolic tasks}
    \label{fig:interpretable_examples}
\end{figure}

We first examine KANs' ability to reveal the compositional structures in symbolic formulas. Six examples are listed below and their KANs are visualized in Figure~\ref{fig:interpretable_examples}. KANs are able to reveal the compositional structures present in these formulas, as well as learn the correct univariate functions. 

\begin{enumerate}[(a)]
    \item Multiplication $f(x,y)=xy$. A $[2,5,1]$ KAN is pruned to a $[2,2,1]$ KAN. The learned activation functions are linear and quadratic. From the computation graph, we see that the way it computes $xy$ is leveraging $2xy=(x+y)^2-(x^2+y^2)$. 
    \item Division of positive numbers $f(x,y)=x/y$. A $[2,5,1]$ KAN is pruned to a  $[2,1,1]$ KAN. The learned activation functions are logarithmic and exponential functions, and the KAN is computing $x/y$ by leveraging the identity $x/y={\exp}({\log}x-{\log}y)$.
    \item Numerical to categorical. The task is to convert a real number in $[0,1]$ to its first decimal digit (as one hots), e.g., $0.0618\to [1,0,0,0,0,\cdots]$, $0.314\to [0,0,0,1,0,\cdots]$. Notice that activation functions are learned to be spikes located around the corresponding decimal digits.
    \item Special function $f(x,y)={\rm exp}(J_0(20x)+y^2)$. One limitation of symbolic regression is that it will never find the correct formula of a special function if the special function is not provided as prior knowledge. KANs can learn special functions -- the highly wiggly Bessel function $J_0(20x)$ is learned (numerically) by KAN. 
    \item Phase transition $f(x_1,x_2,x_3)={\rm tanh}(5(x_1^4+x_2^4+x_3^4-1))$. Phase transitions are of great interest in physics, so we want KANs to be able to detect phase transitions and to identify the correct order parameters. We use the tanh function to simulate the phase transition behavior, and the order parameter is the combination of the quartic terms of $x_1, x_2, x_3$. Both the quartic dependence and tanh dependence emerge after KAN training. This is a simplified case of a localization phase transition discussed in Section~\ref{subsec:anderson}. 
    \item Deeper compositions $f(x_1,x_2,x_3,x_4)=\sqrt{(x_1-x_2)^2+(x_3-x_4)^2}$. To compute this, we would need the identity function, squared function, and square root, which requires at least a three-layer KAN. Indeed, we find that a $[4,3,3,1]$ KAN can be auto-pruned to a $[4,2,1,1]$ KAN, which exactly corresponds to the computation graph we would expect. 
\end{enumerate}
More examples from the Feynman dataset and the special function dataset are visualized in Figure~\ref{fig:best-feynman-kan},~\ref{fig:minimal-feynman-kan},~\ref{fig:best-special-kan},~\ref{fig:minimal-special-kan} in Appendices~\ref{app:feynman_kans} and~\ref{app:special_kans}.

\subsection{Unsupervised toy dataset}\label{subsec:unsupervised-interpretable}

Often, scientific discoveries are formulated as supervised learning problems, i.e., given input variables $x_1,x_2,\cdots,x_d$ and output variable(s) $y$, we want to find an interpretable function $f$ such that $y\approx f(x_1,x_2,\cdots,x_d)$. However, another type of scientific discovery can be formulated as unsupervised learning, i.e., given a set of variables $(x_1,x_2,\cdots,x_d)$, we want to discover a structural relationship between the variables. Specifically, we want to find a non-zero $f$ such that 
\begin{align}
    f(x_1,x_2,\cdots,x_d)\approx 0.
\end{align}
For example, consider a set of features $(x_1,x_2,x_3)$ that satisfies $x_3={\rm exp}({\rm sin}(\pi x_1)+x_2^2)$. Then a valid $f$ is $f(x_1,x_2,x_3)={\rm sin}(\pi x_1)+x_2^2-{\rm log}(x_3)=0$, implying that points of $(x_1,x_2,x_3)$ form a 2D submanifold specified by $f=0$ instead of filling the whole 3D space.

If an algorithm for solving the unsupervised problem can be devised, it has a considerable advantage over the supervised problem, since it requires only the sets of features $S=(x_1,x_2,\cdots,x_d)$. The supervised problem, on the other hand, tries to predict subsets of features in terms of the others, i.e. it splits $S=S_\text{in} \cup S_\text{out}$ into input and output features of the function to be learned. Without domain expertise to advise the splitting, there are $2^d-2$ possibilities such that $|S_\text{in}|>0$ and $|S_\text{out}|>0$. This exponentially large space of supervised problems can be avoided by using the unsupervised approach.
This unsupervised learning approach will be valuable to the knot dataset in Section~\ref{subsec:knot}. 
A Google Deepmind team~\cite{davies2021advancing} manually chose signature to be the target variable, otherwise they would face this combinatorial problem described above.
This raises the question whether we can instead tackle the unsupervised learning directly. We present our method and a toy example below. 

We tackle the unsupervised learning problem by turning it into a supervised learning problem on all of the $d$ features, without requiring the choice of a splitting. The essential idea is to learn a function $f(x_1,\dots,x_d)=0$ such that $f$ is not the $0$-function.  To do this, similar to contrastive learning, we define positive samples and negative samples: positive samples are feature vectors of real data. Negative samples are constructed by feature corruption. To ensure that the overall feature distribution for each topological invariant stays the same, we perform feature corruption by random permutation of each feature across the entire training set. Now we want to train a network $g$ such that $g(\mat{x}_{\rm real})=1$ and $g(\mat{x}_{\rm fake})=0$ which turns the problem into a supervised problem. However, remember that we originally want $f(\mat{x}_{\rm real})=0$ and $f(\mat{x}_{\rm fake})\neq 0$. We can achieve this by having $g=\sigma\circ f$ where $\sigma(x)={\rm exp}(-\frac{x^2}{2w^2})$ is a Gaussian function with a small width $w$, which can be conveniently realized by a KAN with shape $[..., 1, 1]$ whose last activation is set to be the Gaussian function $\sigma$ and all previous layers form $f$. Except for the modifications mentioned above, everything else is the same for supervised training.

\begin{figure}[t]
    \centering
    \includegraphics[width=0.6\linewidth]{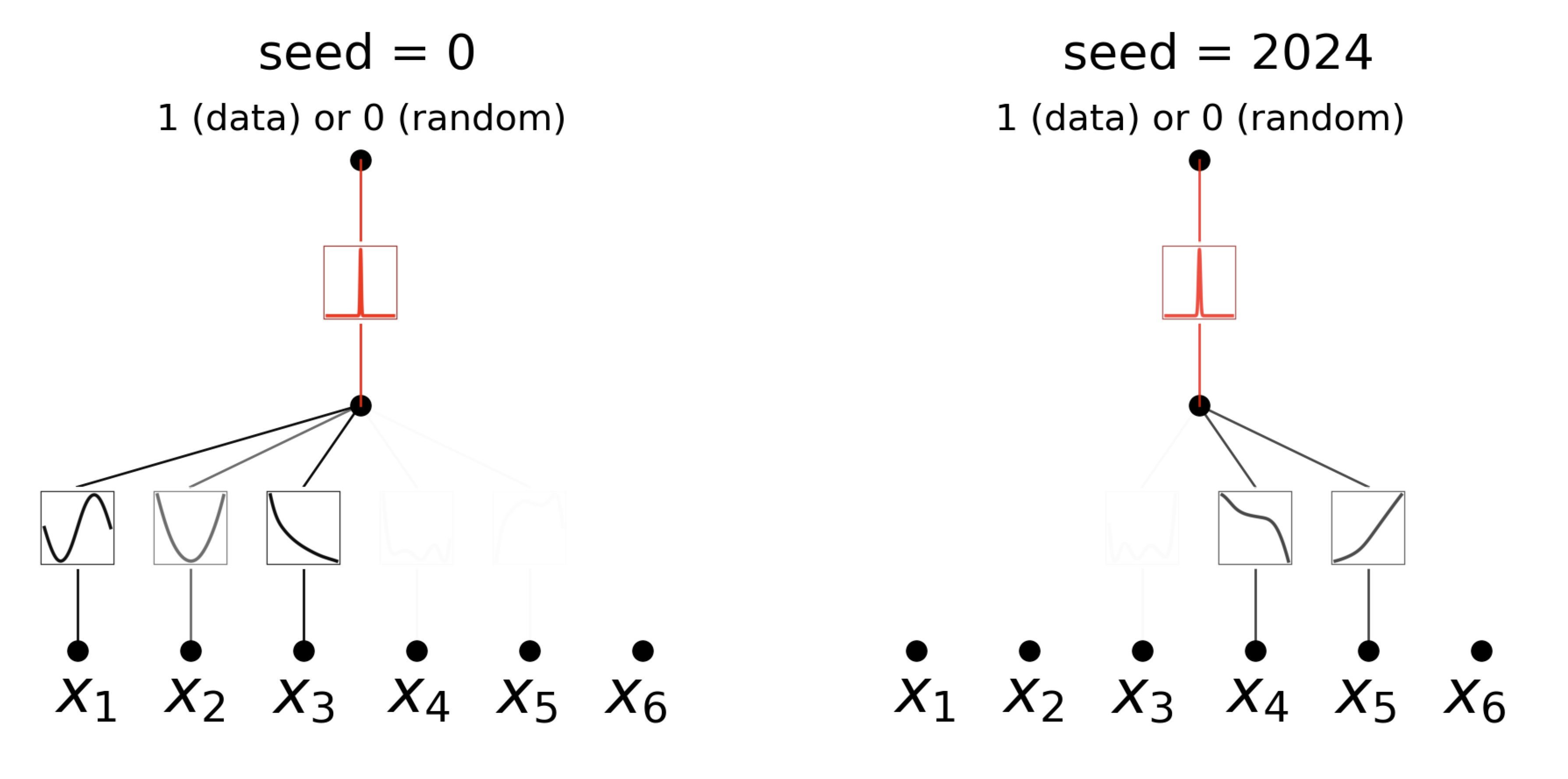}
    \caption{Unsupervised learning of a toy task. KANs can identify groups of dependent variables, i.e., $(x_1,x_2,x_3)$ and $(x_4,x_5)$ in this case.}
    \label{fig:unsupervised-toy}
\end{figure}

Now we demonstrate that the unsupervised paradigm works for a synthetic example. Let us consider a 6D dataset, where $(x_1,x_2,x_3)$ are dependent variables such that $x_3=\exp(\sin(x_1)+x_2^2)$; $(x_4,x_5)$ are dependent variables with $x_5=x_4^3$; $x_6$ is independent of the other variables. In Figure~\ref{fig:unsupervised-toy}, we show that for seed = 0, KAN reveals the functional dependence among $x_1$,$x_2$, and $x_3$; for another seed = 2024, KAN reveals the functional dependence between $x_4$ and $x_5$. Our preliminary results rely on randomness (different seeds) to discover different relations; in the future we would like to investigate a more systematic and more controlled way to discover a complete set of relations. Even so, our tool in its current status can provide insights for scientific tasks. We present our results with the knot dataset in Section~\ref{subsec:knot}.

\subsection{Application to Mathematics: Knot Theory}\label{subsec:knot}

Knot theory is a subject in low-dimensional topology that sheds light on topological aspects of three-manifolds and four-manifolds and has a variety of applications, including in biology and topological quantum computing. Mathematically, a knot $K$ is an embedding of $S^1$ into $S^3$. Two knots $K$ and $K'$ are topologically equivalent if one can be deformed into the other via deformation of the ambient space $S^3$, in which case we write $[K]=[K']$. Some knots are topologically trivial, meaning that they can be smoothly deformed to a standard circle. Knots have a variety of deformation-invariant features $f$ called topological invariants, which may be used to show that two knots are topologically inequivalent, $[K]\neq [K']$ if $f(K) \neq f(K')$. In some cases the topological invariants are geometric in nature. For instance, a hyperbolic knot $K$ has a knot complement $S^3\setminus K$ that admits a canonical hyperbolic metric $g$ such that $\text{vol}_g(K)$ is a topological invariant known as the hyperbolic volume. Other topological invariants are algebraic in nature, such as the Jones polynomial. 

Given the fundamental nature of knots in mathematics and the importance of its applications, it is interesting to study whether ML can lead to new results. For instance, in \cite{gukov2023searching} reinforcement learning was utilized to establish ribbonness of certain knots, which ruled out many potential counterexamples to the smooth 4d Poincar\'e conjecture.

{\bf Supervised learning} In \cite{davies2021advancing}, supervised learning and human domain experts were utilized to arrive at a new theorem relating algebraic and geometric knot invariants. In this case, gradient saliency identified key invariants for the supervised problem, which led the domain experts to make a conjecture that was subsequently refined and proven. We study whether a KAN can achieve good interpretable results on the same problem, which predicts the signature of a knot. Their main results from studying the knot theory dataset are: 
\begin{enumerate}[(1)]
    \item They use network attribution methods to find that the signature $\sigma$ is mostly dependent on meridinal distance $\mu$ (real $\mu_r$, imag $\mu_i$) and longitudinal distance $\lambda$.
    \item Human scientists later identified that $\sigma$ has high correlation with the ${\rm slope}\equiv {\rm Re}(\frac{\lambda}{\mu})=\frac{\lambda\mu_r}{\mu_r^2+\mu_i^2}$ and derived a bound for $|2\sigma-{\rm slope}|$.
\end{enumerate}
We show below that KANs not only rediscover these results with much smaller networks and much more automation, but also present some interesting new results and insights.

\begin{figure}[t]
    \centering\includegraphics[width=1.0\linewidth]{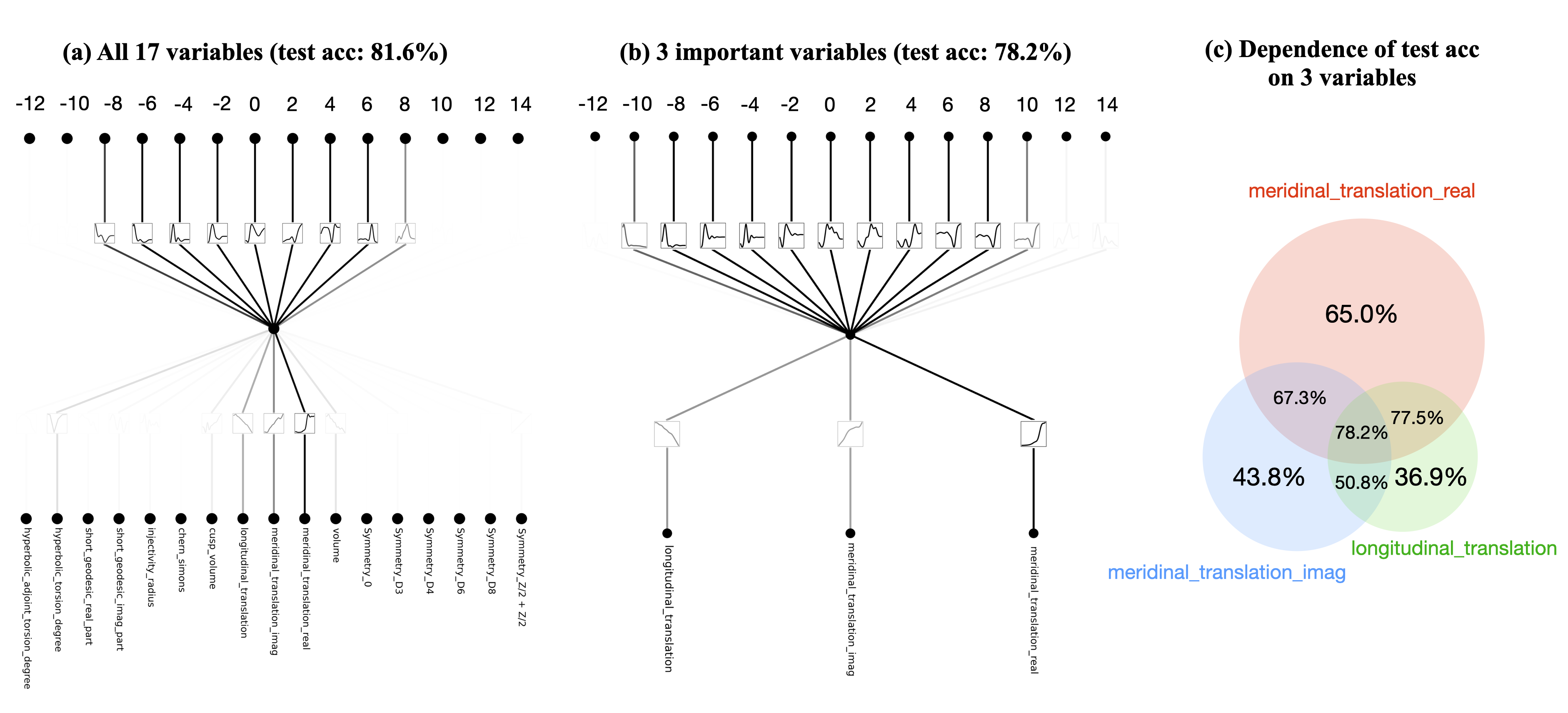}
    \caption{Knot dataset, supervised mode. With KANs, we rediscover Deepmind's results that signature is mainly dependent on meridinal translation (real and imaginary parts).}
    \label{fig:knot-supervised}
\end{figure}

To investigate (1), we treat 17 knot invariants as inputs and signature as outputs. Similar to the setup in~\cite{davies2021advancing}, signatures (which are even numbers) are encoded as one-hot vectors and networks are trained with cross-entropy loss. We find that an extremely small $[17,1,14]$ KAN is able to achieve $81.6\%$ test accuracy (while Deepmind's 4-layer width-300 MLP achieves 78\% test accuracy). The $[17,1,14]$ KAN ($G=3$, $k=3$) has $\approx 200$ parameters, while the MLP has $\approx 3\times 10^5$ parameters, shown in Table~\ref{tab:math-compare}. It is remarkable that KANs can be both more accurate and much more parameter efficient than MLPs at the same time. In terms of interpretability, we scale the transparency of each activation according to its magnitude, so it becomes immediately clear which input variables are important without the need for feature attribution (see Figure~\ref{fig:knot-supervised} left): signature is mostly dependent on $\mu_r$, and slightly dependent on $\mu_i$ and $\lambda$, while dependence on other variables is small. We then train a $[3,1,14]$ KAN on the three important variables, obtaining test accuracy $78.2\%$. Our results have one subtle difference from results in~\cite{davies2021advancing}: they find that signature is mostly dependent on $\mu_i$, while we find that signature is mostly dependent on $\mu_r$. This difference could be due to subtle algorithmic choices, but has led us to carry out the following experiments: (a) ablation studies. We show that $\mu_r$ contributes more to accuracy than $\mu_i$ (see Figure~\ref{fig:knot-supervised}): for example, $\mu_r$ alone can achieve $65.0\%$ accuracy, while $\mu_i$ alone can only achieve $43.8\%$ accuracy. (b) We find a symbolic formula (in Table~\ref{tab:knot_sf}) which only involves $\mu_r$ and $\lambda$, but can achieve $77.8\%$ test accuracy.

\begin{table}[tbp]
    \centering
  \begin{tabular}{|c|c|c|c|}\hline
     Method & Architecture & Parameter Count & Accuracy \\\hline
    Deepmind's MLP & 4 layer, width-300  & $3\times 10^5$ &  $78.0\%$ 
     \\\hline
    KANs & 2 layer, $[17,1,14]$ ($G=3$, $k=3$) & $2\times 10^2$ &  $81.6\%$ \\\hline
    \end{tabular}
    \vspace{2mm}
    \caption{KANs can achieve better accuracy than MLPs with much fewer parameters in the signature classification problem. Soon after our preprint was first released, Prof. Shi Lab from Georgia tech discovered that an MLP with only 60 parameters is sufficient to achieve 80\% accuracy (public but unpublished results). This is good news for AI + Science because this means perhaps many AI + Science tasks are not that computationally demanding than we might think (either with MLPs or with KANs), hence many new scientific discoveries are possible even on personal laptops.}
    \label{tab:math-compare}
\end{table}

To investigate (2), i.e., obtain the symbolic form of $\sigma$, we formulate the problem as a regression task. Using auto-symbolic regression introduced in Section~\ref{subsubsec:simplification}, we can convert a trained KAN into symbolic formulas. We train KANs with shapes $[3,1]$, $[3,1,1]$, $[3,2,1]$, whose corresponding symbolic formulas are displayed in Table~\ref{tab:knot_sf} B-D. It is clear that by having a larger KAN, both accuracy and complexity increase. So KANs provide not just a single symbolic formula, but a whole Pareto frontier of formulas, trading off simplicity and accuracy. However, KANs need additional inductive biases to further simplify these equations to rediscover the formula from~\cite{davies2021advancing} (Table~\ref{tab:knot_sf} A). We have tested two scenarios: (1) in the first scenario, we assume the ground truth formula has a multi-variate Pade representation (division of two multi-variate Taylor series). We first train $[3,2,1]$ and then fit it to a Pade representation. We can obtain Formula E in Table~\ref{tab:knot_sf}, which bears similarity with Deepmind's formula. (2) We hypothesize that the division is not very interpretable for KANs, so we train two KANs (one for the numerator and the other for the denominator) and divide them manually. Surprisingly, we end up with the formula F (in Table~\ref{tab:knot_sf}) which only involves $\mu_r$ and $\lambda$, although $\mu_i$ is also provided but ignored by KANs. 

\begin{table}[t]
    \centering
    \resizebox{\columnwidth}{!}{%
    \renewcommand{\arraystretch}{1.7}
    \begin{tabular}{|c|p{8cm}|c|c|c|c|}\hline
    Id & Formula  &  Discovered by & \makecell{test \\ acc} & \makecell{$r^2$ with \\ Signature} &   \makecell{$r^2$ with DM \\ formula} \\\hline
    A & $\frac{\lambda\mu_r}{(\mu_r^2+\mu_i^2)}$ & \makecell{Human (DM)} & 83.1\%  & 0.946 & 1 \\\hline 
    B & $-0.02{\rm sin}(4.98\mu_i+0.85)+0.08|4.02\mu_r+6.28|-0.52-0.04e^{-0.88(1-0.45\lambda)^2}$ & $[3,1]$ KAN & 62.6\% & 0.837 & 0.897 \\\hline
    C & $0.17{\rm tan}(-1.51+0.1e^{-1.43(1-0.4\mu_i)^2+0.09e^{-0.06(1-0.21\lambda)^2}}+1.32e^{-3.18(1-0.43\mu_r)^2})$   & $[3,1,1]$ KAN   & 71.9\%  &  0.871  &  0.934\\\hline
    D & $-0.09+1.04{\rm exp}(-9.59(-0.62{\rm sin}(0.61\mu_r+7.26))-0.32{\rm tan}(0.03\lambda-6.59)+1-0.11e^{-1.77(0.31-\mu_i)^2)^2}-1.09e^{-7.6(0.65(1-0.01\lambda)^3}+0.27{\rm atan}(0.53\mu_i-0.6)+0.09+{\rm exp}(-2.58(1-0.36\mu_r)^2))$ & $[3,2,1]$ KAN    &    84.0\%   &   0.947    &   0.997   \\\hline
    E & $\frac{4.76\lambda\mu_r}{3.09\mu_i+6.05\mu_r^2+3.54\mu_i^2}$ & \makecell{[3,2,1] KAN \\ + Pade approx} & $82.8\%$ & 0.946 & 0.997 \\\hline
    F & $\frac{2.94-2.92(1-0.10\mu_r)^2}{0.32(0.18-\mu_r)^2+5.36(1-0.04\lambda)^2+0.50}$ & $[3,1]$ KAN/$[3,1]$ KAN & 77.8\% & 0.925 & 0.977 \\\hline
    \end{tabular}}
    \vskip 0.2cm
    \caption{Symbolic formulas of signature as a function of meridinal translation $\mu$ (real $\mu_r$, imag $\mu_i$) and longitudinal translation $\lambda$. In~\cite{davies2021advancing}, formula A was discovered by human scientists inspired by neural network attribution results. Formulas B-F are auto-discovered by KANs. KANs can trade-off between simplicity and accuracy (B, C, D). By adding more inductive biases, KAN is able to discover formula E which is not too dissimilar from formula A. KANs also discovered a formula F which only involves two variables ($\mu_r$ and $\lambda$) instead of all three variables, with little sacrifice in accuracy.}
    \label{tab:knot_sf}
\end{table}

So far, we have rediscovered the main results from~\cite{davies2021advancing}. It is remarkable to see that KANs made this discovery very intuitive and convenient. Instead of using feature attribution methods (which are great methods), one can instead simply stare at visualizations of KANs. Moreover, automatic symbolic regression also makes the discovery of symbolic formulas much easier. 

In the next part, we propose a new paradigm of ``AI for Math'' not included in the Deepmind paper, where we aim to use KANs' unsupervised learning mode to discover more relations (besides signature) in knot invariants.





\begin{figure}[t]
    \centering\includegraphics[width=1.0\linewidth]{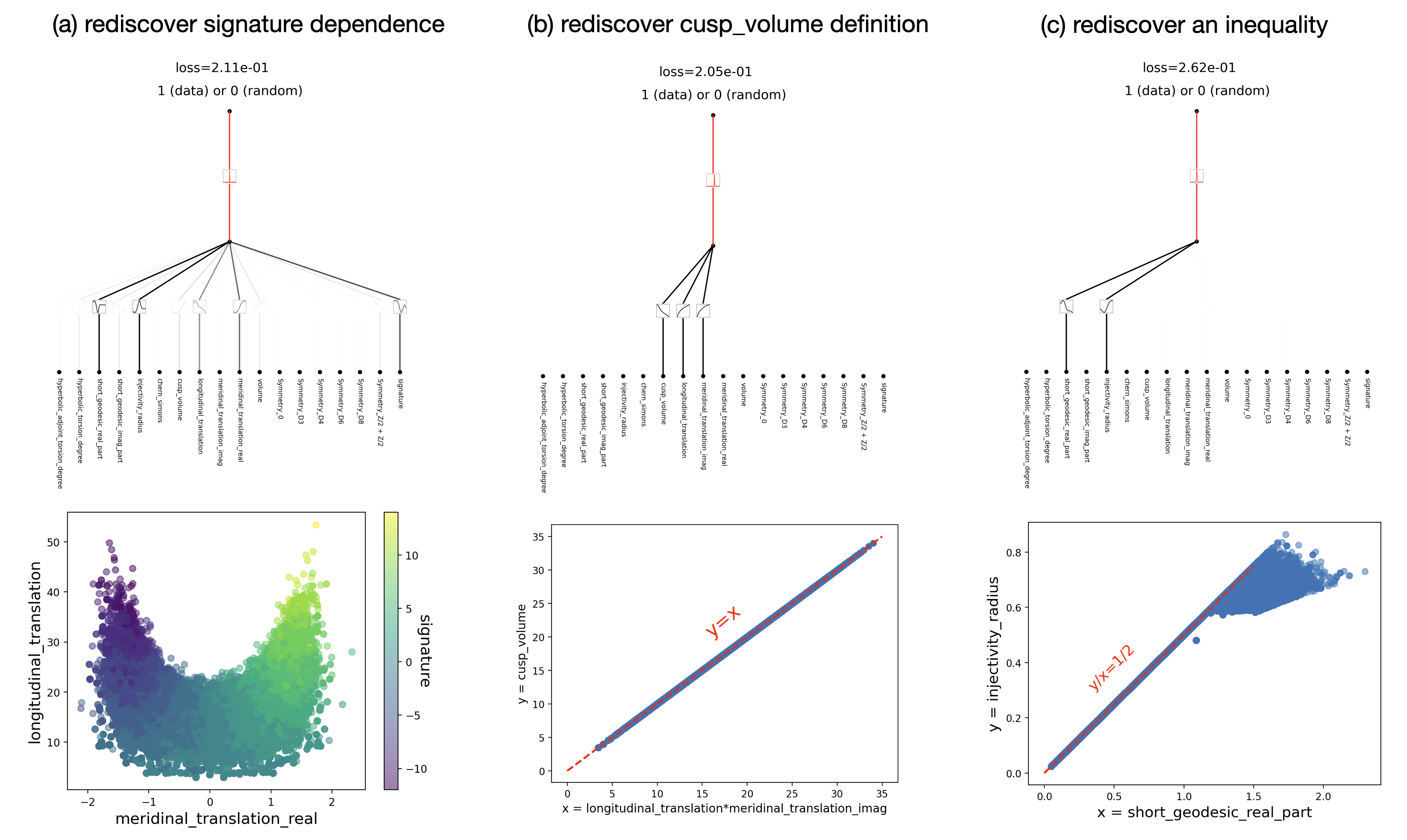}
    \caption{Knot dataset, unsupervised mode. With KANs, we rediscover three mathematical relations in the knot dataset.}
    \label{fig:knot-unsupervised}
\end{figure}

{\bf Unsupervised learning} As we mentioned in Section~\ref{subsec:unsupervised-interpretable}, unsupervised learning is the setup that is more promising since it avoids manual partition of input and output variables which have combinatorially many possibilities. In the unsupervised learning mode, we treat all 18 variables (including signature) as inputs such that they are on the same footing. Knot data are positive samples, and we randomly shuffle features to obtain negative samples. 
An $[18,1,1]$ KAN is trained to classify whether a given feature vector belongs to a positive sample (1) or a negative sample (0). We manually set the second layer activation to be the Gaussian function with a peak one centered at zero, so positive samples will have activations at (around) zero, implicitly giving a relation among knot invariants $\sum_{i=1}^{18} g_i(x_i)=0$ where $x_i$ stands for a feature (invariant), and $g_i$ is the corresponding activation function which can be readily read off from KAN diagrams. We train the KANs with $\lambda=\{10^{-2},10^{-3}\}$ to favor sparse combination of inputs, and ${\rm seed}=\{0,1,\cdots,99\}$. All 200 networks can be grouped into three clusters, with representative KANs displayed in Figure~\ref{fig:knot-unsupervised}. These three groups of dependent variables are: 
\begin{enumerate}[(1)]
    \item The first group of dependent variables is signature, real part of meridinal distance, and longitudinal distance (plus two other variables which can be removed because of (3)). This is the signature dependence studied above, so it is very interesting to see that this dependence relation is rediscovered again in the unsupervised mode. 
    \item  The second group of variables involve cusp volume $V$, real part of meridinal translation $\mu_r$ and longitudinal translation $\lambda$. Their activations all look like logarithmic functions (which can be verified by the implied symbolic functionality in Section~\ref{subsubsec:simplification}). So the relation is $-\log V+\log \mu_r+\log \lambda=0$ which is equivalent to $V=\mu_r\lambda$, which is true by definition. It is, however, reassuring that we discover this relation without any prior knowledge.
    \item The third group of variables includes the real part of short geodesic $g_r$ and injectivity radius. Their activations look qualitatively the same but differ by a minus sign, so it is conjectured that these two variables have a linear correlation. We plot 2D scatters, finding that $2r$ upper bounds $g_r$, which is also a well-known relation \cite{petersen2006riemannian}. 
\end{enumerate}

It is interesting that KANs' unsupervised mode can rediscover several known mathematical relations. The good news is that the results discovered by KANs are probably reliable; the bad news is that we have not discovered anything new yet. It is worth noting that we have chosen a shallow KAN for simple visualization, but deeper KANs can probably find more relations if they exist. We would like to investigate how to discover more complicated relations with deeper KANs in future work. 

\subsection{Application to Physics: Anderson localization}\label{subsec:anderson}

Anderson localization is the fundamental phenomenon in which disorder in a quantum system leads to the localization of electronic wave functions, causing all transport to be ceased~\cite{anderson1958absence}. In one and two dimensions, scaling arguments show that all electronic eigenstates are exponentially localized for an infinitesimal amount of random disorder~\cite{thouless1972relation, abrahams1979scaling}. In contrast, in three dimensions, a critical energy forms a phase boundary that separates the extended states from the localized states, known as a mobility edge. The understanding of these mobility edges is crucial for explaining various fundamental phenomena such as the metal-insulator transition in solids~\cite{lagendijk2009fifty}, as well as localization effects of light in photonic devices~\cite{segev2013anderson, vardeny2013optics, john1987strong, lahini2009observation, vaidya2023reentrant}. It is therefore necessary to develop microscopic models that exhibit mobility edges to enable detailed investigations. Developing such models is often more practical in lower dimensions, where introducing quasiperiodicity instead of random disorder can also result in mobility edges that separate localized and extended phases. Furthermore, experimental realizations of analytical mobility edges can help resolve the debate on localization in interacting systems~\cite{de2016absence, li2015many}. Indeed, several recent studies have focused on identifying such models and deriving exact analytic expressions for their mobility edges~\cite{ME_an2021interactions, ME_biddle2010predicted, ME_duthie2021self, ME_ganeshan2015nearest, ME_wang2020one, ME_wang2021duality, ME_zhou2023exact}.

Here, we apply KANs to numerical data generated from quasiperiodic tight-binding models to extract their mobility edges. In particular, we examine three classes of models: the Mosaic model (MM)~\cite{ME_wang2020one}, the generalized Aubry-Andr\'e model (GAAM)~\cite{ME_ganeshan2015nearest} and the modified Aubry-Andr\'e model (MAAM)~\cite{ME_biddle2010predicted}. For the MM, we testify KAN's ability to accurately extract mobility edge as a 1D function of energy. For the GAAM, we find that the formula obtained from a KAN closely matches the ground truth. For the more complicated MAAM, we demonstrate yet another example of the symbolic interpretability of this framework. A user can simplify the complex expression obtained from KANs (and corresponding symbolic formulas) by means of a ``collaboration'' where the human generates hypotheses to obtain a better match (e.g., making an assumption of the form of certain activation function), after which KANs can carry out quick hypotheses testing.

To quantify the localization of states in these models, the inverse participation ratio (IPR) is commonly used. The IPR for the $k^{th}$ eigenstate, $\psi^{(k)}$, is given by
\begin{align}
    \text{IPR}_k = \frac{\sum_n |\psi^{(k)}_n|^4}{\left( \sum_n |\psi^{(k)}_n|^2\right)^2}
\end{align}
where the sum runs over the site index. Here, we use the related measure of localization -- the fractal dimension of the states, given by
\begin{align}
    D_k = -\frac{\log(\text{IPR}_k)}{\log(N)}
\end{align}
where $N$ is the system size. $D_k = 0 (1)$ indicates localized (extended) states.

{\bf Mosaic Model (MM)} We first consider a class of tight-binding models defined by the Hamiltonian~\cite{ME_wang2020one}
\begin{align}
    H = t\sum_n \left( c^\dag_{n+1} c_n + \text{H.c.}\right) + \sum_n V_n(\lambda, \phi) c^\dag_n c_n,
\end{align}
where $t$ is the nearest-neighbor coupling, $c_n (c^\dag_n)$ is the annihilation (creation) operator at site $n$ and the potential energy $V_n$ is given by
\begin{align}
    V_n(\lambda, \phi) = \begin{cases}
      \lambda\cos(2\pi nb + \phi) & j = m\kappa\\
      0, & \text{otherwise,}
    \end{cases}       
\end{align}
To introduce quasiperiodicity, we set $b$ to be irrational (in particular, we choose $b$ to be the golden ratio $\frac{1+\sqrt{5}}{2}$). $\kappa$ is an integer and the quasiperiodic potential occurs with interval $\kappa$. The energy ($E$) spectrum for this model generically contains extended and localized regimes separated by a mobility edge. Interestingly, a unique feature found here is that the mobility edges are present for an arbitrarily strong quasiperiodic potential (i.e. there are always extended states present in the system that co-exist with localized ones).

The mobility edge can be described by $g(\lambda,E)\equiv\lambda-|f_\kappa(E)|=0$. $g(\lambda,E)$ > 0 and $g(\lambda,E)$ < 0 correspond to localized and extended phases, respectively. Learning the mobility edge therefore hinges on learning the ``order parameter'' $g(\lambda, E)$. Admittedly, this problem can be tackled by many other theoretical methods for this class of models~\cite{ME_wang2020one}, but we will demonstrate below that our KAN framework is ready and convenient to take in assumptions and inductive biases from human users.

Let us assume a hypothetical user Alice, who is a new PhD student in condensed matter physics, and she is provided with a $[2,1]$ KAN as an assistant for the task. Firstly, she understands that this is a classification task, so it is wise to set the activation function in the second layer to be sigmoid by using the \texttt{fix\_symbolic} functionality. Secondly, she realizes that learning the whole 2D function $g(\lambda,E)$ is unnecessary because in the end she only cares about $\lambda=\lambda(E)$ determined by $g(\lambda,E)=0$. In so doing, it is reasonable to assume $g(\lambda,E)=\lambda-h(E)=0$. Alice simply sets the activation function of $\lambda$ to be linear by again using the \texttt{fix\_symbolic} functionality. Now Alice trains the KAN network and conveniently obtains the mobility edge, as shown in Figure~\ref{fig:mosaic-results}. Alice can get both intuitive qualitative understanding (bottom) and quantitative results (middle), which well match the ground truth (top).

\begin{figure}[t]
    \centering
    \includegraphics[width=1\linewidth]{./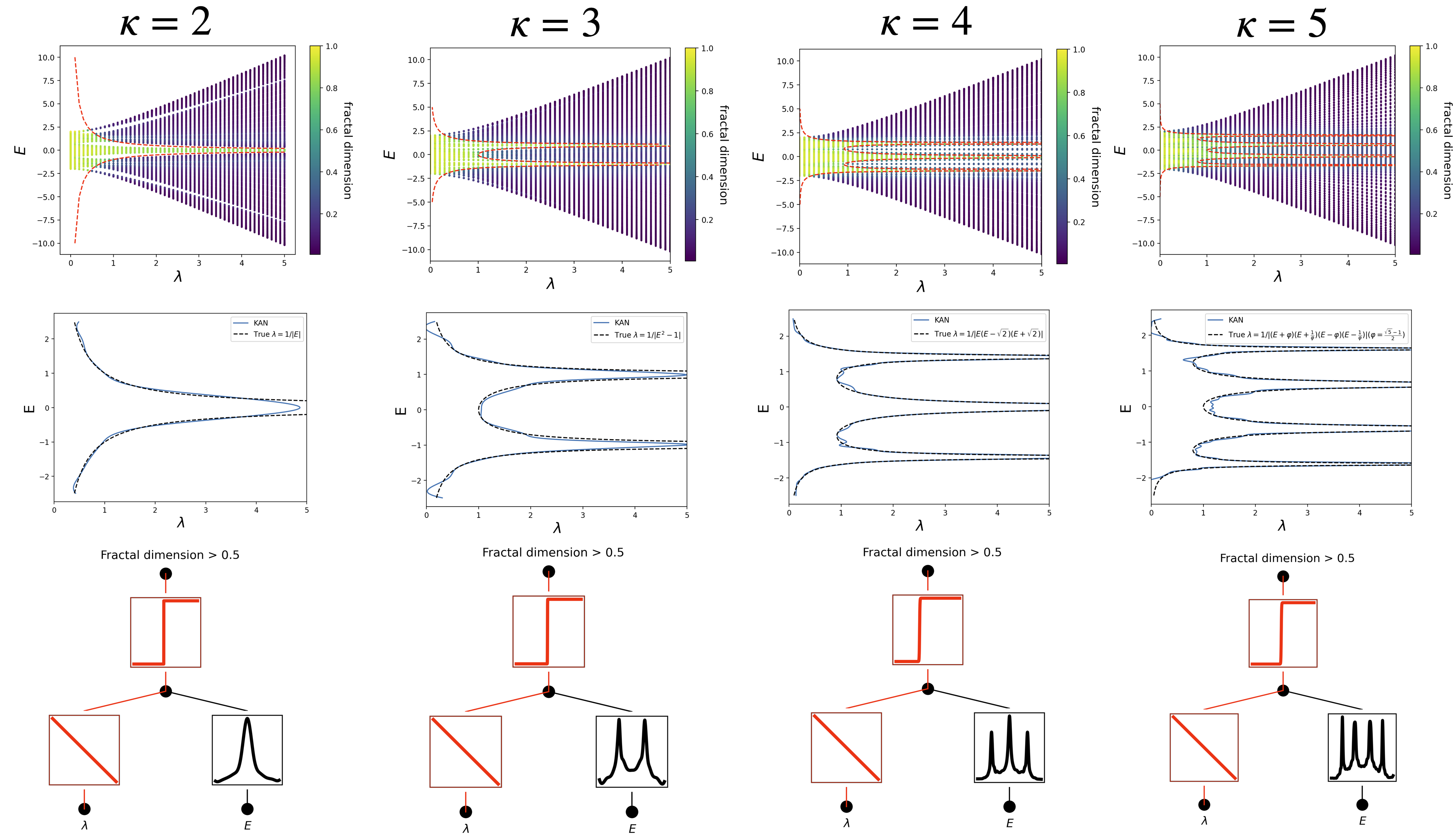}
    \caption{Results for the Mosaic Model. Top: phase diagram. Middle and Bottom: KANs can obtain both qualitative intuition (bottom) and extract quantitative results (middle). $\varphi = \frac{1+\sqrt{5}}{2}$ is the golden ratio.}
    \label{fig:mosaic-results}
\end{figure}

{\bf Generalized Andre-Aubry Model (GAAM)} We next consider a class of tight-binding models defined by the Hamiltonian~\cite{ME_ganeshan2015nearest}
\begin{align}
    H = t\sum_n \left( c^\dag_{n+1} c_n + \text{H.c.}\right) + \sum_n V_n(\alpha, \lambda, \phi) c^\dag_n c_n,
\end{align}
where $t$ is the nearest-neighbor coupling, $c_n (c^\dag_n)$ is the annihilation (creation) operator at site $n$ and the potential energy $V_n$ is given by
\begin{align}
    V_n(\alpha, \lambda, \phi) = 2\lambda \frac{\cos(2\pi n b + \phi)}{1-\alpha \cos(2\pi n b + \phi)},
\end{align}
which is smooth for $\alpha \in (-1, 1)$. To introduce quasiperiodicity, we again set $b$ to be irrational (in particular, we choose $b$ to be the golden ratio). As before, we would like to obtain an expression for the mobility edge. For these models, the mobility edge is given by the closed form expression~\cite{ME_ganeshan2015nearest, ME_wang2021duality},
\begin{align}\label{eq:gaam-me}
    \alpha E = 2(t-\lambda).
\end{align}

We randomly sample the model parameters: $\phi$, $\alpha$ and $\lambda$ (setting the energy scale $t=1$) and calculate the energy eigenvalues as well as the fractal dimension of the corresponding eigenstates, which forms our training dataset.

\begin{table}[t]
    \centering
    \resizebox{\columnwidth}{!}{%
    \renewcommand{\arraystretch}{1.7}
    \begin{tabular}{|c|c|p{12cm}|c|}\hline
    System & Origin  &  Mobility Edge Formula & Accuracy \\\hline
    \multirow{2}{*}{GAAM} & Theory & $\alpha E+2\lambda-2=0$ & 99.2\% \\\cline{2-4} 
     & \makecell{KAN auto} & $\cancel{1.52E^2}+21.06\alpha E+\cancel{0.66E}+\cancel{3.55\alpha^2}+\cancel{0.91\alpha}+45.13\lambda-54.45=0$ & 99.0\% \\\hline
     \multirow{6}{*}{MAAM} & \makecell{Theory}  & $E+{\rm exp}(p)-\lambda{\rm cosh}p=0$ & 98.6\%\\\cline{2-4}
     & \makecell{KAN auto}  & $13.99{\rm sin}(0.28{\rm sin}(0.87\lambda+2.22)-0.84{\rm arctan}(0.58E-0.26)+0.85{\rm arctan}(0.94p+0.13)-8.14)-16.74+43.08{\rm exp}(-0.93(0.06(0.13-p)^2-0.27{\rm tanh}(0.65E+0.25)+0.63{\rm arctan}(0.54\lambda-0.62)+1)^2)=0$  & 97.1\% \\\cline{2-4}
     & \makecell{KAN man (step 2) + auto}  & $4.19(0.28{\rm sin}(0.97\lambda+2.17)-0.77{\rm arctan}(0.83E-0.19)+{\rm arctan}(0.97p+0.15)-0.35)^2-28.93+39.27{\rm exp}(-0.6(0.28{\rm cosh}^2(0.49p-0.16)-0.34{\rm arctan}(0.65E+0.51)+0.83{\rm arctan}(0.54\lambda-0.62)+1)^2)=0$  & 97.7\% \\\cline{2-4}
     & \makecell{KAN man (step 3) + auto}  & $-4.63E-10.25(-0.94{\rm sin}(0.97\lambda-6.81)+{\rm tanh}(0.8p-0.45)+0.09)^2+11.78{\rm sin}(0.76p-1.41)+22.49{\rm arctan}(1.08\lambda-1.32)+31.72=0$ & 97.7\% \\\cline{2-4}
     & \makecell{KAN man (step 4A)}  & $6.92E-6.23(-0.92\lambda-1)^2+2572.45(-0.05\lambda+0.95{\rm cosh}(0.11p+0.4)-1)^2-12.96{\rm cosh}^2(0.53p+0.16)+19.89=0$ & 96.6\% \\\cline{2-4}
     & \makecell{KAN man (step 4B)}  & $7.25E-8.81(-0.83\lambda-1)^2-4.08(-p-0.04)^2+12.71(-0.71\lambda+(0.3p+1)^2-0.86)^2+10.29=0$ & 95.4\% \\\hline
    \end{tabular}}
    \vskip 0.2cm
    \caption{Symbolic formulas for two systems GAAM and MAAM, ground truth ones and KAN-discovered ones.}
    \label{tab:al_sf}
\end{table}

Here the ``order parameter'' to be learned is $g(\alpha,E,\lambda,\phi)=\alpha E+2(\lambda -1)$ and mobility edge corresponds to $g=0$. Let us again assume that Alice wants to figure out the mobility edge but only has access to IPR or fractal dimension data, so she decides to use KAN to help her with the task. Alice wants the model to be as small as possible, so she could either start from a large model and use auto-pruning to get a small model, or she could guess a reasonable small model based on her understanding of the complexity of the given problem. Either way, let us assume she arrives at a $[4,2,1,1]$ KAN. First, she sets the last activation to be sigmoid because this is a classification problem. She trains her KAN with some sparsity regularization to accuracy 98.7\% and visualizes the trained KAN in Figure~\ref{fig:al_complex} (a) step 1. She observes that $\phi$ is not picked up on at all, which makes her realize that the mobility edge is independent of $\phi$ (agreeing with Eq.~(\ref{eq:gaam-me})). In addition, she observes that almost all other activation functions are linear or quadratic, so she turns on automatic symbolic snapping, constraining the library to be only linear or quadratic. After that, she immediately gets a network which is already symbolic (shown in Figure~\ref{fig:al_complex} (a) step 2), with comparable (even slightly better) accuracy 98.9\%. By using \texttt{symbolic\_formula} functionality, Alice conveniently gets the symbolic form of $g$, shown in Table~\ref{tab:al_sf} GAAM-KAN auto (row three). Perhaps she wants to cross out some small terms and snap coefficient to small integers, which takes her close to the true answer.

This hypothetical story for Alice would be completely different if she is using a symbolic regression method. If she is lucky, SR can return the exact correct formula. However, the vast majority of the time SR does not return useful results and it is impossible for Alice to ``debug'' or interact with the underlying process of symbolic regression. Furthermore, Alice may feel uncomfortable/inexperienced to provide a library of symbolic terms as prior knowledge to SR before SR is run. By constrast in KANs, Alice does not need to put any prior information to KANs. She can first get some  clues by staring at a trained KAN and only then it is her job to decide which hypothesis she wants to make (e.g., ``all activations are linear or quadratic'') and implement her hypothesis in KANs. Although it is not likely for KANs to return the correct answer immediately, KANs will always return something useful, and Alice can collaborate with it to refine the results. 

{\bf Modified Andre-Aubry Model (MAAM)} The last class of models we consider is defined by the Hamiltonian~\cite{ME_biddle2010predicted}
\begin{align}
    H = \sum_{n\ne n'} te^{-p|n-n'|}\left( c^\dag_{n} c_{n'} + \text{H.c.}\right) + \sum_n V_n(\lambda, \phi) c^\dag_n c_n,
\end{align}
where $t$ is the strength of the exponentially decaying coupling in space, $c_n (c^\dag_n)$ is the annihilation (creation) operator at site $n$ and the potential energy $V_n$ is given by
\begin{align}
    V_n(\lambda, \phi) = \lambda \cos(2\pi n b + \phi),
\end{align}
As before, to introduce quasiperiodicity, we set $b$ to be irrational (the golden ratio). For these models, the mobility edge is given by the closed form expression~\cite{ME_biddle2010predicted},
\begin{align}\label{eq:maam-me}
    \lambda \cosh(p) = E + t = E + t_1{\rm exp}(p)
\end{align}
where we define $t_1\equiv t{\rm exp}(-p)$ as the nearest neighbor hopping strength, and we set $t_1=1$ below. 

Let us assume Alice wants to figure out the mobility edge for MAAM. This task is more complicated and requires more human wisdom. As in the last example, Alice starts from a $[4,2,1,1]$ KAN and trains it but gets an accuracy around 75\% which is less than acceptable. She then chooses a larger $[4,3,1,1]$ KAN and successfully gets 98.4\% which is acceptable (Figure~\ref{fig:al_complex} (b) step 1). Alice notices that $\phi$ is not picked up on by KANs, which means that the mobility edge is independent of the phase factor $\phi$ (agreeing with Eq.~(\ref{eq:maam-me})). If Alice turns on the automatic symbolic regression (using a large library consisting of exp, tanh etc.), she would get a complicated formula in Tabel~\ref{tab:al_sf}-MAAM-KAN auto, which has 97.1\% accuracy. However, if Alice wants to find a simpler symbolic formula, she will want to use the manual mode where she does the symbolic snapping by herself. Before that she finds that the $[4,3,1,1]$ KAN after training can then be pruned to be $[4,2,1,1]$, while maintaining $97.7\%$ accuracy (Figure~\ref{fig:al_complex} (b)). Alice may think that all activation functions except those dependent on $p$ are linear or quadratic and snap them to be either linear or quadratic manually by using \texttt{fix\_symbolic}. After snapping and retraining, the updated KAN is shown in Figure~\ref{fig:al_complex} (c) step 3, maintaining $97.7\%$ accuracy. From now on, Alice may make two different choices based on her prior knowledge. In one case, Alice may have guessed that the dependence on $p$ is ${\rm cosh}$, so she sets the activations of $p$ to be ${\rm cosh}$ function. She retrains KAN and gets 96.9\% accuracy (Figure~\ref{fig:al_complex} (c) Step 4A). In another case, Alice does not know the ${\rm cosh}\ p$ dependence, so she pursues simplicity and again assumes the functions of $p$ to be quadratic. She retrains KAN and gets 95.4\% accuracy (Figure~\ref{fig:al_complex} (c) Step 4B). If she tried both, she would realize that ${\rm cosh}$ is better in terms of accuracy, while quadratic is better in terms of simplicity. The formulas corresponding to these steps are listed in Table~\ref{tab:al_sf}. It is clear that the more manual operations are done by Alice, the simpler the symbolic formula is (which slight sacrifice in accuracy). KANs have a ``knob" that a user can tune to trade-off between simplicity and accuracy (sometimes simplicity can even lead to better accuracy, as in the GAAM case).

\begin{figure}[t]
    \centering
    \includegraphics[width=1\linewidth]{./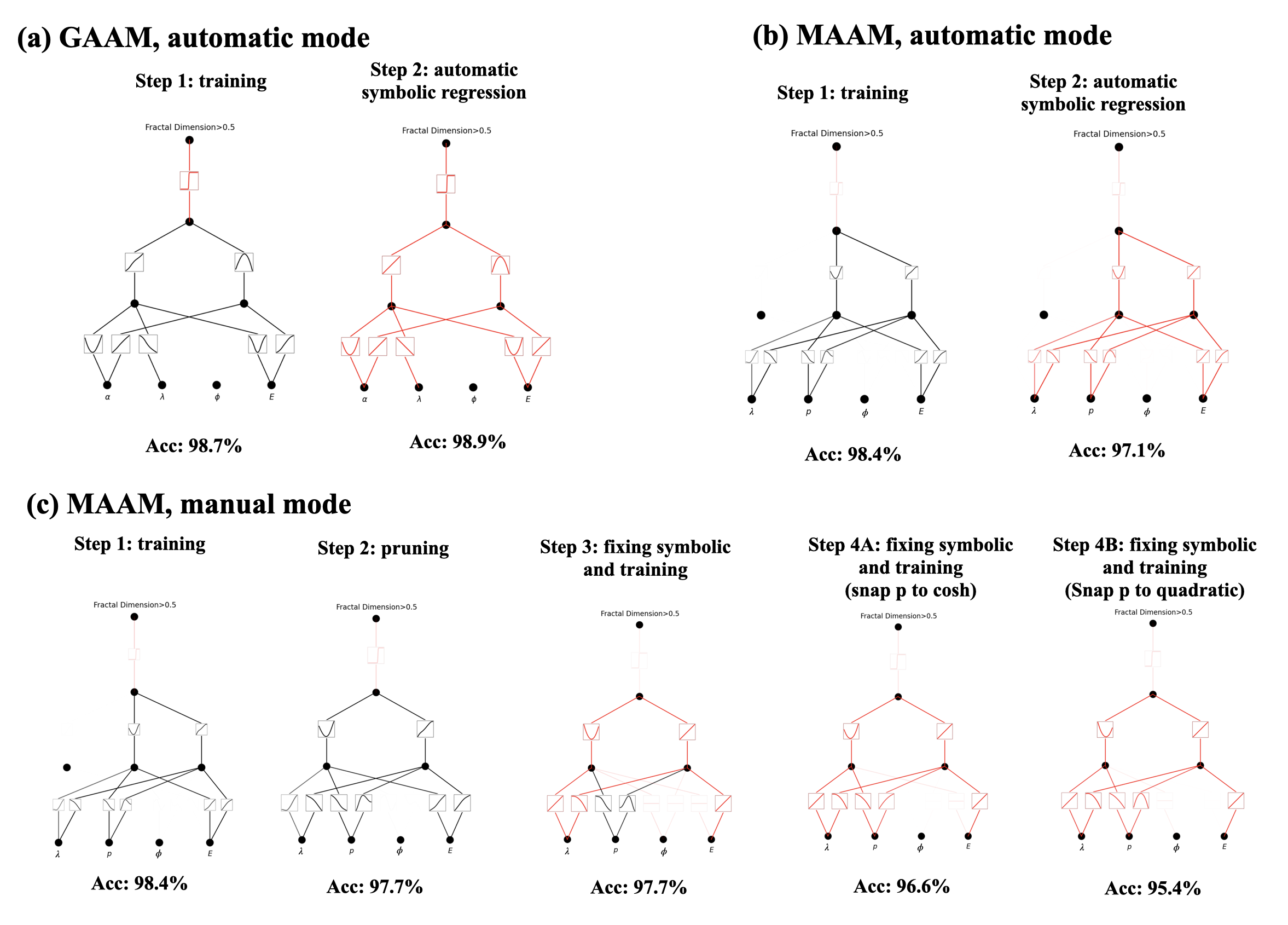}
    \caption{Human-KAN collaboration to discover mobility edges of GAAM and MAAM. The human user can choose to be lazy (using the auto mode) or more involved (using the manual mode). More details in text.}
    \label{fig:al_complex}
\end{figure}

\section{Related works}\label{sec:related_works}

{\bf Kolmogorov-Arnold theorem and neural networks.} The connection between the Kolmogorov-Arnold theorem (KAT) and neural networks is not new in the literature ~\cite{poggio2022deep,schmidt2021kolmogorov,sprecher2002space,koppen2002training,lin1993realization,lai2021kolmogorov,leni2013kolmogorov,fakhoury2022exsplinet,ismayilova2024kolmogorov,poluektov2023new}, but the pathological behavior of inner functions makes KAT appear unpromising in practice~\cite{poggio2022deep}. Most of these prior works stick to the original 2-layer width-($2n+1$) networks, which were limited in expressive power and many of them are even predating back-propagation. Therefore, most studies were built on theories with rather limited or artificial toy experiments. More broadly speaking, KANs are also somewhat related to generalized additive models (GAMs)~\cite{agarwal2021neural}, graph neural networks~\cite{zaheer2017deep} and kernel machines~\cite{song2018optimizing}. The connections are intriguing and fundamental but might be out of the scope of the current paper.
~Our contribution lies in generalizing the Kolmogorov network to arbitrary widths and depths, revitalizing and contexualizing them in today's deep learning stream, as well as highlighting its potential role as a foundation model for AI + Science. 

{\bf Neural Scaling Laws (NSLs).} NSLs are the phenomena where test losses behave as power laws against model size, data, compute etc~\cite{kaplan2020scaling,henighan2020scaling,gordon2021data,hestness2017deep,sharma2020neural,bahri2021explaining,michaud2023the,song2024resource}. The origin of NSLs still remains mysterious, but competitive theories include intrinsic dimensionality~\cite{kaplan2020scaling}, quantization of tasks~\cite{michaud2023the}, resource theory~\cite{song2024resource}, random features~\cite{bahri2021explaining}, compositional sparsity~\cite{poggio2022deep}, and maximu arity~\cite{michaud2023precision}. This paper contributes to this space by showing that a high-dimensional function can surprisingly scale as a 1D function (which is the best possible bound one can hope for) if it has a smooth Kolmogorov-Arnold representation. Our paper brings fresh optimism to neural scaling laws, since it promises the fastest scaling exponent ever. We have shown in our experiments that this fast neural scaling law can be achieved on synthetic datasets, but future research is required to address the question  whether this fast scaling is achievable for more complicated tasks (e.g., language modeling): Do KA representations exist for general tasks? If so, does our training find these representations in practice?

{\bf Mechanistic Interpretability (MI).} MI is an emerging field that aims to mechanistically understand the inner workings of neural networks~\cite{olsson2022context,meng2022locating,wang2023interpretability,elhage2022toy,nanda2023progress,zhong2023the,liu2023seeing,elhage2022solu,cunningham2023sparse}. MI research can be roughly divided into passive and active MI research. Most MI research is passive in focusing on understanding existing neural networks trained with standard methods. Active MI research attempts to achieve interpretability by designing intrinsically interpretable architectures or developing training methods to explicitly encourage interpretability~\cite{liu2023seeing,elhage2022solu}. Our work lies in the second category, where the model and training method are by design interpretable.   

{\bf Learnable activations.} The idea of learnable activations in neural networks is not new in machine learning. Trainable activations functions are learned in a differentiable way~\cite{goyal2019learning, fakhoury2022exsplinet, ramachandran2017searching, zhang2022neural} or searched in a discrete way~\cite{bingham2022discovering}. Activation function are parametrized as polynomials~\cite{goyal2019learning}, splines~\cite{fakhoury2022exsplinet,bohra2020learning,aziznejad2019deep}, sigmoid linear unit~\cite{ramachandran2017searching}, or neural networks~\cite{zhang2022neural}. KANs use B-splines to parametrize their activation functions. We also present our preliminary results on learnable activation networks (LANs), whose properties lie between KANs and MLPs and their results are deferred to Appendix~\ref{app:lan} to focus on KANs in the main paper.

{\bf Symbolic Regression.} There are many off-the-shelf symbolic regression methods based on genetic algorithms (Eureka~\cite{Dubckov2011EureqaSR}, GPLearn~\cite{gplearn}, PySR~\cite{cranmer2023interpretable}), neural-network based methods (EQL~\cite{martius2016extrapolation}, OccamNet~\cite{dugan2020occamnet}), physics-inspired method (AI Feynman~\cite{udrescu2020ai,udrescu2020ai2}), and reinforcement learning-based methods~\cite{mundhenk2021symbolic}. KANs are most similar to neural network-based methods, but differ from previous works in that our activation functions are continuously learned before symbolic snapping rather than manually fixed~\cite{Dubckov2011EureqaSR,dugan2020occamnet}.

{\bf Physics-Informed Neural Networks (PINNs) and Physics-Informed Neural Operators (PINOs).}
In Subsection \ref{subsec:pde}, we demonstrate that KANs can replace the paradigm of using MLPs for imposing PDE loss when solving PDEs. We refer to Deep Ritz Method \cite{yu2018deep}, PINNs \cite{raissi2019physics, karniadakis2021physics, cho2024separable} for PDE solving, and Fourier Neural operator \cite{li2020fourier}, PINOs \cite{li2021physics, kovachki2023neural, maust2022fourier}, DeepONet \cite{lu2021learning} for operator learning methods learning the solution map. There is potential to replace MLPs with KANs in all the aforementioned networks. 

{\bf AI for Mathematics.} As we saw in Subsection~\ref{subsec:knot}, AI has recently been applied to several problems in Knot theory, including detecting whether a knot is the unknot~\cite{Gukov:2020qaj,kauffman2020rectangular} or a ribbon knot~\cite{gukov2023searching}, and predicting knot invariants and uncovering relations among them~\cite{hughes2020neural,Craven:2020bdz,Craven:2022cxe,davies2021advancing}. For a summary of data science applications to datasets in mathematics and theoretical physics see e.g.~\cite{Ruehle:2020jrk,he2023machine}, and for ideas how to obtain rigorous results from ML techniques in these fields,  see~\cite{Gukov:2024aaa}.

\section{Discussion}\label{sec:discussion}

In this section, we discuss KANs' limitations and future directions from the perspective of mathematical foundation, algorithms and applications.

{\bf Mathematical aspects:} Although we have presented preliminary mathematical analysis of KANs (Theorem~\ref{approx thm}), our mathematical understanding of them is still very limited. The Kolmogorov-Arnold representation theorem has been studied thoroughly in mathematics, but the theorem corresponds to KANs with shape $[n,2n+1,1]$, which is a very restricted subclass of KANs. Does our empirical success with deeper KANs imply something fundamental in mathematics? An appealing generalized Kolmogorov-Arnold theorem could  define ``deeper'' Kolmogorov-Arnold representations beyond depth-2 compositions, and potentially relate smoothness of activation functions to depth. Hypothetically, there exist functions which cannot be represented smoothly in the original (depth-2) Kolmogorov-Arnold representations, but might be smoothly represented with depth-3 or beyond. Can we use this notion of ``Kolmogorov-Arnold depth'' to characterize function classes?

{\bf Algorithmic aspects:} We discuss the following:
\begin{enumerate}[(1)]
    \item Accuracy. Multiple choices in architecture design and training are not fully investigated so alternatives can potentially further improve accuracy. For example, spline activation functions might be replaced by radial basis functions or other local kernels. Adaptive grid strategies can be used.
    \item  Efficiency. One major reason why KANs run slowly is because different activation functions cannot leverage batch computation (large data through the same function). Actually, one can interpolate between activation functions being all the same (MLPs) and all different (KANs), by grouping activation functions into multiple groups (``multi-head''), where members within a group share the same activation function. 
    \item Hybrid of KANs and MLPs. KANs have two major differences compared to MLPs: 
    \begin{enumerate}[(i)]
        \item activation functions are on edges instead of on nodes,
        \item activation functions are learnable instead of fixed.
    \end{enumerate}
    Which change is more essential to explain KAN's advantage? We present our preliminary results in Appendix~\ref{app:lan} where we study a model which has (ii), i.e., activation functions are learnable (like KANs),  but not (i), i.e., activation functions are on nodes (like MLPs). Moreover, one can also construct another model with fixed activations (like MLPs) but on edges (like KANs).
    \item Adaptivity. Thanks to the intrinsic locality of spline basis functions, we can introduce adaptivity in the design and training of KANs to enhance both accuracy and efficiency: see the idea of multi-level training like multigrid methods as in \cite{zhang2021multiscale,xu2017algebraic}, or domain-dependent basis functions like multiscale methods as in \cite{chen2023exponentially}.
\end{enumerate}



{\bf Application aspects:} We have presented some preliminary evidences that KANs are more effective than MLPs in science-related tasks, e.g., fitting physical equations and PDE solving. We would like to apply KANs to solve Navier-Stokes equations, density functional theory, or any other tasks that can be formulated as regression or PDE solving. We would also like to apply KANs to machine-learning-related tasks, which would require integrating KANs into current architectures, e.g., transformers -- one may propose ``kansformers'' which replace MLPs by KANs in transformers. 

{\bf KAN as a ``language model'' for AI + Science} The reason why large language models are so transformative is because they are useful to anyone who can speak natural language. The language of science is functions. KANs are composed of interpretable functions, so when a human user stares at a KAN, it is like communicating with it using the language of functions. This paragraph aims to promote the AI-Scientist-Collaboration paradigm rather than our specific tool KANs. Just like people use different languages to communicate, we expect that  in the future KANs will be just one of the languages for AI + Science, although KANs will be one of the very first languages that would enable AI and human to communicate. However, enabled by KANs, the AI-Scientist-Collaboration paradigm has never been this easy and convenient, which leads us to rethink the paradigm of how we want to approach AI + Science: Do we want AI scientists, or do we want AI that helps scientists? The intrinsic difficulty of (fully automated) AI scientists is that it is hard to make human preferences quantitative, which would codify human preferences into AI objectives. In fact, scientists in different fields may feel differently about which functions are simple or interpretable. As a result, it is more desirable for scientists to have an AI that can speak the scientific language (functions) and can conveniently interact with inductive biases of individual scientist(s) to adapt to a specific scientific domain.


{\bf Final takeaway: Should I use KANs or MLPs?} 

Currently, the biggest bottleneck of KANs lies in its slow training. KANs are usually 10x slower than MLPs, given the same number of parameters. We should be honest that we did not try hard to optimize KANs' efficiency though, so we deem KANs' slow training more as an engineering problem to be improved in the future rather than a fundamental limitation. If one wants to train a model fast, one should use MLPs. In other cases, however, KANs should be comparable or better than MLPs, which makes them worth trying. The decision tree in Figure~\ref{fig:decision-tree} can help decide when to use a KAN. In short, if you care about interpretability and/or accuracy, and slow training is not a major concern, we suggest trying KANs, at least for small-scale AI + Science problems.

\begin{figure}[t]
    \centering
    \includegraphics[width=1\linewidth]{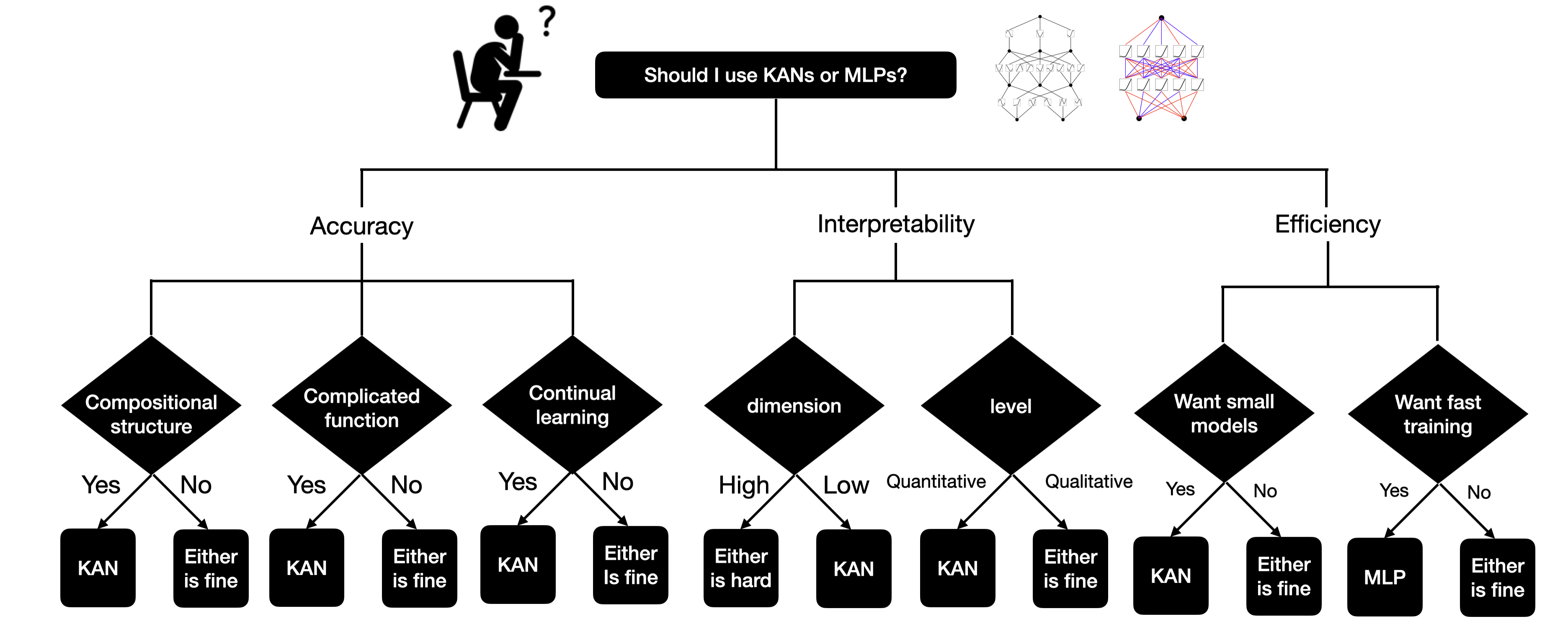}
    \caption{Should I use KANs or MLPs?}
    \label{fig:decision-tree}
\end{figure}



\section*{Acknowledgement}
We would like to thank Mikail Khona, Tomaso Poggio, Pingchuan Ma, Rui Wang, Di Luo, Sara Beery, Catherine Liang, Yiping Lu, Nicholas H. Nelsen, Nikola Kovachki, Jonathan W. Siegel, Hongkai Zhao, Juncai He, Shi Lab (Humphrey Shi, Steven Walton, Chuanhao Yan) and Matthieu Darcy for fruitful discussion and constructive suggestions. Z.L., F.R., J.H., M.S. and M.T. are supported by IAIFI through NSF grant PHY-2019786. The work of FR is in addition supported by the NSF grant PHY-2210333 and by startup funding from Northeastern University. Y.W and T.H are supported by the NSF Grant DMS-2205590 and the Choi Family Gift Fund. S. V. and M. S. acknowledge support from the U.S. Office of Naval Research (ONR) Multidisciplinary University Research Initiative (MURI) under Grant No. N00014-20-1-2325 on Robust Photonic Materials with Higher-Order Topological Protection.

\bibliography{ref}
\bibliographystyle{unsrt}

\newpage

\appendix

\addtocontents{toc}{\protect\setcounter{tocdepth}{0}}

{\huge Appendix}

\section{KAN Functionalities}\label{app:kan_func}

Table~\ref{tab:kan_functionality} includes common functionalities that users may find useful.

\begin{table}[hb]
    \centering
    \begin{tabular}{|c|c|}\hline
    Functionality  &  Descriptions \\\hline
    \texttt{model.train(dataset)}   & training model on  dataset \\\hline
    \texttt{model.plot()} & plotting \\\hline
    \texttt{model.prune()} & pruning \\\hline
    \texttt{model.fix\_symbolic(l,i,j,fun)} & \makecell{fix the activation function $\phi_{l,i,j}$ \\ to be the symbolic function \texttt{fun}} \\\hline
    \texttt{model.suggest\_symbolic(l,i,j)} & \makecell{suggest symbolic functions that match \\ the numerical value of $\phi_{l,i,j}$ } \\\hline
    \texttt{model.auto\_symbolic()} & \makecell{use top 1 symbolic suggestions from \texttt{suggest\_symbolic} \\ to replace all activation functions}\\\hline
    \texttt{model.symbolic\_formula()} & return the symbolic formula\\\hline
    \end{tabular}
    \vspace{2mm}
    
    \caption{KAN functionalities}
    \label{tab:kan_functionality}
\end{table}

\section{Learnable activation networks (LANs)}\label{app:lan}

\subsection{Architecture}

Besides KAN, we also proposed another type of learnable activation networks (LAN), which are almost MLPs but with learnable activation functions parametrized as splines. KANs have two main changes to standard MLPs: (1) the activation functions become learnable rather than being fixed; (2) the activation functions are placed on edges rather than nodes. To disentangle these two factors, we also propose learnable activation networks (LAN) which only has learnable activations but still on nodes, illustrated in Figure~\ref{fig:lan-train}.

For a LAN with width $N$, depth $L$, and grid point number $G$, the number of parameters is $N^2L+NLG$ where $N^2L$ is the number of parameters for weight matrices and $NLG$ is the number of parameters for spline activations, which causes little overhead in addition to MLP since usually $G\ll N$ so $NLG\ll N^2 L$. LANs are similar to MLPs so they can be initialized from pretrained MLPs and fine-tuned by allowing learnable activation functions. An example is to use LAN to improve SIREN, presented in Section ~\ref{app:lan-siren}. 

{\bf Comparison of LAN and KAN.}
Pros of LANs: 
\begin{enumerate}[(1)]
    \item LANs are conceptually simpler than KANs. They are closer to standard MLPs (the only change is that activation functions become learnable).
    \item LANs scale better than KANs. LANs/KANs have learnable activation functions on nodes/edges, respectively. So activation parameters in LANs/KANs scale as $N$/$N^2$, where $N$ is model width.
\end{enumerate}
Cons of LANs: 
\begin{enumerate}[(1)]
    \item LANs seem to be less interpretable (weight matrices are hard to interpret, just like in MLPs);
    \item LANs also seem to be less accurate than KANs, but still more accurate than MLPs. Like KANs, LANs also admit grid extension if theLANs' activation functions are parametrized by splines.
\end{enumerate}

\subsection{LAN interpretability results}

\begin{figure}[t]
    \centering
    \includegraphics[width=1\linewidth]{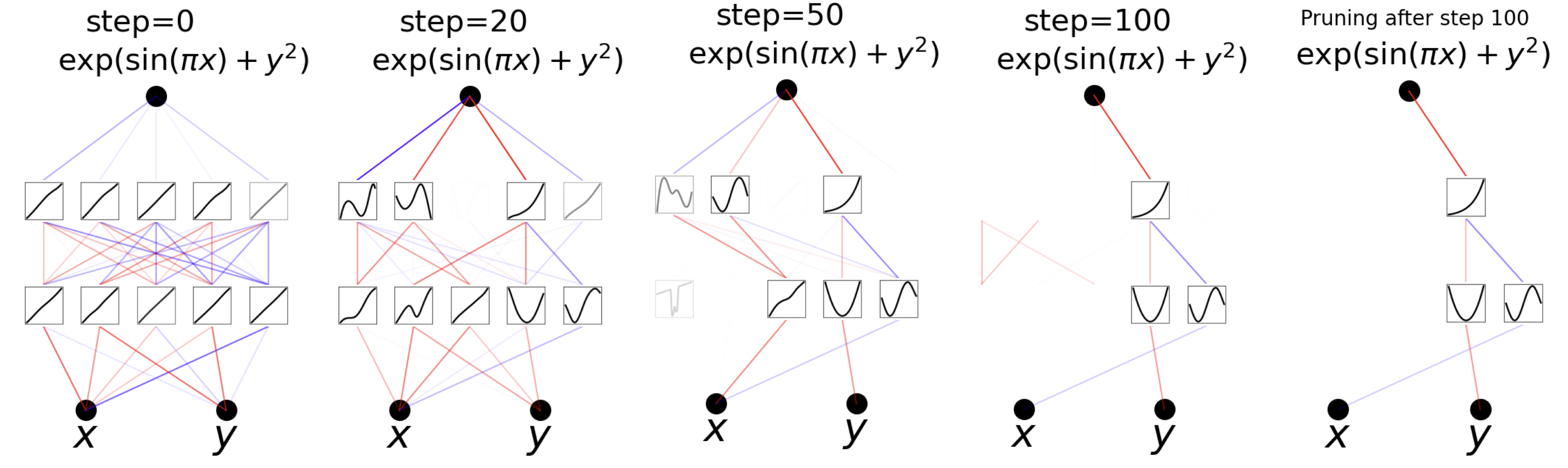}
    \caption{Training of a learnable activation network (LAN) on the toy example $f(x,y)={\rm exp}({\rm sin}(\pi x)+y^2)$.}
    \label{fig:lan-train}
\end{figure}

\begin{figure}[t]
    \centering
    \includegraphics[width=1\linewidth]{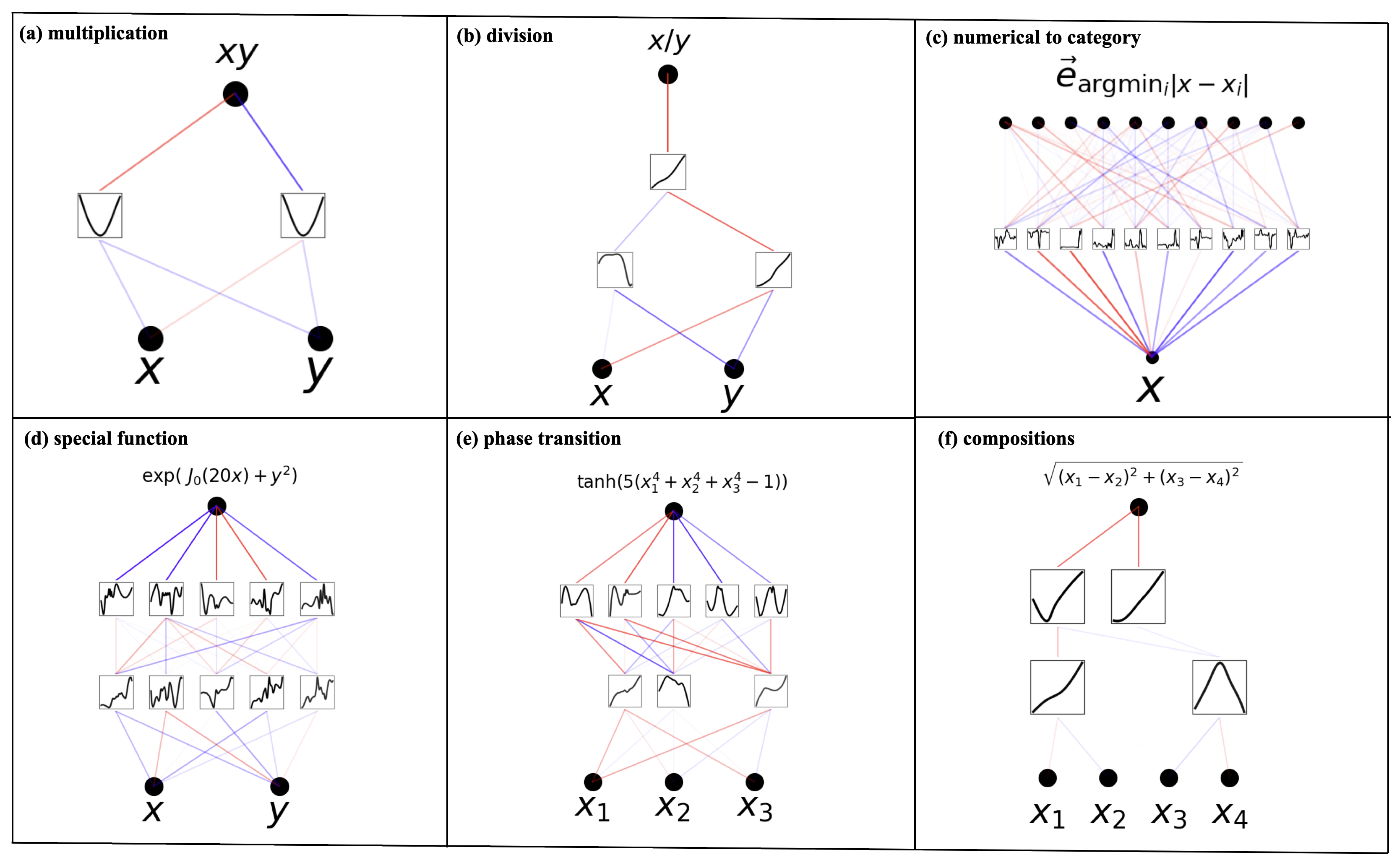}
    \caption{LANs on synthetic examples. LANs do not appear to be very interpretable. We conjecture that the weight matrices leave too many degree of freedoms.}
    \label{fig:lan_interp_example}
\end{figure}


We present preliminary interpretabilty results of LANs in Figure~\ref{fig:lan_interp_example}. With the same examples in Figure~\ref{fig:interpretable_examples} for which KANs are perfectly interpretable, LANs seem much less interpretable due to the existence of weight matrices. First, weight matrices are less readily interpretable than learnable activation functions. Second, weight matrices bring in too many degrees of freedom, making learnable activation functions too unconstrained. Our preliminary results with LANs seem to imply that getting rid of linear weight matrices (by having learnable activations on edges, like KANs) is necessary for interpretability.

\subsection{Fitting Images (LAN)}\label{app:lan-siren}

\begin{figure}[t]
    \centering
    \includegraphics[width=1\linewidth]{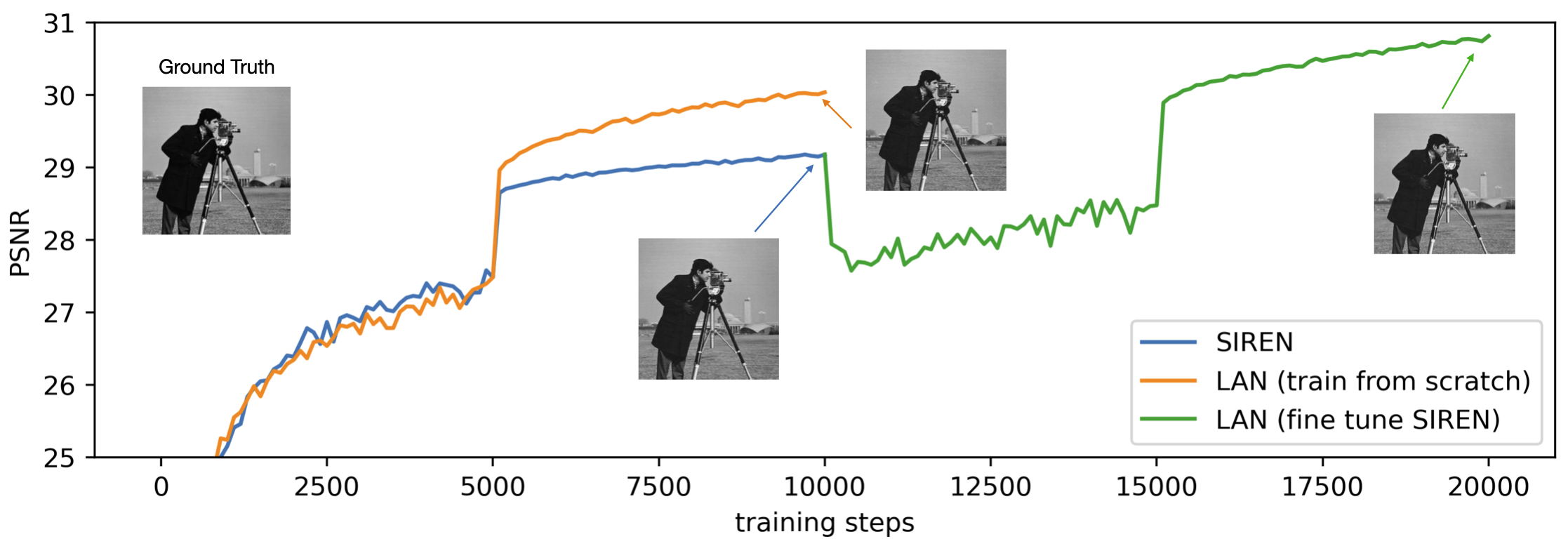}
    \caption{A SIREN network (fixed sine activations) can be adapted to LANs (learnable activations) to improve image representations.}
    \label{fig:siren}
\end{figure}

Implicit neural representations view images as 2D functions $f(x,y)$, where the pixel value $f$ is a function of two coordinates of the pixel $x$ and $y$. To compress an image, such an implicit neural representation ($f$ is a neural network) can achieve impressive compression of parameters while maintaining almost original image quality. SIREN~\cite{sitzmann2020implicit} proposed to use MLPs with periodic activation functions to fit the function $f$. It is natural to consider other activation functions, which are allowed in LANs. However, since we initialize LAN activations to be smooth but SIREN requires high-frequency features, LAN does not work immediately. Note that each activation function in LANs is a sum of the base function and the spline function, i.e., $\phi(x)=b(x)+{\rm spline}(x)$, we set $b(x)$ to sine functions, the same setup as in SIREN but let ${\rm spline}(x)$ be trainable. For both MLP and LAN, the shape is [2,128,128,128,128,128,1]. We train them with the Adam optimizer, batch size 4096, for 5000 steps with learning rate $10^{-3}$ and 5000 steps with learning rate $10^{-4}$. As shown in Figure~\ref{fig:siren}, the LAN (orange) can achieve higher PSNR than the MLP (blue) due to the LAN's flexibility to fine tune activation functions. We show that it is also possible to initialize a LAN from an MLP and further fine tune the LAN (green) for better PSNR. We have chosen $G=5$ in our experiments, so the additional parameter increase is roughly $G/N=5/128\approx 4\%$ over the original parameters.

\section{Dependence on hyperparameters}\label{app:interp_hyperparams}

We show the effects of hyperparamters on the $f(x,y)={\rm exp}({\rm sin}(\pi x)+y^2)$ case in Figure~\ref{fig:interp_hyperparams}. To get an interpretable graph, we want the number of active activation functions to be as small (ideally 3) as possible. 
\begin{enumerate}[(1)]
    \item We need entropy penalty to reduce the number of active activation functions. Without entropy penalty, there are many duplicate functions.
    \item Results can depend on random seeds. With some unlucky seed, the pruned network could be larger than needed.
    \item The overall penalty strength $\lambda$ effectively controls the sparsity.
    \item  The grid number $G$ also has a subtle effect on interpretability. When $G$ is too small, because each one of activation function is not very expressive, the network tends to use the ensembling strategy, making interpretation harder.
    \item The piecewise polynomial order $k$ only has a subtle effect on interpretability. However, it behaves a bit like the random seeds which do not display any visible pattern in this toy example. 
\end{enumerate}

\begin{figure}[t]
    \centering
    \includegraphics[width=1\linewidth]{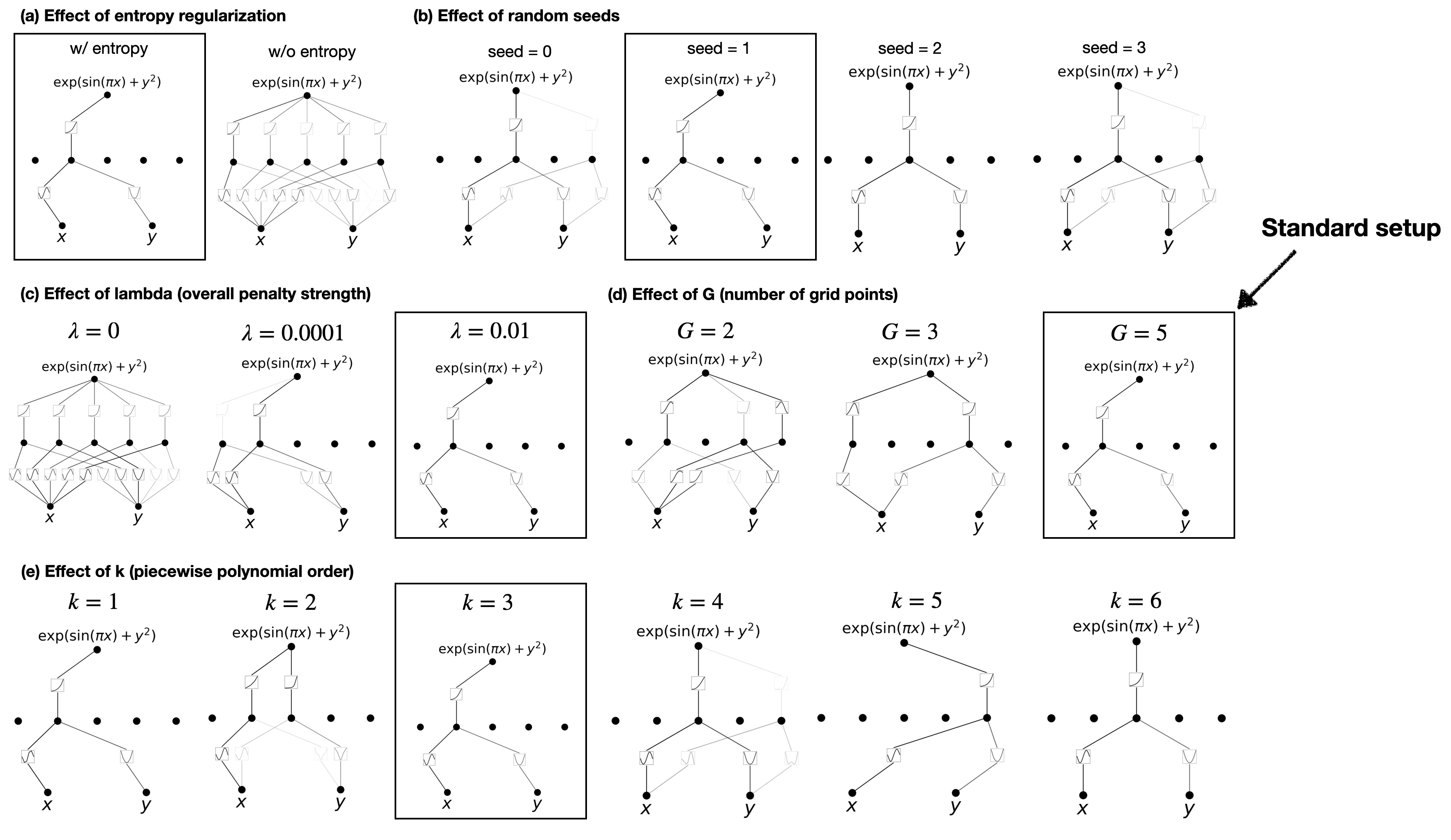}
    \caption{Effects of hyperparameters on interpretability results.}
    \label{fig:interp_hyperparams}
\end{figure}

\section{Feynman KANs}\label{app:feynman_kans}

We include more results on the Feynman dataset (Section~\ref{subsec:feynman}). Figure~\ref{fig:feynman_pf} shows the pareto frontiers of KANs and MLPs for each Feynman dataset. Figure~\ref{fig:minimal-feynman-kan} and~\ref{fig:best-feynman-kan} visualize minimal KANs (under the constraint test RMSE $<10^{-2}$) and best KANs (with the lowest test RMSE loss) for each Feynman equation fitting task.

\begin{figure}[t]
    \centering
    \includegraphics[width=1\linewidth]{./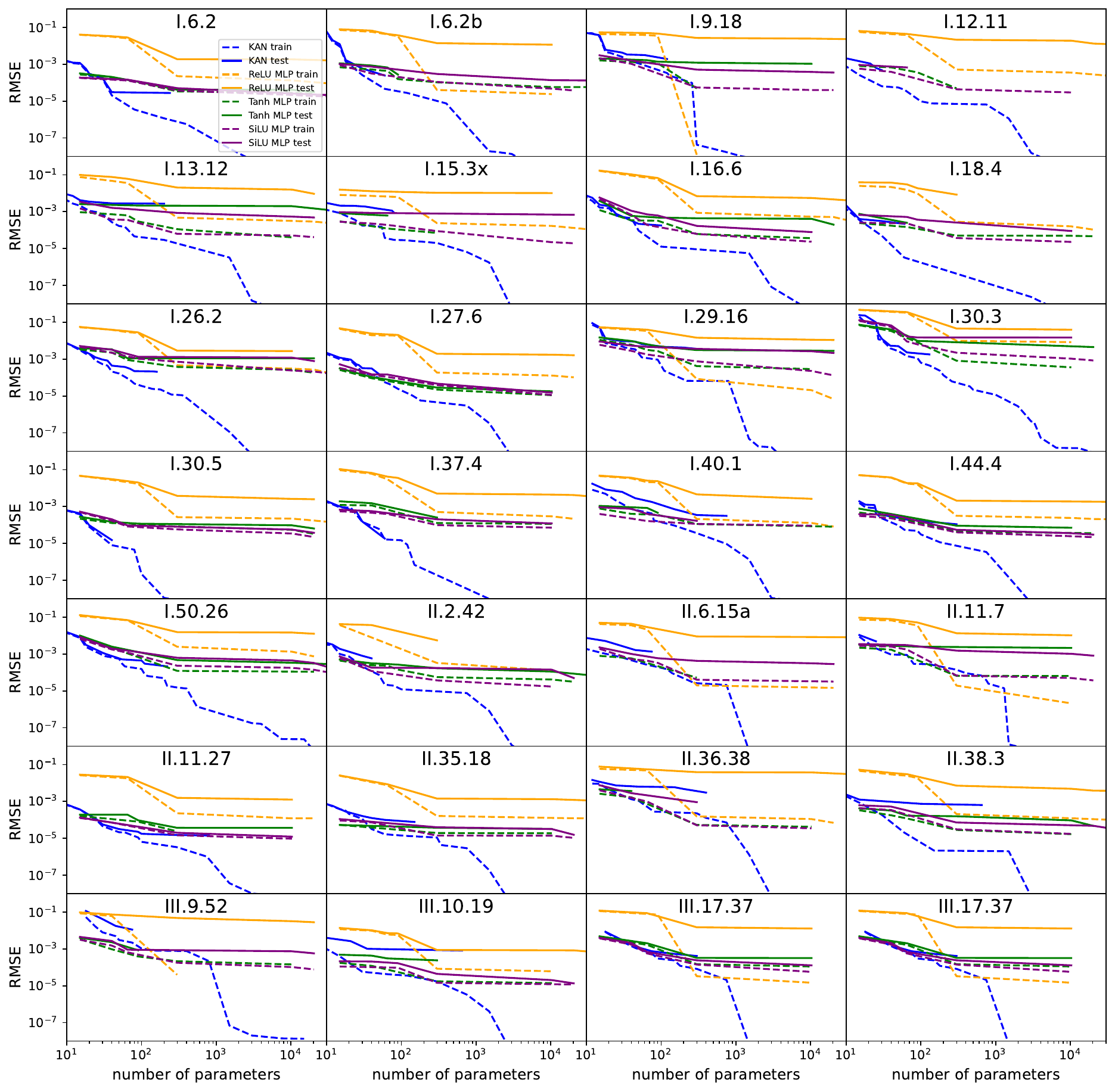}
    \caption{The Pareto Frontiers of KANs and MLPs for Feynman datasets.}
    \label{fig:feynman_pf}
\end{figure}

\begin{figure}[t]
    \centering
    \includegraphics[width=1\linewidth]{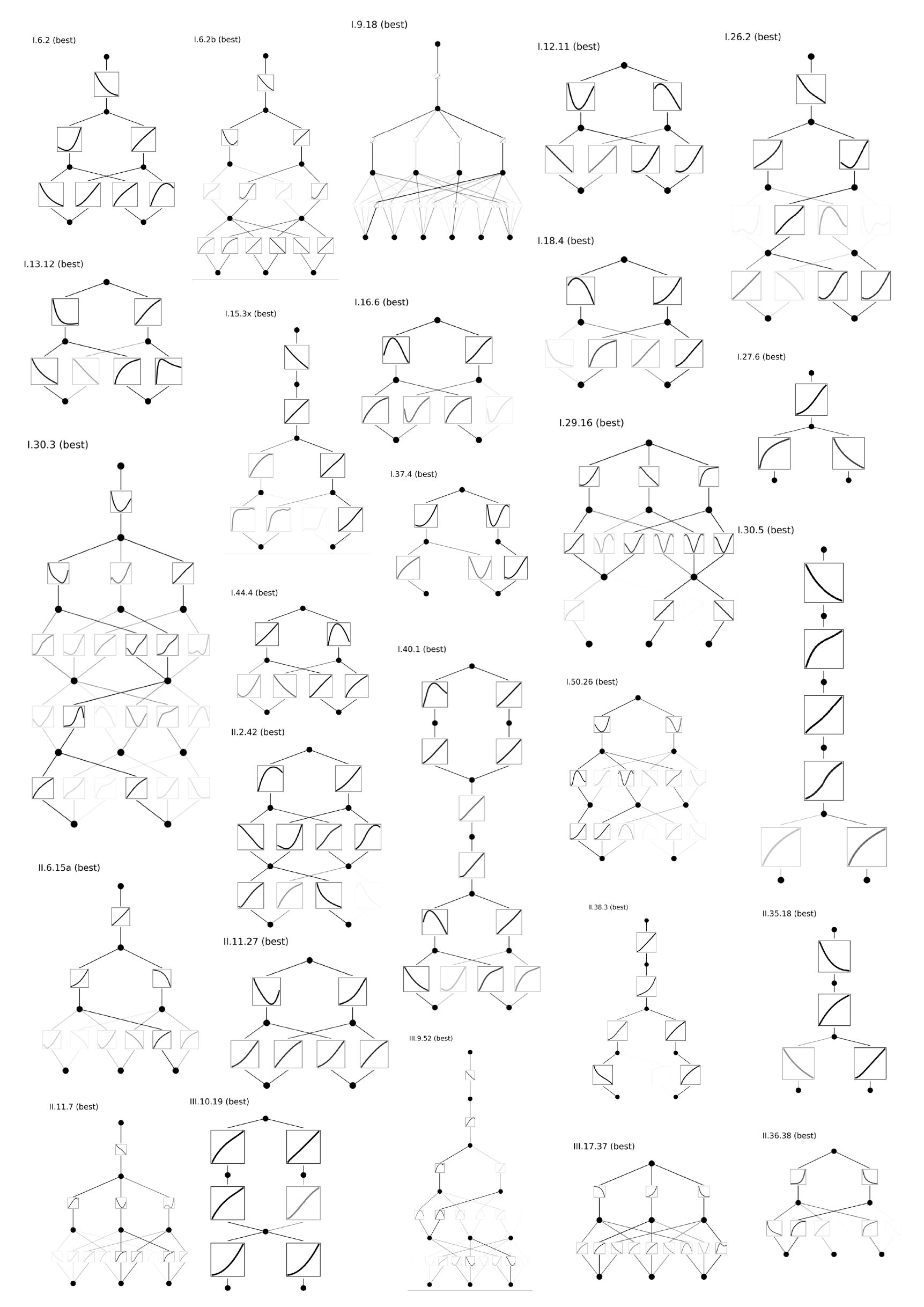}
    \caption{Best Feynman KANs}
    \label{fig:best-feynman-kan}
\end{figure}

\begin{figure}[t]
    \centering
    \includegraphics[width=1\linewidth]{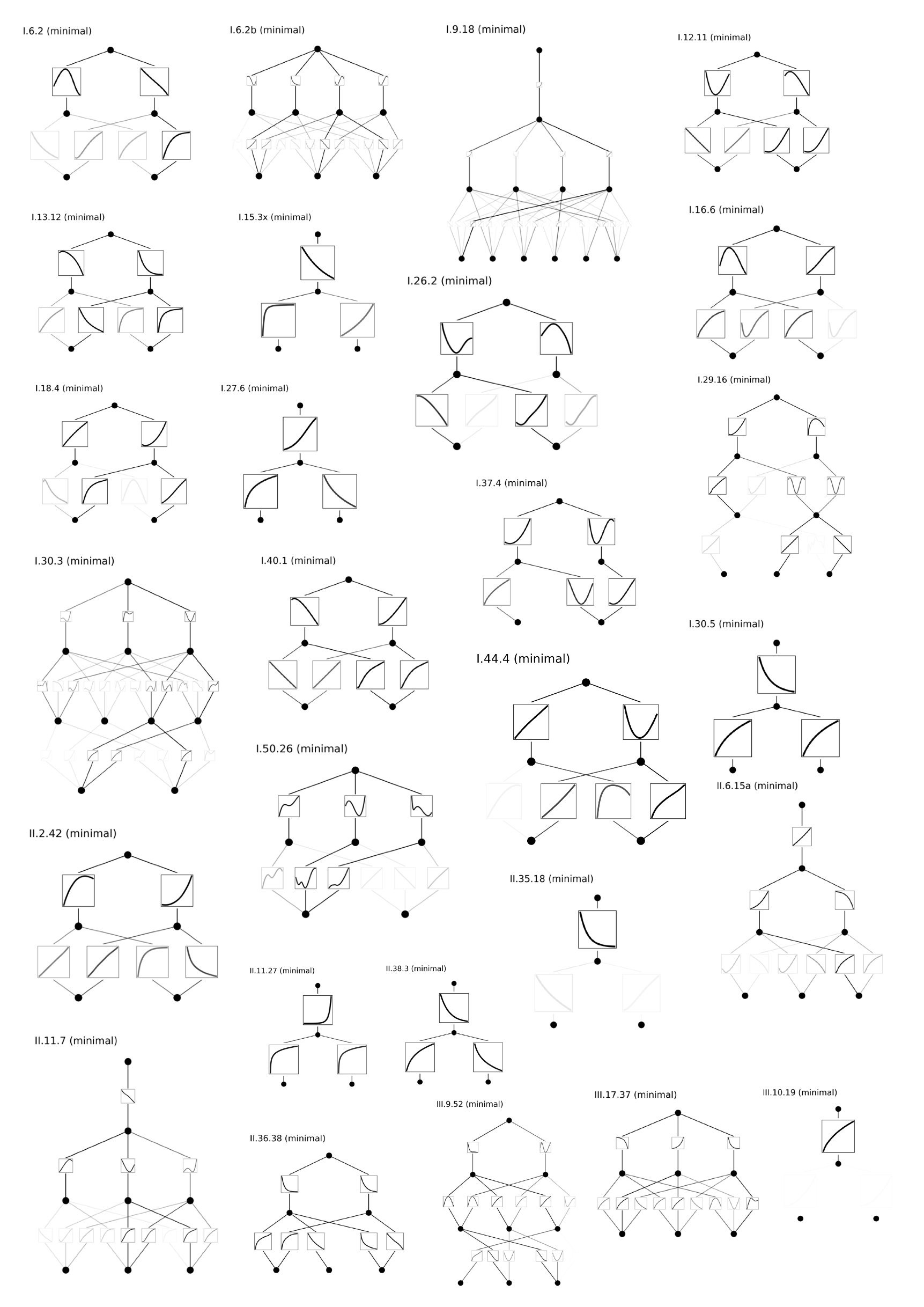}
    \caption{Minimal Feynman KANs}
    \label{fig:minimal-feynman-kan}
\end{figure}

\section{Remark on grid size}
For both PDE and regression tasks, when we choose the training data on uniform grids, we witness a sudden increase in training loss (i.e., sudden drop in performance) when the grid size is updated to a large level, comparable to the different training points in one spatial direction. This could be due to implementation of B-spline in higher dimensions and needs further investigation.

\section{KANs for special functions}\label{app:special_kans}

We include more results on the special function dataset (Section~\ref{subsec:special}). Figure~\ref{fig:minimal-special-kan} and~\ref{fig:best-special-kan} visualize minimal KANs (under the constraint test RMSE $<10^{-2}$) and best KANs (with the lowest test RMSE loss) for each special function fitting task.

\begin{figure}[t]
    \centering
    \includegraphics[width=1\linewidth]{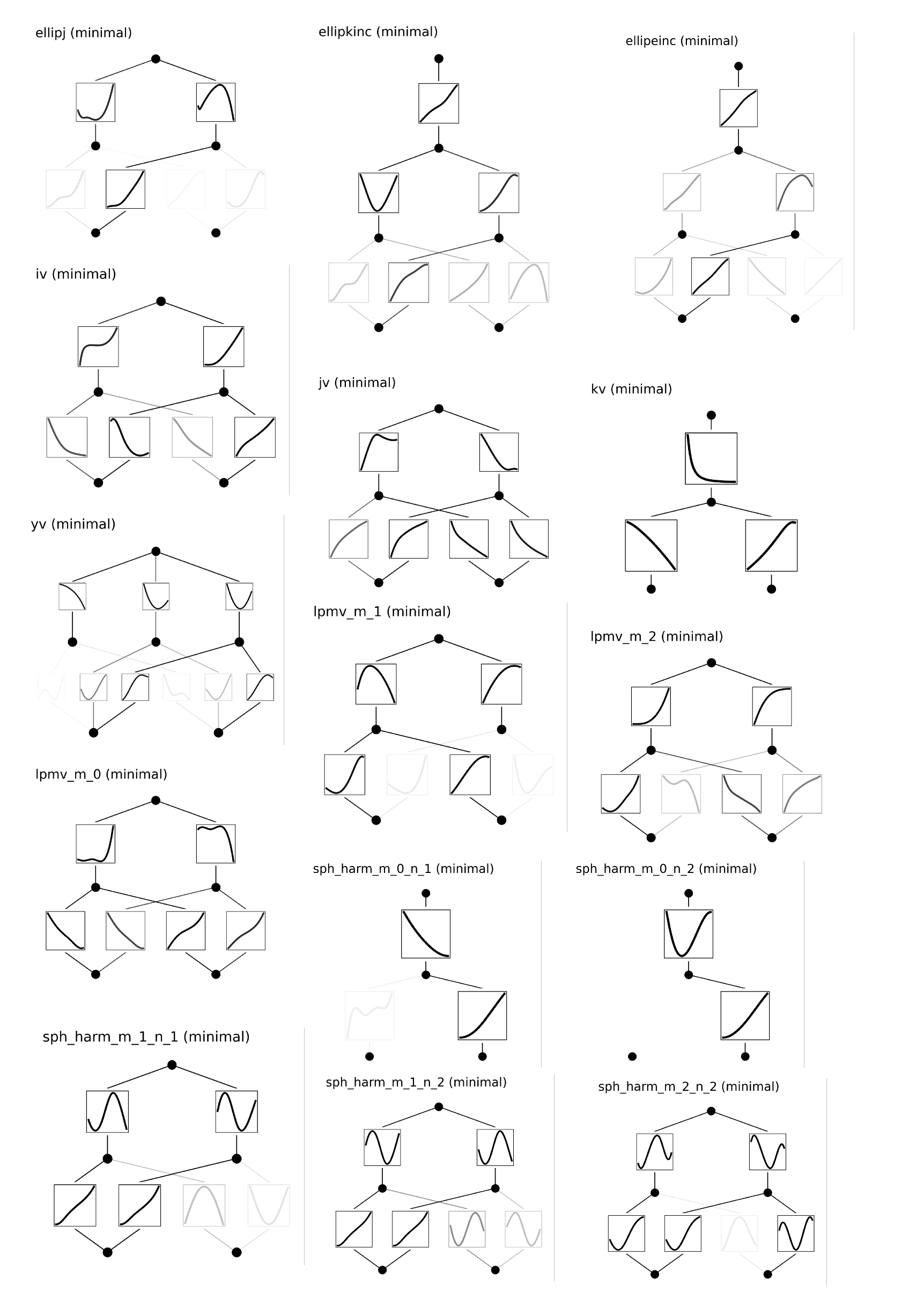}
    \caption{Best special KANs}
    \label{fig:best-special-kan}
\end{figure}

\begin{figure}[t]
    \centering
    \includegraphics[width=1\linewidth]{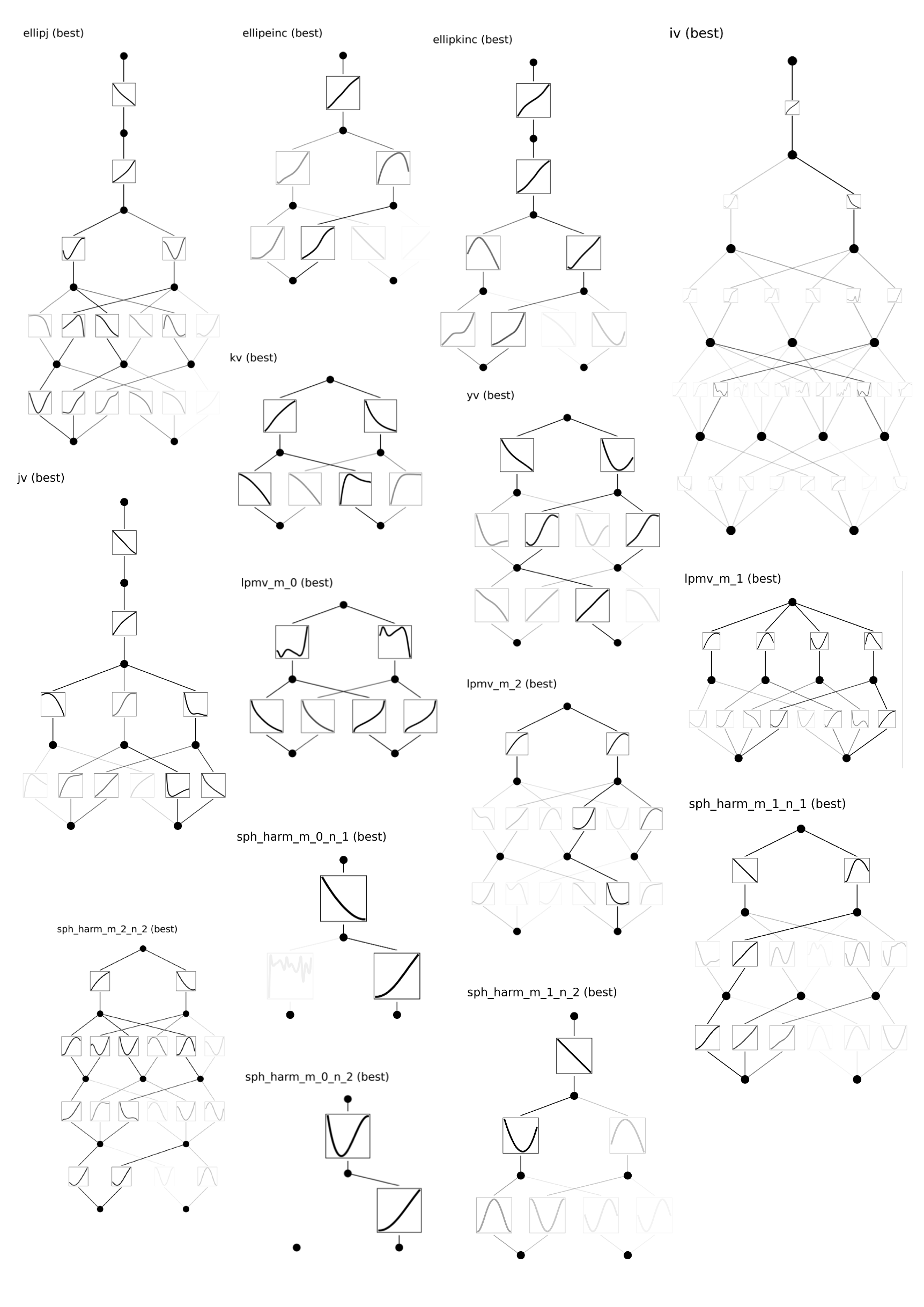}
    \caption{Minimal special KANs}
    \label{fig:minimal-special-kan}
\end{figure}

\end{document}